\PassOptionsToPackage{table,xcdraw}{xcolor}

\documentclass{article}

\usepackage{microtype}
\usepackage{graphicx}
\usepackage{subcaption}
\usepackage{booktabs} 
\usepackage{diagbox}
\usepackage{multirow}
\usepackage{hyperref}
\usepackage{tikz}
\usetikzlibrary{tikzmark}

\usepackage[accepted]{icml2026}

\usepackage{amsmath}
\usepackage{amssymb}
\usepackage{mathtools}
\usepackage{amsthm}

\usepackage[capitalize,noabbrev]{cleveref}

\theoremstyle{plain}
\newtheorem{theorem}{Theorem}[section]

\newtheorem{lemma}[theorem]{Lemma}
\newtheorem{corollary}[theorem]{Corollary}
\theoremstyle{definition}

\theoremstyle{remark}
\newtheorem{remark}[theorem]{Remark}

\usepackage[textsize=tiny]{todonotes}
\usepackage{makecell}
\usepackage{enumitem}

\usepackage[table,xcdraw]{xcolor}
\usepackage[most]{tcolorbox}

\usepackage{titletoc}
\usepackage{minitoc}
\definecolor{app_blue}{RGB}{0,20,115}

\definecolor{HeaderGray}{gray}{0.96}

\definecolor{HighlightBlue}{HTML}{EBF5FF}
\definecolor{revisionBlue}{HTML}{2B78D0}
\newcommand{\rev}[1]{#1}
\definecolor{focusBlue}{HTML}{1A5276}

\definecolor{SDGray}{gray}{0.4}
\newcommand{\sd}[1]{\textcolor{SDGray}{\ensuremath{_{\pm#1}}}}

\newcommand{\dashedline}{%

    \tikz[baseline=-0.3ex]{%

        \draw[dashed, gray, line width=0.5pt] (0,-1.5ex) -- (0,1.5ex);%

    }%

}

\definecolor{thmText}{RGB}{35, 55, 80}        
\definecolor{thmBack}{RGB}{243, 247, 250}     

\newtcolorbox[auto counter, number within=section]{SoftThm}[2][]{%
    enhanced, breakable, frame hidden,
    colback=thmBack,
    sharp corners,                            
    grow to left by=3pt,   left=3pt,
    grow to right by=3pt,  right=3pt,
    top=3pt, bottom=3pt,                      
    boxsep=0pt,
    before skip=4pt, after skip=5pt,          %
    detach title,
    fontupper=\itshape,                      
    before upper={\textcolor{thmText}{\textbf{\textsc{Theorem~\thetcbcounter}}}\quad\textbf{#2}\quad},
    #1
}

\definecolor{propText}{RGB}{80, 60, 85}
\definecolor{propBack}{RGB}{250, 248, 250}

\newtcolorbox[auto counter, number within=section]{SoftProp}[2][]{%
    enhanced, breakable, frame hidden,
    colback=propBack,
    sharp corners,
    grow to left by=3pt,   left=3pt,
    grow to right by=3pt,  right=3pt,
    top=3pt, bottom=3pt,
    boxsep=0pt,
    before skip=4pt, after skip=5pt,
    detach title,
    fontupper=\itshape,
    before upper={\textcolor{propText}{\textbf{\textsc{Proposition~\thetcbcounter}}}\quad\textbf{#2}\quad},
    #1
}

\definecolor{defText}{RGB}{40, 65, 55}
\definecolor{defBack}{RGB}{244, 250, 246}

\newtcolorbox[auto counter, number within=section]{SoftDef}[2][]{%
    enhanced, breakable, frame hidden,
    colback=defBack,
    sharp corners,
    grow to left by=3pt,   left=3pt,
    grow to right by=3pt,  right=3pt,
    top=3pt, bottom=3pt,
    boxsep=0pt,
    before skip=4pt, after skip=5pt,
    detach title,
    fontupper=\itshape,
    before upper={\textcolor{defText}{\textbf{\textsc{Definition~\thetcbcounter}}}\quad\textbf{#2}\quad},
    #1
}

\definecolor{softPurple}{HTML}{7D6B7D} 
\definecolor{hazePurple}{HTML}{FBF9FB} 
\definecolor{deepViolet}{HTML}{5D536B} 
\definecolor{paleViolet}{HTML}{F7F6F9} 

\newtcolorbox[auto counter]{ObservationBox}[1][]{
    enhanced,
    breakable,
    boxrule=0pt,
    frame hidden,
    before skip=4.pt, 
    after skip=5pt,
    left=5pt, 
    right=3pt, 
    top=3pt, 
    bottom=3pt,
    boxsep=0pt,
    borderline west={1.5pt}{0pt}{softPurple}, 
    colback=hazePurple,
    arc=1mm,
    shadow={0.5mm}{-0.5mm}{0mm}{softPurple!10!white},
    #1 
}

\icmltitlerunning{The Bridge-Garden Dilemma in LLM Distillation}

\newcommand{\stress}[1]{\underline{\textbf{{#1}}}}

\newcommand{\E}{\mathbb{E}}

\newcommand{\EB}{\mathsf{EB}}

\begin{document}
\doparttoc 
\faketableofcontents

\twocolumn[
\icmltitle{The Bridge-Garden Dilemma in LLM Distillation:\\ Why Mixing Hard and Soft Labels Works}

  \begin{icmlauthorlist}
    \icmlauthor{Guanghui Wang}{ucas-cs}
    \icmlauthor{Kaiwen Lv Kacuila}{alibaba}
    \icmlauthor{Zhiyong Yang}{ucas-cs}
    \icmlauthor{Zitai Wang}{ict}
    \icmlauthor{Jin-Wen Wu}{alibaba}
    \icmlauthor{Longtao Huang}{alibaba}
    \icmlauthor{Qianqian Xu}{ict,baai}
    \icmlauthor{Qingming Huang}{ucas-cs,bdkm,ict}
  \end{icmlauthorlist}

  \icmlaffiliation{ucas-cs}{School of Computer Science and Technology, University of Chinese Academy of Sciences, Beijing, China}
  \icmlaffiliation{alibaba}{Alibaba Group, Hangzhou, China}
  \icmlaffiliation{ict}{State Key Laboratory of AI Safety, Institute of Computing Technology, Chinese Academy of Sciences, Beijing, China}
  \icmlaffiliation{baai}{Beijing Academy of Artificial Intelligence, Beijing, China}
  \icmlaffiliation{bdkm}{Key Laboratory of Big Data Mining and Knowledge Management (BDKM), University of Chinese Academy of Sciences, Beijing, China}

  \icmlcorrespondingauthor{Zhiyong Yang}{yangzhiyong21@ucas.ac.cn}
  \icmlcorrespondingauthor{Qingming Huang}{qmhuang@ucas.ac.cn}

  \icmlkeywords{Knowledge Distillation, Exposure Bias, Language Models}

  \vskip 0.3in
]

\printAffiliationsAndNotice{}  

\begin{abstract}
Knowledge distillation (KD) transfers knowledge from a large teacher model to a smaller student. In language modeling, the student is trained either on tokens sampled from the teacher (\textbf{hard labels}) or the teacher’s full next-token distribution (\textbf{soft labels}). Despite soft labels appear strictly richer, we find that mixing hard and soft labels consistently yields better results. Crucially, we show that this gain cannot be explained by closer teacher matching during training. Instead, it comes from reduced exposure bias---the mismatch between training and inference distributions.
To explain this phenomenon, we introduce the Bridge–Garden Decomposition theory, which categorizes generation steps into two types: \textit{Bridges}, where the next token must be \textit{exact}, and \textit{Gardens}, where it can be \textit{flexible}. We show that hard-only KD excels in Bridges by avoiding risky deviations, while soft-only KD preserves diversity in Gardens.  A hybrid strategy handles both cases and, as a result, reduces exposure bias across the sequence.
Guided by this theory, we develop a family of Bridge--Garden hybrid supervision methods that adaptively balance hard and soft labels. Across a primary suite of seven teacher--student pairs (including Qwen, Llama, Gemma, and DeepSeek) and benchmarks in reasoning and coding, our approach outperforms divergence-based and on-policy KD baselines while reducing training cost by \textbf{9.7$\times$}, enabling efficient model compression. Code is available at \url{https://github.com/ghwang-s/bridge_garden_hybrid_kd_release}.
\end{abstract}

\section{Introduction}

Recent progress in large language models (LLMs, \citealt{achiam2023gpt,grattafiori2024llama3herdmodels,guo2025deepseek,yang2025qwen3}) has largely been driven by scaling up model sizes \cite{hoffmann2022training}, yet this leads to high inference costs for deployment.
Knowledge distillation (KD, \citealt{Hinton2015DistillingTK}) offers a practical way to alleviate this cost by transferring capabilities from a \textit{powerful} teacher model to a \textit{compact} student. The central question is how to design a distillation objective that makes the student match the teacher’s generative behavior as closely as possible.

Conventional wisdom favors soft-label distillation \cite{wen-etal-2023-f, gu2024minillm, agarwal2024onpolicy, xu2025speculative, wang2025abkd, ko2025distillm}, where the student is trained to match the teacher’s full predictive distribution over the next token. This approach is intuitively appealing because it captures richer information, including the teacher’s confidence over alternative tokens. In contrast, hard-label distillation \cite{kim-rush-2016-sequence,wang2023self,taori2023stanford,peng2023instruction, guo2025deepseek} relies on a single token sampled from the teacher’s distribution as the training target, which loses much of the distributional information present in the soft labels.

    

\begin{figure}
    \centering
    \includegraphics[width=1.0\linewidth]{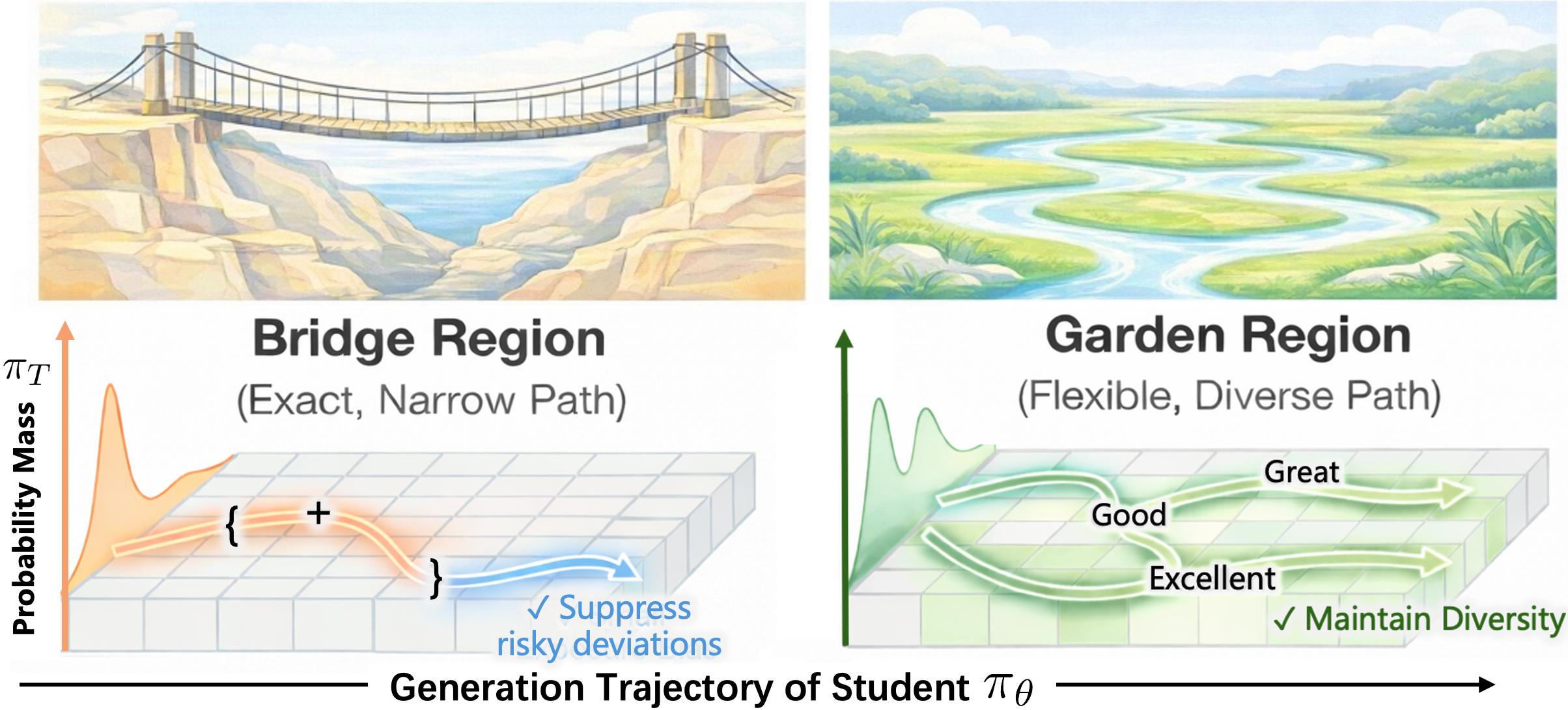}
    \caption{\textit{The Bridge–Garden Dilemma}. Bridges require exact tokens to prevent error cascades, favoring Hard KD for risk suppression. Gardens allow flexible choices, favoring Soft KD for preserving diversity. 
    Hybrid KD balances both for superior generative performance.
    }
    \label{fig:bridge_garden}
\end{figure}

\begin{figure*}[t]
    \centering
    \begin{subfigure}[b]{0.97\textwidth}
        \centering
        \includegraphics[width=\linewidth]{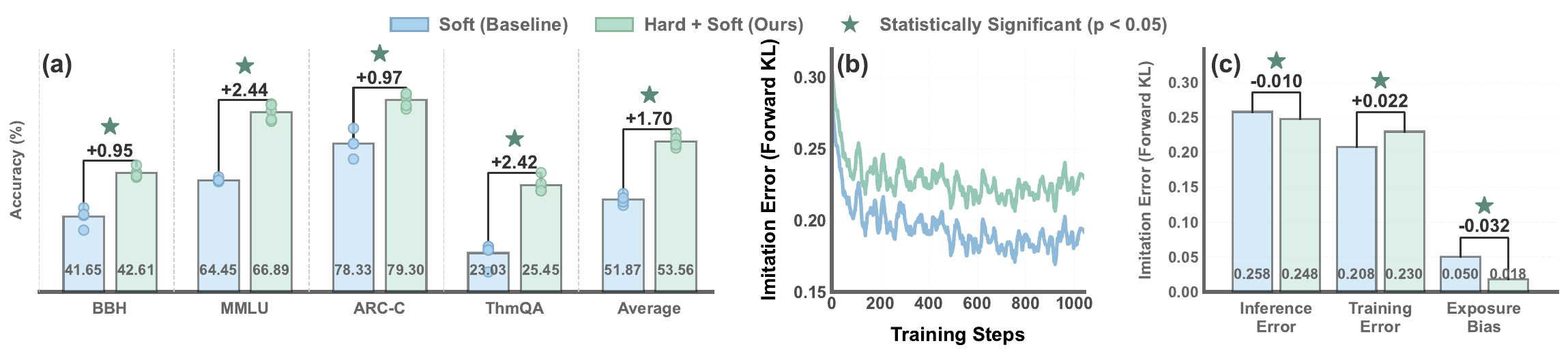}
    \end{subfigure}
    
    \caption{Comparative analysis (Qwen2.5-7B $\rightarrow$ 3B) of Hybrid KD ($\lambda \ell_{\mathrm{soft}} + (1-\lambda)\ell_{\mathrm{hard}}$) vs. Soft KD ($\ell_\mathrm{soft}$). (a) benchmark performance gains, (b) student-teacher imitation error during training (quantified by Forward KL), and (c) inference imitation error decomposition based on the same metric. 
    The experiments provided in \textbf{ \underline{\textit{App.\ref{app:exp_results} further demonstrate the consistency}} of hard-soft paradox \textit{across various architectures, tasks, and distillation divergences.}}
    }
    \label{fig:hard_soft_performance_gain}
\end{figure*}

Surprisingly, our empirical investigation reveals a consistent counter-intuitive trend: a simple linear combination of hard and soft losses systematically outperforms pure soft distillation (Fig.\ref{fig:hard_soft_performance_gain}a). This presents a puzzle: 
\textit{If hard labels provide less information, why do they help?} At first glance, it seems that using hard labels can make optimization much easier, enabling the student to imitate the teacher more closely during training. However, we discover that the performance gain does not come from better training imitation. In fact, our experiments show that adding hard labels even worsens this training fit, as shown in Fig.\ref{fig:hard_soft_performance_gain}(b).

Instead, we show that the gain arises from a different source: reduced exposure bias.
Exposure bias \cite{bengio2015scheduled} refers to the performance gap that emerges when the student, conditioned on its own previously generated tokens, deviates from the teacher’s generation trajectory. This autoregressive distribution shift is a core challenge in sequence-level KD \cite{kodistillm,gu2024minillm, agarwal2024onpolicy}. Our analysis shows that adding hard labels is particularly effective at suppressing this bias, leading to better final performance despite the weaker training fit (Fig.\ref{fig:hard_soft_performance_gain}c).

To explain this phenomenon, we introduce a novel conceptual and theoretical framework based on a core measure of local risk sensitivity. Within this framework, we find that the autoregressive generation process can be partitioned into two distinct regions: Bridges and Gardens (see Fig.\ref{fig:bridge_garden}). \textbf{Bridge regions} exhibit \textbf{high} local risk sensitivity; here, the next token must be \textit{exact}, as a single error can propagate and ruin the entire sequence. Conversely, \textbf{Garden regions} exhibit \textbf{low} local risk sensitivity, where token choices are more flexible and multiple alternatives can preserve both meaning and coherence.  On top of this, we derive an upper bound for exposure bias that decomposes into contributions from these regions,
which reveals a \textbf{Bridge--Garden Dilemma}. Our theory shows that hard-label matching excels in Bridges by concentrating probability mass on the teacher’s chosen token, thereby averting cascading errors. In contrast, soft-label matching excels in Gardens by faithfully preserving the teacher’s full distribution, maintaining output diversity. In this way, neither of the two pure strategies is optimal for both regions simultaneously. This insight naturally leads to a hybrid distillation objective that selectively blends hard and soft supervision based on the prefix context.

Based on this analysis, we propose a family of Bridge--Garden hybrid supervision methods that adaptively mix hard and soft labels, using confidence-, entropy-, curriculum-, and risk-guided strategies. We study a \textbf{primary suite of seven} teacher--student pairs  across multiple families and scales, including \textbf{Qwen (7B$\rightarrow$3B)}, \textbf{Llama (8B$\rightarrow$1B)}, \textbf{Gemma (4B$\rightarrow$1B)}, and \textbf{DeepSeek-Coder (6.7B$\rightarrow$1.3B)}. \rev{Extended evaluations further cover a larger Qwen2.5 capacity gap, an additional Qwen2.5-Coder pair, and open-ended generation.} Across commonsense, math, and coding benchmarks, the proposed methods outperform divergence-based and on-policy baselines, while reducing training cost by \textbf{9.7$\times$}, making our approach more practical for industrial applications.

\section{Preliminaries}
\label{sec:Preliminaries}

\textbf{Autoregressive Generation.}
We consider sequence generation over a vocabulary $\mathcal{V}$. At each step $t$, the language model predicts the next token $y_t \in \mathcal{V}$ given an input prompt $x$ and the preceding sequence $y_{<t} \coloneqq (y_1, \dots, y_{t-1})$. We denote the \underline{\textbf{prefix}} at step $t$ by $s \coloneqq (x, y_{<t})$ and the \underline{\textbf{next token}} by $a \coloneqq y_t$. The model’s behavior is defined by a \underline{\textbf{policy}} $\pi(a \mid s)$, a con
ditional probability distribution over $\mathcal{V}$.

\textbf{Knowledge Distillation (KD)} seeks to align a \textit{parameterized} \stress{student} $\pi_\theta$ with a \textit{fixed} \stress{teacher} $\pi_T$ by minimizing the discrepancy in their predictions. Formally, the objective is to minimize the expected divergence $\mathbb{D}$ (such as KL divergence; see App.\ref{app:div}) between their next-token distributions under the teacher-generated \textbf{\underline{prefix distribution}} $d_T$:
\begin{equation}
\label{eq:global_loss}
\mathcal{L}_{d_T}(\pi_\theta) \coloneqq \mathbb{E}_{s \sim d_T} \big[ \mathbb{D} \big( \pi_T(\cdot \mid s) \,\|\, \pi_\theta(\cdot \mid s) \big) \big].
\end{equation}

This objective is empirically optimized via two approaches:

\textbf{Soft KD.} Here the student directly matches the teacher’s full output distribution for every prefix $s$ by minimizing
\begin{equation}
\label{eq:loss_soft}
\ell_{\mathrm{soft}}(s; \theta) \coloneqq \mathbb{D} \big( \pi_T(\cdot \mid s) \,\|\, \pi_\theta(\cdot \mid s) \big).
\end{equation}

\textbf{Hard KD.} When the full distribution $\pi_T(\cdot\mid s)$ is unavailable (\textit{e.g.,} with black-box teacher APIs \cite{achiam2023gpt}) or the divergence $\mathbb{D}$ is costly to compute \cite{carlini2024stealing}, one can instead sample a token $a^* \sim \pi_T(\cdot \mid s)$ and train the student to maximize its log-likelihood:
\begin{equation}
\label{eq:loss_hard}
\ell_{\mathrm{hard}}(s; \theta) \coloneqq -\log \pi_\theta(a^* \mid s).
\end{equation}
This objective provides an unbiased estimate of $\mathbb{E}_{a \sim \pi_T(\cdot | s)}[-\log \pi_\theta(a|s)]$. When $\mathbb{D}$ is the forward KL divergence, minimizing Eq.\eqref{eq:loss_hard} is equivalent to minimizing $\mathcal{L}_{d_T}(\pi_\theta)$, up to a constant teacher entropy.
\rev{This equivalence concerns the population objective; with finite data and limited student capacity, hard and soft supervision can still lead to different training trajectories and student distributions.}
\textbf{\textit{See App.\ref{sec:related_work} for further related work and comparisons with prior arts.}}

\section{The Hard-Label Paradox in Distillation}
\label{sec:motivation}

\subsection{An Empirical Training–Inference Puzzle}
\label{sec:anomaly}


Conventional wisdom \cite{Hinton2015DistillingTK,cho2019efficacy,zhao2022decoupled} holds that soft labels (Eq.\ref{eq:loss_soft}), which encapsulate the teacher’s full predictive distribution, should offer richer guidance than hard one‑hot labels (Eq.\ref{eq:loss_hard}). Intuitively, matching the teacher’s probabilities across all tokens ought to produce a more accurate student.

Our experiments, however, reveal the opposite trend. As shown in Fig.~\ref{fig:hard_soft_performance_gain}(a), a simple linear hybrid loss, termed Hybrid KD, consistently surpasses pure soft distillation across architectures, tasks, and divergence measures.

\begin{ObservationBox}[label={obs:one}]
{\color{deepViolet}\textbf{\textsc{Observation 1.}}}
    \textit{A \textcolor[HTML]{5D536B}{\textbf{linear interpolation}} ($\lambda \ell_{\mathrm{soft}} + (1-\lambda)\ell_{\mathrm{hard}}$) \textcolor[HTML]{5D536B}{\textbf{significantly outperforms pure Soft KD}} across models, benchmarks, and divergence choices (p $<$ 0.05). \textcolor[HTML]{5D536B}{\textbf{This consistent advantage is further shown in App.\ref{app:exp_results}.}}}
\end{ObservationBox}


Despite these robust gains, the underlying mechanism remains unclear. Without a principled explanation, the improvement appears incidental, hindering the systematic design of distillation objectives.

One natural hypothesis points to optimization: hard labels may yield sharper gradients and ease optimization. If this were the primary cause, we would expect hybrid training to help the student better imitate the teacher than pure soft distillation.  However, Fig.~\ref{fig:hard_soft_performance_gain}(b) shows the opposite:

\begin{ObservationBox}[label={obs:two}]
    {\color{deepViolet}\textbf{\textsc{Observation 2.}} }
\textit{Adding hard labels \emph{worsens} the training fit: the student \textcolor[HTML]{5D536B}{\textbf{mimics the teacher worse}} and yields \textcolor[HTML]{5D536B}{\textbf{a larger distribution discrepancy}} $\mathbb{D}(\pi_T \,\|\, \pi_\theta)$ than Soft KD. \textcolor[HTML]{5D536B}{\textbf{This trend is further observed in App.\ref{app:exp_results}.}}}

\end{ObservationBox}

Since optimization difficulty cannot explain the gains, we must look elsewhere. This motivates a deeper problem:
\begin{center}
    \textit{``If the performance gain is not from better }\textcolor[HTML]{5D536B}{teacher-imitation during training}\textit{, where does it come from?''}
\end{center}

\subsection{Locating the Missing Gain}
\label{sec:gap_decomposition}

To solve this, we first need to examine the total inference error of the student more carefully. During training, the student learns from teacher-generated prefixes, but at inference time it must generate tokens conditioned on its own past outputs, a fundamentally different distribution $d_\theta$.


On top of this, we can decompose the student’s total inference error into two interpretable parts. Let \(\mathcal{L}_{d_T}(\pi_\theta)\) denote the training-fit error (how well the student imitates the teacher on teacher prefixes), and let \(\mathcal{L}_{d_\theta}(\pi_\theta)\) be the inference imitation error (how well it performs on its own generated sequences). A simple identity relates them:


\begin{equation}
\label{eq:inference_decomposition}
\mathcal{L}_{d_\theta}(\pi_\theta) \;=\; \mathcal{L}_{d_T}(\pi_\theta) \;+\; \underbrace{\big(\mathcal{L}_{d_\theta}(\pi_\theta) - \mathcal{L}_{d_T}(\pi_\theta)\big)}_{\text{Residual}}.
\end{equation}

The first term is the familiar training-fit error. The second term captures the extra loss incurred when the student deviates from the teacher’s prefix distribution. Notably, this residual has a precise and well-known meaning in sequence generation: it coincides with the exposure-bias term in KD

\begin{SoftDef}[label=eq:eb_def]{(Distillation Exposure Bias, \citealt{bengio2015scheduled})}
In autoregressive KD, exposure bias is defined as
\[
\EB(\pi_\theta)
\coloneqq
\underbrace{\E_{s \sim d_{\theta}}\!\big[\ell_{\mathrm{soft}}(s)\big]}_{\text{Inference Error } \mathcal{L}_{d_{\mathrm{\theta}}}(\pi_\theta)}
-
\underbrace{\E_{s \sim d_{T}}\!\big[\ell_{\mathrm{soft}}(s)\big]}_{\text{Training Fit } \mathcal{L}_{d_{T}}(\pi_\theta)}.
\]
It quantifies the performance gap caused by the shift from teacher prefixes \(d_T\) to student-generated prefixes \(d_\theta\).
\end{SoftDef}

\textbf{Remark.} Unlike the classical \textit{i.i.d.}\ generalization gap \cite{mohri2018foundations}, exposure bias stems specifically from the \emph{autoregressive} distribution shift $d_{T}\!\to d_{\theta}$.

Now our earlier observations fall into place. Hard labels improve final performance (Obs.\ref{obs:one}) despite they hurt training fit (Obs.\ref{obs:two}). Eq.\eqref{eq:inference_decomposition} shows the only way this can happen: hard labels must reduce exposure bias enough to outweigh their worse training fit. Fig.\ref{fig:hard_soft_performance_gain}(c) and results for more models and tasks in App.\ref{app:exp_results} confirms this directly. The drop in exposure bias indeed compensates for the rise in training error.

We now know that hard labels suppress exposure bias. But why does this happen? 
We turn next to uncover the mechanism hidden behind.

\section{The Bridge--Garden Decomposition of Exposure Bias}
\label{sec:resolution}




To answer this question, we revisit the core of exposure bias: it emerges when the student chooses a different token than the teacher, and such mistakes can accumulate over subsequent generation steps. Understanding why this bias is reduced requires us to identify where these deviations occur and how they influence future tokens.


\subsection{The Bridge--Garden Decomposition} 
\label{sec:The_Cliff_Plain_Intuition}

Our analysis starts with a simple insight: not every generation step is equally sensitive to a local deviation. In some contexts, the next token must be exact. For example, changing an operator in a mathematical derivation (\textit{e.g.,} ``$+$'' to ``$-$'') can break the entire reasoning chain. In others, the next token can be flexible, such as replacing “excellent” with “great” in open‑ended dialogue often preserves the meaning.

Motivated by this, we conceptually partition the generation process into two types of regions:

\begin{ObservationBox}
    \noindent\textbf{\textcolor{deepViolet}{\textsc{Bridges:}}}\quad
    These are regions where \textcolor[HTML]{5D536B}{\textit{\textbf{the next token must be exact}}}. 
    A mistake here often invalidates the subsequent generation.

    \vspace{6pt}

    \noindent\textbf{\textcolor{deepViolet}{\textsc{Gardens:}}}\quad
    These are regions where \textcolor[HTML]{5D536B}{\textit{\textbf{the next token can be flexible}}}. 
    A small deviation usually does not affect the subsequent generation.
\end{ObservationBox}

Although intuitive, quantifying this distinction is nontrivial. This is because, under the student’s own autoregressive generation, a single token deviation can alter all following predictions and trigger error accumulation, making it difficult to isolate token‑wise effects from the overall loss.


\subsection{Quantifying Local Risk Sensitivity via Single‑Override Policy}




Fortunately, we can derive a structured upper bound on exposure bias (Thm.\ref{thm:sensitivity_bound}), which decomposes the global bias into a sum of local, token‑level terms. Each term captures the effect of the student deviating from the teacher at a single generation step, providing a natural measure of local risk. 

\begin{SoftThm}[label=thm:sensitivity_bound]{(A $\kappa$-Weighted Bound on Exposure Bias)}
Under mild conditions (bounded loss, non-vanishing probabilities, and prefix concentrability), let $\Delta\pi_\theta \coloneqq \pi_\theta-\pi_T$. Then the exposure bias $\EB(\pi_\theta)$ satisfies
\[
\EB(\pi_\theta)\ \le\ \E_{s\sim d_{T}}\!\Big[\,F(s,\pi_\theta)\Big],
\]
where for a constant $C_{\rm 2}>0$,
\[
F(s,\pi_\theta)
:= \underbrace{\sum\nolimits_{a} \textcolor{blue}{\kappa(a\!\mid\! s)} \cdot |\Delta\pi_\theta(a\!\mid\! s)|}_{\textbf{$\kappa(a|s)$-weighted deviation}}
\;+ C_{\rm 2}\|\Delta\pi_\theta(\cdot\!\mid\! s)\|_1^2 .
\]
\end{SoftThm}

The proof of Thm.\ref{thm:sensitivity_bound} is in App.\ref{app:proof_sensitivity_bound}. To interpret \(\kappa(a \mid s)\), we first introduce the notion of a single‑override policy. 

\begin{ObservationBox}
{\color{deepViolet}\textbf{\textsc{(Single-Override Policy)}}}
\textit{Fix a prefix $s\in\mathcal S$ and token $a\in\mathcal V$. Define the override policy $\pi^{(s,a)}$ by}
\[
\pi^{(s,a)}(\cdot\mid s') \;:=\;
\begin{cases}
\delta_a(\cdot), & s'=s,\\[2pt]
\pi_T(\cdot\mid s'), & s'\neq s,
\end{cases}
\]
\textit{with $\delta_a$ the Dirac distribution on $a$. Let $d^{(s,a)} \coloneqq d_{\pi^{(s,a)}}$ be the prefix distribution induced by $\pi^{(s,a)}$.}
\end{ObservationBox}

\begin{SoftDef}[label=def:sensitivity]{(Local-Risk Sensitivity)}
For a prefix $s$ with its visiting probability $d_{T}(s)>0$, the sensitivity $\kappa(a|s)$ measures the extra loss incurred per visit to $s$ when choosing $a$ instead of following the teacher:
\[
\kappa(a | s)
:= \frac{\EB(\pi^{(s,a)})}{d_{T}(s)} 
= \frac{\mathcal{L}_{d^{(s,a)}}(\pi_\theta) - \mathcal{L}_{d_{T}}(\pi_\theta)}{d_{T}(s)},
\]
where $\mathcal{L}_{d}(\pi_\theta) \coloneqq \mathbb{E}_{s' \sim d} [ \mathbb{D}( \pi_T(\cdot \mid s') \,\|\, \pi_\theta(\cdot \mid s') ) ]$.
\end{SoftDef}



In the bound above, the contribution of a local deviation at \((s,a)\) is the product \(\kappa(a|s)\,|\Delta\pi_\theta(a|s)|\). Here \(|\Delta\pi_\theta(a|s)|\) measures the probability difference between student and teacher for token \(a\) given \(s\), and \(\kappa(a|s)\) weights that difference in the upper bound.

The override policy $\pi^{(s,a)}$ acts as a prefix‑conditioned token intervention for the teacher: at exactly prefix \(s\) it outputs token \(a\) with probability one, while following the teacher everywhere else. The difference \(\mathcal{L}_{d^{(s,a)}}(\pi_\theta) - \mathcal{L}_{d_{T}}(\pi_\theta)\) is precisely the loss increase caused by changing the current token to \(a\) at \(s\). The quantity \(\kappa(a|s)\) discounts this increase by the visiting probability \(d_{T}(s)\).
Thus, \textbf{a small \(\kappa(a|s)\) means that deviating to \(a\) at \(s\) has minimal impact on exposure bias; a large \(\kappa(a|s)\) signals that this single‑token change can strongly affect the bias.}

Interestingly, this construction aligns with classical algorithmic stability theory \cite{bousquet2002stability}, which measures how the loss changes when a single training example is perturbed. Here the “example” is the prefix \(s\), and the perturbation is forcing token \(a\) whenever \(s\) is reached during generation. The value \(\kappa(a|s)\) quantifies the sensitivity of the loss to this token‑level change.
\rev{Because \(\kappa\) is evaluated with the current student policy, its value can evolve during training as the student distribution changes.}

So far, we have analyzed the effect of a single deviation at a given step. But what happens when we consider all possible deviations at that step?

\subsection{The Bridge--Garden Bound on Exposure Bias} 
\label{sec:The_Bridge_Garden_Bound_on Exposure_Bias}

To capture how sensitive a given step \(s\) is to \textbf{all} possible token deviations, we aggregate the local sensitivities by summing over the vocabulary:

\vspace{-10pt}
\[
\kappa(s) := \sum\nolimits_{a \in \mathcal{V}} \kappa(a \mid s).
\]
This quantity summarizes the token‑level risk at step \(s\). When \(\kappa(s)\) is large, many tokens have large \(\kappa(a|s)\); changing the teacher’s token at \(s\) to one of them can substantially increase the loss. When \(\kappa(s)\) is small, most \(\kappa(a|s)\) are small, so for the majority of tokens a deviation at \(s\) has only a mild effect. Accordingly, high‑ and low‑sensitivity $\kappa(s)$ regions correspond naturally to the notions of Bridges and Gardens introduced earlier (Sec.\ref{sec:The_Cliff_Plain_Intuition}).

\begin{SoftDef}[label=def:partition]{(The Bridge--Garden Partition)}
Given a task‑ and model‑dependent threshold \(\tau\), partition the prefix space \(\mathcal{S}\) into
\[
\textbf{Bridge:}\quad \mathcal B:=\{s\in\mathcal S:\kappa(s)>\tau\},
\]
\[
\textbf{Garden:}\quad \mathcal G:=\{s\in\mathcal S:\kappa(s)\le\tau\}.
\]
\end{SoftDef}

This partition provides a structured lens for analyzing the bound of Thm.\ref{thm:sensitivity_bound}, as formalized next (proof in App.\ref{app:proof_bg_upper_bound}).

\begin{SoftProp}[label=cor:eb_cp]{(The Bridge--Garden Upper Bound on Exposure Bias)}
Using the partition from Def.\ref{def:partition}, the bound from Thm.\ref{thm:sensitivity_bound} decomposes as follows:
\[
F(\pi_\theta) := F_{\mathcal B}(\pi_\theta) + F_{\mathcal G}(\pi_\theta),
\]
where \(F_{\mathcal X}(\pi_\theta) := \E_{s\sim d_{T}}[1_{\mathcal X}(s) F(s,\pi_\theta)]\) for \(\mathcal X\in\{\mathcal B,\mathcal G\}\), and the term \(F(s,\pi_\theta)\) is defined in Thm.\ref{thm:sensitivity_bound}.
\end{SoftProp}

The decomposition reveals an asymmetry in how a student should learn from the teacher:

In Bridges $\mathcal{B}$, only a few tokens are safe; most alternatives carry high risk. Hence \(F_{\mathcal{B}}\) is dominated by deviations on tokens with large \(\kappa(a|s)\). To minimize \(F_{\mathcal{B}}\), the student must avoid these high‑risk tokens and concentrate probability on the few safe options supported by the teacher.

In Gardens $\mathcal{G}$, many next‑token choices are acceptable and their \(\kappa(a|s)\) are uniformly small. Because no single deviation incurs a large penalty, the bound behaves like a standard distribution‑matching distance. Minimizing \(F_{\mathcal{G}}\) encourages the student to match the teacher’s broad distribution, preserving the diversity inherent in Gardens.

\subsection{Mechanism of Hybrid Improvement}
\label{sec:hybrid}

\textcolor[HTML]{1B4F72}{
\textit{\textbf{NOTE}: Hard KD and Soft KD refer to Eq.\eqref{eq:loss_hard} and Eq.\eqref{eq:loss_soft}, respectively. Hybrid KD is defined in Obs.\ref{obs:one}.}
}

To reduce total exposure bias, we must keep it low in both Bridge and Garden regions. We next show that neither Hard nor Soft KD alone can achieve this goal.

Hard KD employs one‑hot teacher targets, pushing the student to place most probability on the teacher’s single preferred token. This is effective in Bridges, but fails in Gardens because it suppresses distributional diversity. Soft KD matches the teacher’s full distribution, thereby keeping probability spread over many teacher‑supported tokens. This works well in Gardens. But in Bridges,  it may assign non‑negligible probability to high‑risk alternatives.

If, however, we adopt a Hybrid KD regime, we can reduce the loss in both regions by choosing an appropriate mixing coefficient \(\lambda\), as empirically observed. The following one-step theorem makes this intuition precise (proof in App.\ref{app:Complementarity_Gain}).

\begin{SoftThm}[label=thm:hybrid_improvement]{(Complementarity Gain) (Informal)}
Let \(\pi_{\text{hard}}\) and \(\pi_{\text{soft}}\) be (near-)minimizers of \(\ell_{\text{hard}}\) and \(\ell_{\text{soft}}\), respectively; in late-stage distillation,
\[
F_{\mathcal G}(\pi_{\text{soft}})\ <\ F_{\mathcal G}(\pi_{\text{hard}}),
\quad
F_{\mathcal B}(\pi_{\text{hard}})\ <\ F_{\mathcal B}(\pi_{\text{soft}}).
\]
This suggests that for some \(\lambda\in(0,1)\), the minimizer \(\pi_{\text{hyb}}\) of the hybrid objective
\(\big(\lambda\ell_{\text{soft}}+(1-\lambda)\ell_{\text{hard}}\big)\)
can achieve
\[
F(\pi_{\text{hyb}})\ <\ \min\{F(\pi_{\text{hard}}),\,F(\pi_{\text{soft}})\}.
\]
\end{SoftThm}

Thm.\ref{thm:hybrid_improvement} formalizes the complementarity: the hard‑KD solution \(\pi_{\text{hard}}\) has smaller \(F_{\mathcal{B}}\) but larger \(F_{\mathcal{G}}\) than the soft‑KD solution \(\pi_{\text{soft}}\), and vice versa. No single objective can keep both \(F_{\mathcal{B}}\) and \(F_{\mathcal{G}}\) small simultaneously. Blending the losses produces a hybrid model with a strictly smaller \(F(\pi)\) than either extreme, tightening the exposure‑bias bound and aligning with the empirical gains of hybrid KD reported in Sec.\ref{sec:motivation}.

\begin{table*}[t]
\centering
\caption{Accuracy ($\uparrow$) and the mean score (Avg.) on general reasoning benchmarks for Qwen2.5 and Llama3 models under different hybrid strategies (with Forward KL as soft supervision). \colorbox[RGB]{230, 240, 250}{Best} and \colorbox[RGB]{245, 250, 255}{second-best} results are \textbf{bolded} and \underline{underlined}.
}
\label{tab:simplified_mixed_results_v3}

\resizebox{\textwidth}{!}{%
    \setlength{\tabcolsep}{3pt} 
    \renewcommand{\arraystretch}{1.35} 
    
    \begin{tabular}{l c ccccc c ccccc} 
    \toprule
    
    \multirow{2}{*}{\textbf{Method}} & 
    & 
    \multicolumn{5}{c}{\textbf{Llama3.1-8B} $\to$ \textbf{Llama3.2-1B}} & 
    & 
    \multicolumn{5}{c}{\textbf{Qwen2.5-7B} $\to$ \textbf{Qwen2.5-3B}} \\
    
    \cmidrule(r){3-7} \cmidrule(l){9-13}
    
    & \dashedline & 
      \textbf{BBH} & \textbf{MMLU} & \textbf{ARC-C} & \textbf{ThmQA} & \textbf{Avg.} &
      \dashedline & 
      \textbf{BBH} & \textbf{MMLU} & \textbf{ARC-C} & \textbf{ThmQA} & \textbf{Avg.} \\
    \midrule

    \rowcolor{HeaderGray}
    \textit{Teacher} & 
    \dashedline & 
    57.72\sd{0.07} & 70.90\sd{0.03} & 83.58\sd{0.10} & 18.10\sd{0.40} & 57.58 & 
    \dashedline & 
    64.66\sd{0.69} & 78.22\sd{0.22} & 89.90\sd{0.24} & 32.47\sd{0.32} & 66.31 \\
    
    \textit{Student (No Distill)} & 
    \dashedline & 
    14.01\sd{4.41} & 19.78\sd{9.89} & 21.57\sd{9.94} & 2.22\sd{0.44} & 14.40 & 
    \dashedline &
    22.34\sd{0.06} & 64.61\sd{0.06} & 78.40\sd{0.03} & 12.22\sd{0.25} & 44.39 \\

    \midrule
    
    Hard KD \cite{kim-rush-2016-sequence} & 
    \dashedline & 
    15.29\sd{2.75} & 22.54\sd{1.38} & 23.98\sd{1.96} & 3.88\sd{0.73} & 16.42 & 
    \dashedline &
    41.52\sd{0.33} & 65.76\sd{0.18} & 78.75\sd{0.71} & 23.75\sd{0.40} & 52.45 \\
    
    Soft KD \cite{Hinton2015DistillingTK} & 
    \dashedline & 
    22.07\sd{2.11} & 33.13\sd{1.69} & 33.41\sd{1.78} & 4.37\sd{0.43} & 23.25 & 
    \dashedline &
    41.65\sd{0.16} & 64.45\sd{0.04} & 78.33\sd{0.22} & 23.02\sd{0.33} & 51.87 \\
    
    \midrule
    Static Weighting (Ours) & 
    \dashedline & 
    23.03\sd{1.26} & 34.12\sd{1.80} & 34.35\sd{1.53} & 5.42\sd{0.73} & 24.23 & 
    \dashedline & 
    42.61\sd{0.09} & 66.89\sd{0.19} & 79.30\sd{0.14} & \cellcolor[RGB]{230, 240, 250}\textbf{25.45}\sd{0.23} & 53.56 \\
    
    Confidence-based Weighting (Ours) & 
    \dashedline & 
    \cellcolor[RGB]{245, 250, 255}\underline{25.64}\sd{0.80} & 35.32\sd{1.45} & 34.68\sd{2.24} & 3.97\sd{0.78} & 24.90 & 
    \dashedline &
    44.07\sd{0.19} & \cellcolor[RGB]{245, 250, 255}\underline{67.50}\sd{0.19} & \cellcolor[RGB]{245, 250, 255}\underline{80.77}\sd{0.27} & 22.78\sd{0.28} & 53.78 \\

    Entropy-based Weighting (Ours) & 
    \dashedline & 
    24.40\sd{0.88} & \cellcolor[RGB]{245, 250, 255}\underline{35.93}\sd{1.56} & 34.78\sd{2.61} & \cellcolor[RGB]{245, 250, 255}\underline{5.83}\sd{0.53} & 25.23 & 
    \dashedline &
    \cellcolor[RGB]{230, 240, 250}\textbf{46.83}\sd{0.16} & 67.06\sd{0.14} & 79.81\sd{0.79} & 23.65\sd{0.20} & \cellcolor[RGB]{245, 250, 255}\underline{54.34} \\

    Curriculum Schedule (Ours) & 
    \dashedline & 
    25.06\sd{1.19} & \cellcolor[RGB]{230, 240, 250}\textbf{36.13}\sd{1.54} & \cellcolor[RGB]{245, 250, 255}\underline{35.29}\sd{1.16} & \cellcolor[RGB]{230, 240, 250}\textbf{6.18}\sd{0.68} & \cellcolor[RGB]{245, 250, 255}\underline{25.67} & 
    \dashedline & 
    44.39\sd{0.18} & 67.32\sd{0.20} & 80.51\sd{0.10} & 22.13\sd{0.47} & 53.59 \\
    
    Risk-Guided Hybrid (Ours) & 
    \dashedline & 
    \cellcolor[RGB]{230, 240, 250}\textbf{27.44}\sd{2.85} & 35.64\sd{3.22} & \cellcolor[RGB]{230, 240, 250}\textbf{37.01}\sd{3.98} & 5.00\sd{1.25} & \cellcolor[RGB]{230, 240, 250}\textbf{26.27} & 
    \dashedline &  
    \cellcolor[RGB]{245, 250, 255}\underline{46.53}\sd{0.05} & \cellcolor[RGB]{230, 240, 250}\textbf{69.05}\sd{0.07} & \cellcolor[RGB]{230, 240, 250}\textbf{81.23}\sd{0.00} & \cellcolor[RGB]{245, 250, 255}\underline{23.82}\sd{0.58} & \cellcolor[RGB]{230, 240, 250}\textbf{55.16} \\
    
    \bottomrule
    \end{tabular}%
}
\end{table*}

\section{Practical Algorithms for Bridge-Garden Hybrid Supervision}
\label{sec:weight_scheduling}

Guided by the Bridge--Garden analysis in Sec.\ref{sec:resolution}, we now develop practical algorithms that seek hard-label supervision in Bridges while preserving distributional diversity in Gardens.
A natural starting point is to combine the two losses with a fixed mixing coefficient $\lambda$: 

\begin{equation} 
\label{eq:mix_obj}
\ell_{\text{hyb}}(s;\theta)
= \lambda \cdot \ell_{\text{soft}}(s;\theta)
+ (1-\lambda) \cdot \ell_{\text{hard}}(s;\theta).
\end{equation}

Although this static hybrid already outperforms pure soft or hard KD (Obs.\ref{obs:one}), it does not fully reflect the distinct needs of Bridges and Gardens. We next explore two different strategies: \textbf{(i)} dynamically tuning the weight $\lambda$ (Methods 1--3) and \textbf{(ii)} modifying the hard‑loss term itself (Method 4).

\textbf{1) Confidence-based weighting.} 
Teacher’s confidence in its top prediction serves as a simple proxy for distinguishing Bridges from Gardens. In Bridges, a well-trained teacher typically places most probability mass on a small set of low-risk next tokens, since high-risk deviations can trigger error accumulation. Motivated by this, we reduce the soft‑label weight when the teacher is confident:
\[
\lambda_{\text{conf}}(s) = 1 - \max_a \pi_T(a \mid s),
\]

\textbf{2) Entropy-based weighting.}
Alternatively, we can use the teacher’s predictive entropy as a global uncertainty signal: higher entropy indicates that the teacher considers the next-token choice flexible, aligning with the behavior of Gardens. We thus set the soft-loss weight to the normalized entropy:
\[
\lambda_{\text{ent}}(s)=\frac{H_T(s)}{\log|\mathcal{V}|},
\ \ \
H_T(s)=-\sum_{a\in\mathcal{V}}\pi_T(a|s)\log\pi_T(a|s).
\]

\textbf{3) Curriculum schedule.}
Since mistakes in Bridges can lead to error accumulation, it can be useful to emphasize hard labels early in training, letting the student first learn these critical regions reliably. The weight on soft labels is then gradually increased to capture the diversity present in Gardens. We implement this with a linear warm‑up:
\[
\lambda(t) = \min\left(\frac{t}{T}, \ 1\right) \cdot \lambda_{\max},
\]
where \(\lambda_{\max}\) is the target soft‑label weight, \(t\) is the current training step, and \(T\) is the number of warm up steps.

\textbf{4) Risk-Guided Hybrid.}
Beyond adjusting the mixing coefficient, we can also revise the hard loss from a reinforcement learning perspective: the risk $\kappa(a|s)$ can be related to reward, where higher risk implies lower reward.
From this perspective, reward is concentrated in Bridges but more uniform in Gardens. Recall that maximizing reward incentivizes the consistency between the policy (student distribution) and the reward distribution. In this way, the student tends to behave sharply in Bridges and preserve diversity in Gardens.
This directly extends recent reward-optimization advances for LMs \cite{cundy2023sequencematch} to KD, and thus we have the following objective:
\[
\ell'_{\text{hard}}(s;\theta)
\coloneqq
\ell_{\text{hard}}(s;\theta)
+\frac{\alpha}{4}\,\Delta_\theta(s,a^*)^2, 
\]
where $\alpha>0$, $a^*$ is the hard target in $\ell_{\text{hard}}$ (Sec.\ref{sec:Preliminaries}), and
$
\Delta_\theta(s,a^*) \coloneqq \log\!\sum_{a'\in\mathcal{V}} \exp\!\big(f_\theta(a'\mid [s,a^*]) - f_\theta(a^*\mid s)\big),
$
with $f_\theta(\cdot\mid s)$ the student logits.
Here \([s,a^*]\) denotes the prefix obtained by appending the hard target \(a^*\) to \(s\).
In all experiments, we fix \(\alpha=0.1\); the additional term reuses the student logits and only adds one log-sum-exp computation per token, so its cost remains close to standard soft KD (see App.\ref{app:additional_experimental_evidence}).


\begin{table*}[t!]
\centering
\caption{Performance comparison of Qwen2.5 on reasoning benchmarks. Avg. is the mean across all benchmarks. We evaluate our hybrid KD (Forward KL) against recent soft KD methods. \textbf{\textit{Notably, our approach is orthogonal to these works (see Fig.~\ref{fig:benchmark_hard_soft_performance_gain} for gains when combined)}}. \colorbox[RGB]{248, 234, 240}{Best} and \colorbox[RGB]{252, 246, 248}{second-best} results are highlighted, also denoted by \textbf{bold} and \underline{underlined} values.}

\label{tab:qwen25_partial}

\resizebox{\textwidth}{!}{%
    \setlength{\tabcolsep}{4pt}
    \renewcommand{\arraystretch}{1.2} 
    
    \begin{tabular}{l c ccccc c ccccc} 
    \toprule
    
    \multirow{2}{*}{\textbf{Method}} & 
    & 
    \multicolumn{5}{c}{\textbf{Qwen2.5-7B} $\to$ \textbf{Qwen2.5-0.5B}} & 
    & 
    \multicolumn{5}{c}{\textbf{Qwen2.5-7B} $\to$ \textbf{Qwen2.5-3B}} \\
    
    \cmidrule(r){3-7} \cmidrule(l){9-13}
    
    & \dashedline & 
      \textbf{BBH} & \textbf{MMLU} & \textbf{ARC-C} & \textbf{ThmQA} & \textbf{Avg.} &
      \dashedline & 
      \textbf{BBH} & \textbf{MMLU} & \textbf{ARC-C} & \textbf{ThmQA} & \textbf{Avg.} \\

    \midrule
    
    Reverse KL \cite{gu2024minillm} & 
    \dashedline & 
    24.91\sd{0.01} & 44.72\sd{0.01} & 47.44\sd{0.00} & 11.00\sd{0.00} & 32.02 & 
    \dashedline &
    44.07\sd{0.10} & \cellcolor[RGB]{252, 246, 248}\underline{65.67}\sd{0.19} & 77.68\sd{0.12} & \cellcolor[RGB]{248, 234, 240}\textbf{24.20}\sd{0.49} & 52.90 \\

    Total Variation \cite{wen-etal-2023-f} & 
    \dashedline & 
    \cellcolor[RGB]{252, 246, 248}\underline{26.74}\sd{0.38} & 44.35\sd{0.12} & 46.76\sd{0.51} & 10.20\sd{0.56} & 32.01 & 
    \dashedline &
    40.50\sd{0.16} & 64.52\sd{0.03} & 78.11\sd{0.18} & 22.83\sd{0.56} & 51.49 \\ 

    JS divergence \cite{agarwal2024onpolicy} & 
    \dashedline & 
    24.55\sd{0.13} & 43.31\sd{0.10} & 44.78\sd{0.05} & \cellcolor[RGB]{248, 234, 240}\textbf{11.82}\sd{0.37} & 31.12 & 
    \dashedline &
    \cellcolor[RGB]{252, 246, 248}\underline{45.50}\sd{0.08} & 64.68\sd{0.17} & 78.85\sd{0.14} & 22.27\sd{0.41} & 52.83 \\

    Adaptive KL \cite{wu-etal-2025-rethinking} & 
    \dashedline & 
    26.04\sd{0.23} & 41.85\sd{0.11} & 45.89\sd{0.32} & 11.33\sd{0.61} & 31.28 & 
    \dashedline &
    44.71\sd{0.13} & 64.69\sd{0.07} & 79.23\sd{0.22} & 22.25\sd{0.33} & 52.72 \\
    
    Skew FKL \cite{kodistillm,ko2025distillm} & 
    \dashedline & 
    25.87\sd{0.24} & 44.12\sd{0.24} & 47.12\sd{0.65} & 10.65\sd{0.22} & 31.94 & 
    \dashedline &
    41.39\sd{0.17} & 64.67\sd{0.15} & 77.75\sd{0.23} & 23.77\sd{0.71} & 51.89 \\
     
    Skew RKL \cite{kodistillm,ko2025distillm} & 
    \dashedline & 
    \cellcolor[RGB]{248, 234, 240}\textbf{28.16}\sd{0.16} & \cellcolor[RGB]{252, 246, 248}\underline{45.03}\sd{0.05} & \cellcolor[RGB]{252, 246, 248}\underline{47.51}\sd{0.20} & \cellcolor[RGB]{252, 246, 248}\underline{11.37}\sd{0.66} & \cellcolor[RGB]{252, 246, 248}\underline{33.02} & 
    \dashedline &
    41.22\sd{0.08} & 63.95\sd{0.08} & 76.91\sd{0.18} & 23.67\sd{0.20} & 51.44 \\
     
    $\alpha$-$\beta$ divergence \cite{wang2025abkd} & 
    \dashedline & 
    26.18\sd{0.15} & 42.59\sd{0.18} & 46.70\sd{0.37} & 11.07\sd{0.43} & 31.63 & 
    \dashedline &
    45.12\sd{0.23} & 64.95\sd{0.17} & \cellcolor[RGB]{252, 246, 248}\underline{79.81}\sd{0.09} & 22.94\sd{0.54} & \cellcolor[RGB]{252, 246, 248}\underline{53.21} \\
    \midrule

    \textbf{HybKD (Ours)} & 
    \dashedline & 
    26.58\sd{0.16} & \cellcolor[RGB]{248, 234, 240}\textbf{49.08}\sd{0.22} & \cellcolor[RGB]{248, 234, 240}\textbf{51.69}\sd{0.62} & 10.50\sd{0.54} & \cellcolor[RGB]{248, 234, 240}\textbf{34.46} & 
    \dashedline & 
    \cellcolor[RGB]{248, 234, 240}\textbf{46.53}\sd{0.05} & \cellcolor[RGB]{248, 234, 240}\textbf{69.05}\sd{0.07} & \cellcolor[RGB]{248, 234, 240}\textbf{81.23}\sd{0.00} & \cellcolor[RGB]{252, 246, 248}\underline{23.82}\sd{0.58} & \cellcolor[RGB]{248, 234, 240}\textbf{55.16} \\ 
    
    \bottomrule
    \end{tabular}%
}
\end{table*}

\begin{figure*}[t]
    \centering
    \begin{subfigure}[b]{1.0\textwidth}
        \centering
        \includegraphics[width=\linewidth]{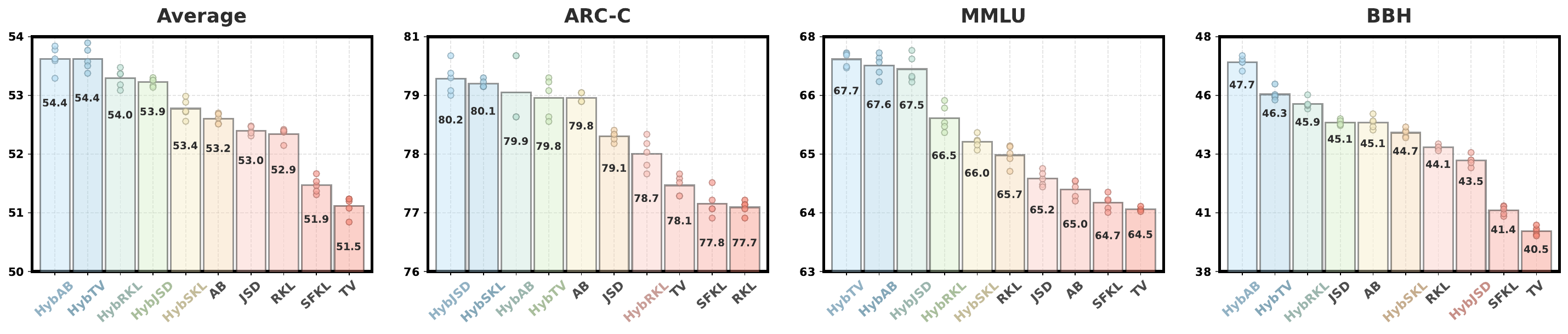}
    \end{subfigure}
    
    \caption{Performance achieved by integrating various divergences with hard-label supervision (via static weighting). HybRKL stands for the hybrid of Reverse KL and hard labels, with other hybrid variants named similarly. More numerical results are available in App.\ref{app:exp_results}.}

    \label{fig:benchmark_hard_soft_performance_gain}
\end{figure*}

\begin{table}[t]
\centering
\caption{Distillation results on math benchmarks ($\uparrow$).}
\label{tab:qwen_math_compact}
\small 
\setlength{\aboverulesep}{1.5pt}
\setlength{\belowrulesep}{1.5pt}
\renewcommand{\arraystretch}{1.05}
\setlength{\tabcolsep}{4pt}
\begin{tabular}{l c ccc c} 
\toprule
\multirow{2}{*}{\textbf{Method}} & & \multicolumn{4}{c}{\textbf{Qwen2.5-Math-7B} $\to$ \textbf{1.5B}} \\
\cmidrule(lr){3-6} 
& \dashedline & \textbf{GSM8K} & \textbf{MATH} & \textbf{Gaokao23} & \textbf{Avg.} \\
\midrule
Hard KD & \dashedline & 65.75\sd{4.43} & 50.44\sd{1.78} & 41.04\sd{3.15} & 52.41 \\
Forward KL & \dashedline & 68.72\sd{0.92} & 49.89\sd{0.82} & 41.77\sd{1.11} & 53.46 \\
Reverse KL & \dashedline & 58.98\sd{4.44} & 46.78\sd{4.02} & 37.35\sd{4.68} & 47.70 \\
Skew F(R)KL & \dashedline & 69.13\sd{1.62} & 49.59\sd{1.63} & 42.23\sd{1.92} & 53.65 \\
$\alpha$-$\beta$ divergence & \dashedline & \cellcolor[RGB]{252, 250, 255}70.86\sd{2.29} & \cellcolor[RGB]{252, 250, 255}50.68\sd{1.30} & \cellcolor[RGB]{252, 250, 255}42.88\sd{2.34} & \cellcolor[RGB]{252, 250, 255}54.81 \\
\midrule
\textbf{HybKD (Ours)} & \dashedline & \cellcolor[RGB]{240, 235, 252}\textbf{71.65}\sd{1.20} & \cellcolor[RGB]{240, 235, 252}\textbf{51.37}\sd{0.71} & \cellcolor[RGB]{240, 235, 252}\textbf{44.10}\sd{2.09} & \cellcolor[RGB]{240, 235, 252}\textbf{55.71} \\ 
\bottomrule
\end{tabular}
\end{table}

\begin{table}[t]
\centering
\caption{DeepSeek-Coder distillation results ($\uparrow$). \textbf{HE}: HumanEval.} 
\label{tab:deepseek_coder_compact}
\small
\setlength{\aboverulesep}{1.5pt}
\setlength{\belowrulesep}{1.5pt}
\renewcommand{\arraystretch}{1.05}
\setlength{\tabcolsep}{4pt}
\begin{tabular}{l c cccc c} 
\toprule
\multirow{2}{*}{\textbf{Method}} & & \multicolumn{5}{c}{\textbf{DeepSeek-Coder-6.7B} $\to$ \textbf{1.3B}} \\
\cmidrule(lr){3-7} 
& \dashedline & \textbf{HE} & \textbf{HE+} & \textbf{MBPP} & \textbf{MBPP+} & \textbf{Avg.} \\
\midrule
Hard KD & \dashedline & 35.37 & 32.93 & 60.32 & 50.00 & 44.66 \\
Forward KL & \dashedline & 38.41 & 33.54 & \cellcolor[RGB]{255, 238, 225}\textbf{63.49} & \cellcolor[RGB]{255, 238, 225}\textbf{51.59} & 46.76 \\
Reverse KL & \dashedline & 39.02 & 34.15 & 62.17 & 50.00 & 46.34 \\
Skew F(R)KL & \dashedline & \cellcolor[RGB]{255, 238, 225}\textbf{42.07} & 35.98 & 60.85 & 50.26 & \cellcolor[RGB]{255, 250, 245}47.29 \\
$\alpha$-$\beta$ divergence & \dashedline & 41.46 & \cellcolor[RGB]{255, 238, 225}\textbf{36.59} & 60.05 & 50.26 & 47.09 \\
\midrule
\textbf{HybKD (Ours)} & \dashedline & \cellcolor[RGB]{255, 250, 245}41.46 & \cellcolor[RGB]{255, 238, 225}\textbf{36.59} & \cellcolor[RGB]{255, 250, 245}63.12 & \cellcolor[RGB]{255, 250, 245}50.39 & \cellcolor[RGB]{255, 238, 225}\textbf{47.89} \\ 
\bottomrule
\end{tabular}
\end{table}

\section{Experiments}
\label{sec:main_experiments}

We now conduct empirical experiments to validate our theoretical analysis and to assess the practical effectiveness of the proposed hybrid distillation framework. Specifically, we seek to answer three key questions:

\textbf{Q1 (Hybrid Strategy):} How do different hybrid KD variants impact distillation performance?

\textbf{Q2 (Universality):} Are the gains from hybrid distillation consistent across different divergence measures?

\textbf{Q3 (vs. On-Policy Methods):} Compared to recent on-policy distillation techniques, does our approach improve final accuracy while being more efficient?

\rev{In App.~\ref{app:additional_evidence}, we extend the primary benchmark study in three directions. We first test whether the same hard--soft pattern holds under larger teacher--student capacity gaps, open-ended generation, and on-policy prefixes. We then measure training cost and compare against simpler alternatives, including reverse-proxy rules, regularization, and temperature changes. Finally, we use controlled synthetic domains where exact token-level $\kappa$ is computable, so the Bridge--Garden decomposition can be tested directly rather than only inferred from LLM benchmarks.}

\begin{table*}[h]
\centering
\caption{Accuracy ($\uparrow$) and mean score (Avg.) on Llama and Gemma models. \colorbox[RGB]{230, 242, 230}{\textbf{Best}} and \colorbox[RGB]{245, 250, 245}{\underline{second-best}} results are highlighted.}

\label{tab:distillation_final_v2}

\resizebox{\textwidth}{!}{%
    \setlength{\tabcolsep}{4pt} 
    \renewcommand{\arraystretch}{1.2} 
    
    \begin{tabular}{l c ccccc c ccccc} 
    \toprule
    
    \multirow{2}{*}{\textbf{Method}} & 
    & 
    \multicolumn{5}{c}{\textbf{Llama3.1-8B} $\to$ \textbf{Llama3.2-1B}} & 
    & 
    \multicolumn{5}{c}{\textbf{Gemma3-4B} $\to$ \textbf{Gemma3-1B}} \\
    
    \cmidrule(r){3-7} \cmidrule(l){9-13}
    
    & \dashedline & 
      \textbf{BBH} & \textbf{MMLU} & \textbf{ARC-C} & \textbf{ThmQA} & \textbf{Avg.} &
      \dashedline & 
      \textbf{BBH} & \textbf{MMLU} & \textbf{ARC-C} & \textbf{ThmQA} & \textbf{Avg.} \\
    \midrule
    
    Reverse KL \cite{gu2024minillm} & 
    \dashedline & 
    23.69\sd{0.77} & 31.63\sd{1.88} & 32.08\sd{1.20} & 3.60\sd{0.09} & 22.75 & 
    \dashedline &
    9.52\sd{1.03} & 24.77\sd{0.68} & \cellcolor[RGB]{230, 242, 230}\textbf{27.58}\sd{0.60} & 1.26\sd{0.32} & 15.78 \\

    Total Variation \cite{wen-etal-2023-f} & 
    \dashedline & 
    23.55\sd{1.51} & 27.41\sd{4.96} & 28.99\sd{4.47} & 2.68\sd{0.56} & 20.66 & 
    \dashedline &
    6.29\sd{0.86} & 25.07\sd{0.31} & 25.83\sd{0.43} & 1.29\sd{0.41} & 14.62 \\ 

    JS divergence \cite{agarwal2024onpolicy} & 
    \dashedline & 
    25.16\sd{2.03} & 34.08\sd{2.15} & 34.42\sd{1.81} & \cellcolor[RGB]{230, 242, 230}\textbf{5.70}\sd{0.13} & 24.84 & 
    \dashedline & 
    7.14\sd{0.74} & 25.27\sd{0.56} & 26.03\sd{0.33} & 4.57\sd{0.97} & 15.75 \\

    Adaptive KL \cite{wu-etal-2025-rethinking} & 
    \dashedline & 
    25.18\sd{1.86} & 33.86\sd{2.34} & 33.74\sd{1.77} & 4.85\sd{0.44} & 24.41 & 
    \dashedline & 
    5.47\sd{0.63} & 24.76\sd{1.56} & 24.07\sd{1.12} & 4.23\sd{0.41} & 14.63 \\
    
    Skew FKL \cite{kodistillm,ko2025distillm} & 
    \dashedline & 
    25.05\sd{1.40} & 31.39\sd{2.84} & 32.34\sd{2.99} & 4.73\sd{0.86} & 23.37 & 
    \dashedline &
    7.01\sd{0.82} & 25.08\sd{0.88} & \cellcolor[RGB]{245, 250, 245}\underline{26.14}\sd{0.70} & \cellcolor[RGB]{245, 250, 245}\underline{5.01}\sd{0.29} & \cellcolor[RGB]{245, 250, 245}\underline{15.81} \\
     
    Skew RKL \cite{kodistillm,ko2025distillm} & 
    \dashedline & 
    \cellcolor[RGB]{245, 250, 245}\underline{25.24}\sd{0.79} & 33.00\sd{1.40} & 32.24\sd{1.36} & 4.27\sd{0.44} & 23.69 & 
    \dashedline &
    5.83\sd{0.25} & 25.17\sd{0.22} & 25.93\sd{0.29} & 3.38\sd{0.26} & 15.08 \\

    $\alpha$-$\beta$ divergence \cite{wang2025abkd} & 
    \dashedline & 
    25.07\sd{1.36} & \cellcolor[RGB]{245, 250, 245}\underline{34.78}\sd{2.04} & \cellcolor[RGB]{245, 250, 245}\underline{34.90}\sd{1.45} & \cellcolor[RGB]{245, 250, 245}\underline{5.27}\sd{0.44} & \cellcolor[RGB]{245, 250, 245}\underline{25.01} & 
    \dashedline & 
    \cellcolor[RGB]{245, 250, 245}\underline{7.26}\sd{0.78} & \cellcolor[RGB]{245, 250, 245}\underline{25.49}\sd{1.00} & 26.07\sd{0.66} & 4.34\sd{0.89} & 15.79 \\
    
    \midrule
    \textbf{HybKD (Ours)} & 
    \dashedline & 
    \cellcolor[RGB]{230, 242, 230}\textbf{27.44}\sd{2.85} & \cellcolor[RGB]{230, 242, 230}\textbf{35.64}\sd{3.22} & \cellcolor[RGB]{230, 242, 230}\textbf{37.01}\sd{3.98} & 5.00\sd{1.25} & \cellcolor[RGB]{230, 242, 230}\textbf{26.27} & 
    \dashedline & 
    \cellcolor[RGB]{230, 242, 230}\textbf{7.78}\sd{1.19} & \cellcolor[RGB]{230, 242, 230}\textbf{25.94}\sd{0.55} & 26.07\sd{0.85} & \cellcolor[RGB]{230, 242, 230}\textbf{5.24}\sd{0.79} & \cellcolor[RGB]{230, 242, 230}\textbf{16.26} \\
    
    \bottomrule
    \end{tabular}%
}
\end{table*}

\subsection{Experimental Setup} \label{subsec:setup}

\textbf{Models and Benchmarks.} Our primary benchmark suite evaluates seven teacher--student pairs from \textbf{Qwen2.5}~\cite{qwen2025qwen25technicalreport} (7B$\to$\{0.5, 1.5, 3\}B), \textbf{Llama}~\cite{grattafiori2024llama3herdmodels} (8B$\to$1B), \textbf{Gemma-3}~\cite{team2025gemma} (4B$\to$1B), \textbf{Qwen-Math}~\cite{yang2024qwen25mathtechnicalreportmathematical} (7B$\to$1.5B), and \textbf{DeepSeek-Coder}~\cite{guo2024deepseek} (6.7B$\to$1.3B). Evaluation spans commonsense (MMLU~\cite{hendrycks2020measuring}, BBH~\cite{suzgun2022challengingbigbenchtaskschainofthought}, ARC-C~\cite{clark2018think}, ThmQA~\cite{chen-etal-2023-theoremqa}), math (GSM8K~\cite{cobbe2021training}, MATH~\cite{hendrycks2021measuring}, Gaokao23~\cite{liao2024mario}), and code (HumanEval~\cite{chen2021evaluating}, MBPP~\cite{austin2021program}). \rev{App.~\ref{app:additional_evidence} adds two further teacher--student settings and open-ended evaluation.} We report \underline{five-seed mean accuracy}. Training and evaluation details are in App.\ref{app:settings}.

\textbf{Baselines.} We compare against three baseline categories: \textbf{(1) Standard off-policy KD.} Hard KD \cite{kim-rush-2016-sequence}, which trains on teacher-generated tokens, and Forward KL distillation \cite{Hinton2015DistillingTK}.
\textbf{(2) Alternative divergence objectives.} Method using Reverse KL \cite{gu2024minillm}, JS divergence \cite{agarwal2024onpolicy}, Total Variation (TV) \cite{wen-etal-2023-f}, Adaptive KL \cite{wu-etal-2025-rethinking}, Skew FKL/RKL \cite{kodistillm,ko2025distillm}, and the $\alpha$--$\beta$ divergence framework \cite{wang2025abkd}. \textbf{(3) On-policy techniques.} Recent methods that train the student on its own generated sequences \cite{gu2024minillm,agarwal2024onpolicy,ko2025distillm}, which directly mitigate exposure bias by aligning training and inference distributions.

\subsection{Q1: Effectiveness of Hybrid Strategy} \label{subsec:scheduling_analysis}

We evaluate various hybrid strategies against pure Hard and Soft KD to answer \textbf{Q1}. From Tab.\ref{tab:simplified_mixed_results_v3}, we observe:
\textbf{1) Neither pure baseline is consistently superior.} Soft KD leads on Llama-3.2-1B (+6.83 average score), while Hard KD leads on Qwen2.5-3B (+0.59). This suggests their complementary potential.
\textbf{2) Hybrids consistently outperform pure KD.} Even simple static weighting improve average scores (+1.69 on Qwen2.5-3B), with adaptive methods (\textit{e.g.,} confidence-, entropy-, risk-guided) providing further gains.
\textbf{3) Curriculum scheduling becomes highly competitive for large capacity gaps} (Llama3.1-8B → 3.2-1B). Its success in this setting supports that prioritizing Bridge regions is essential for stable learning when student capacity is limited.


\subsection{Q2: Universality across Divergence Measures} \label{subsec:universality}


We evaluate our method across various divergences to answer \textbf{Q2} and find: 
1) Our method excels with a fixed Forward KL. It \textbf{outperforms the best non-hybrid baseline} by +1.95 average points on Qwen2.5-3B (Tab.\ref{tab:qwen25_partial}). Consistent gains are also observed across more models (\textit{e.g.,} Llama, Gemma) and domains (\textit{e.g.,} math, coding), as shown in Tabs.\ref{tab:qwen_math_compact}, \ref{tab:deepseek_coder_compact}, and \ref{tab:distillation_final_v2}.   
\textbf{2) The hybrid principle is broadly applicable.} It consistently improves other measures like Reverse or Skew KL (Fig.\ref{fig:benchmark_hard_soft_performance_gain}). 
These results show that the benefits of hybrid supervision come from the adaptive mechanism rather than a specific divergence choice.




\begin{figure}[h]
    \centering
    \includegraphics[width=\linewidth]{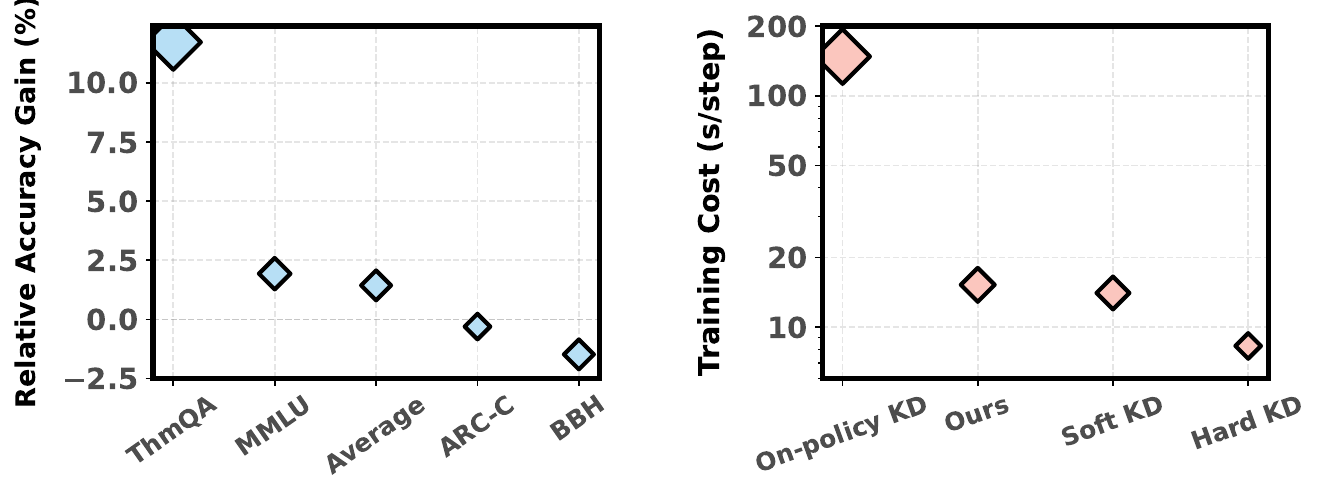}
    \caption{Relative accuracy gain and training cost (s/step) versus On-policy KD under \textit{identical experimental settings} (see App.\ref{app:training_config}).}
    \label{fig:accuracy_and_efficiency}
\end{figure}

\subsection{Q3: Comparison with On-Policy Approach} \label{subsec:on_policy_comparison}

To answer \textbf{Q3}, we compare our method with on-policy KD in the Qwen2.5-7B $\rightarrow$ 3B setting. As shown in Fig.\ref{fig:accuracy_and_efficiency}, our approach (with only static weighting) achieves a 1.43\% average performance gain over this baseline. Crucially, while our method maintains training costs comparable to standard Hard/Soft KD, it is \textbf{9.7x more efficient} than on-policy KD, making it more practical for large-scale deployment.
\rev{Because hybrid supervision modifies the target distribution rather than the prefix source, it can also be combined with on-policy prefix sampling; App.~\ref{app:additional_experimental_evidence} shows that this combination consistently improves over on-policy soft KD across reasoning and code tasks.}


\section{Additional Analysis}



To better understand the behavior of hybrid distillation, we examine its properties from the following two aspects.

\textbf{Impact of Mixing Weight $\lambda$.} We evaluate Hybrid KD across a range of fixed weights. As shown in Fig.\ref{fig:mix_ablation}, intermediate values of $\lambda \in (0, 1)$ consistently outperform both pure hard ($\lambda=0$) and soft ($\lambda=1$) distillation. We also observe that the optimal weight varies across tasks and models. This finding highlights the importance of the adaptive strategies introduced in Sec.\ref{sec:weight_scheduling} based on our Bridge--Garden theory.

\textbf{Entropy Analysis.} We further examine the student's next-token entropy. As illustrated in Fig. \ref{fig:student_entropy}, Hard KD induces sharper (low-entropy) distributions while Soft KD yields flatter ones, aligning with their roles in Bridge and Garden regions. Hybrid KD maintains an intermediate entropy, balancing Bridge precision with Garden diversity to achieve superior performance over single-label baselines.
\rev{App.~\ref{app:additional_evidence} checks the main alternative explanations directly: reversing the confidence or entropy rule hurts performance, global regularization and temperature changes do not consistently match Hybrid KD, and controlled synthetic experiments show that the largest exact $\kappa$ values concentrate on semantic decision states.}

\begin{figure}[t!]
    \centering
    \includegraphics[width=\linewidth]{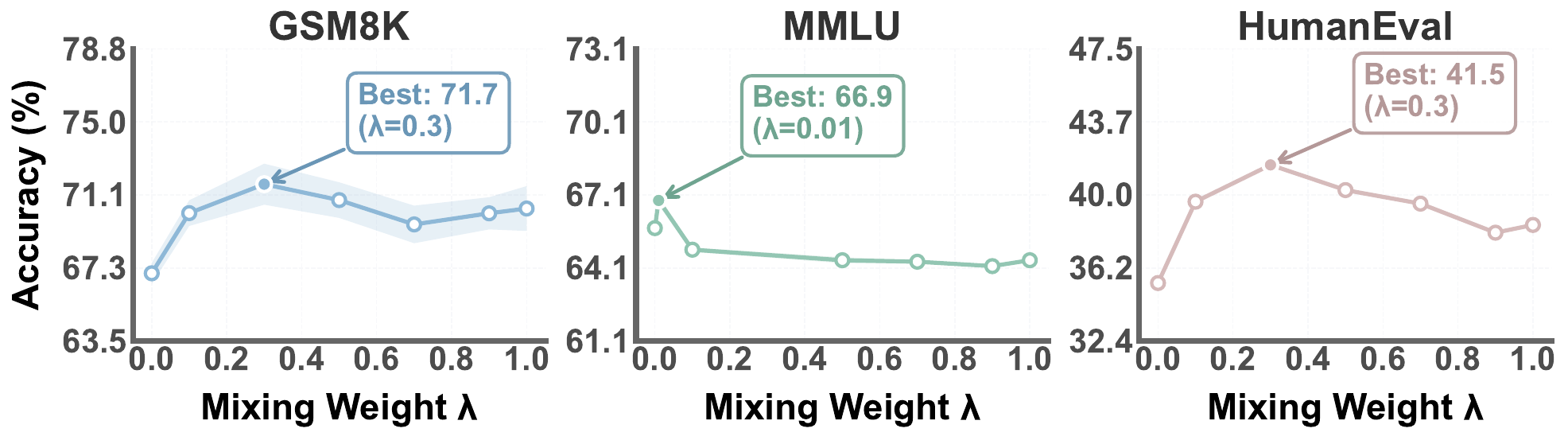}
    \caption{Effect of mixing weight $\lambda$ on student performance.}
    \label{fig:mix_ablation}
\end{figure}

\begin{figure}[t!]
    \centering
    \includegraphics[width=\linewidth]{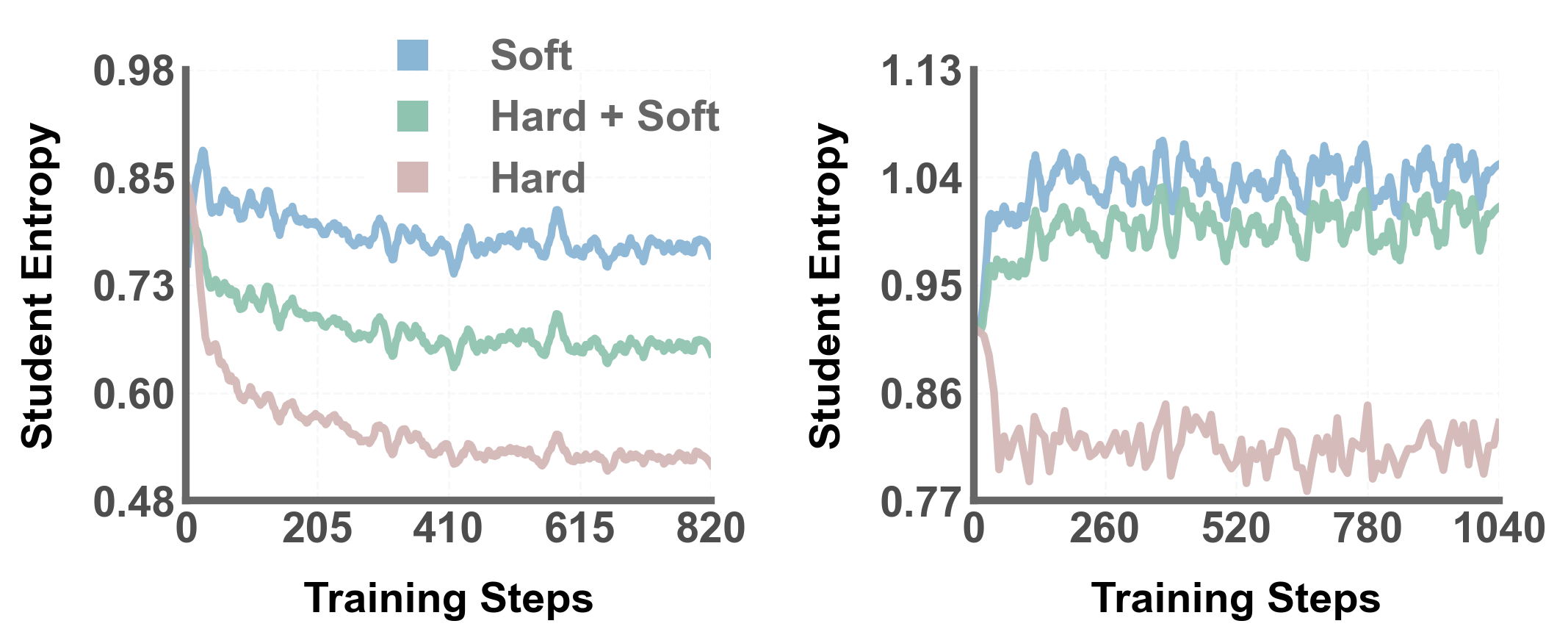}
    \caption{Student entropy during training. Left: DeepSeek-Coder-1.3B on code generation; Right: Qwen2.5-3B on general reasoning.}
    \label{fig:student_entropy}
\end{figure}





\section{Conclusion}

In this work, we identify a hard-label paradox where hard supervision enhances student inference by reducing exposure bias, despite degrading teacher imitation during training. We capture this with a Bridge--Garden partition of generation into exact-prediction regions (Bridges) and flexible regions (Gardens). Hard KD excels at suppressing risky deviations in Bridges, whereas Soft KD specializes in preserving diversity in Gardens. Based on this insight, we develop hybrid strategies that adaptively combine their behaviors. Empirical results show that our methods consistent improvement over existing baselines, with up to \textbf{9.7$\times$} lower training cost.

\section*{Impact Statement}
This work aims to improve the efficiency and reliability of knowledge distillation for language models, potentially reducing inference costs and enabling deployment under tighter compute and energy budgets.
As with most advances in model training and compression, these techniques could be used for beneficial applications (\textit{e.g.,} accessibility, on-device assistance) as well as for misuse.
We encourage future work to evaluate downstream safety implications and to pair efficiency gains with robust safeguards.

\section*{Acknowledgements}
This work was supported in part by the National Natural Science Foundation of China under Grants 62525212, U23B2051, 62236008, 62441232, 62521007, U21B2038, 62576332, and 62502500, in part by the Youth Innovation Promotion Association CAS, in part by the Strategic Priority Research Program of the Chinese Academy of Sciences under Grant XDB0680201, in part by the project ZR2025ZD01 supported by Shandong Provincial Natural Science Foundation, in part by the China National Postdoctoral Program for Innovative Talents under Grant BX20240384, in part by Beijing Natural Science Foundation under Grant L252144, in part by General Program of the Chinese Postdoctoral Science Foundation under Grant 2025M771558, in part by the Beijing Major Science and Technology Project under Contract Z251100008125059, and in part by Beijing Academy of Artificial Intelligence (BAAI).


\bibliography{reference}
\bibliographystyle{icml2026}

\newpage
\appendix
\onecolumn
\vspace*{-1.5cm} 

\part{Appendix}

\section*{\Large{Table of Contents}}

\vspace*{-0.5cm} 
\noindent\rule{\textwidth}{0.4pt} 

\startcontents[sections]

\printcontents[sections]{l}{1}{\setcounter{tocdepth}{3}}

\noindent\rule{\textwidth}{0.4pt} 

\newpage

\section{Related Work}
\label{sec:related_work}

In this section, we review related work and discuss our contributions in the context of recent studies.


\subsection{Soft-Label Knowledge Distillation}
\label{app:div}
The seminal work by \cite{Hinton2015DistillingTK} first proposed leveraging the output distribution of a more powerful teacher model to supervise the training of a student model, minimizing the predictive discrepancy between them via Forward KL divergence. This methodology achieved marked progress across diverse domains \cite{li2023object,wang2021alphanet, tian2025knowledge, li2024detkds, wang2024kill, wang2024llm, zhao2024two, zhao2025balancing, yang2025dirmixe, wang2025unified}  and established a foundational framework for extensive subsequent research \cite{cho2019efficacy, zhao2022decoupled, huang2022knowledge,li2023curriculum,beyer2022knowledge,chen2022knowledge,hao2023one,zheng2024knowledge, sun2024logit, busbridge2025distillation, harutyunyan2023supervision, peng2025pre}. While highly effective, recent studies have noted that Forward KL divergence tends to induce mode-seeking behavior, which can undermine the student’s focused learning of critical categories \cite{gu2024minillm, kodistillm, wen-etal-2023-f, wu-etal-2025-rethinking, wang2025abkd, gu2025miniplm}. Therefore, some studies have turned to alternative distillation divergence objectives, such as Reverse KL divergence \cite{lee2023self, kim2024promptkd, gu2024minillm}, Maximum
Mean Discrepancy \cite{huang2017like}, Wasserstein Distance \cite{lv2024wasserstein}, Total Variation distance \cite{wen-etal-2023-f}, Jensen-Shannon divergence \cite{binici2022preventing,agarwal2024onpolicy}, Skew Forward/Reverse divergence \cite{kodistillm,ko2025distillm, shin2025amid}, and $\alpha$-$\beta$-divergence \cite{wang2025abkd}. There are also approaches that optimize a weighted sum of multiple divergences during training \cite{amara2022bd,wu-etal-2025-rethinking}. 
These methods have achieved marked empirical progress by enabling the student to better match the overall teacher's predictive distribution. 

Crucially, the technique we propose in this work is complementary to these promising methods. As shown in Fig.\ref{fig:benchmark_hard_soft_performance_gain}, our hybrid supervision approach can be seamlessly integrated with various distillation divergences outlined above to achieve further performance gains.

\subsection{Hard-Label Knowledge Distillation}

Despite the success of soft-label distillation, applying it to large language models can face practical constraints: top-tier teacher models are often closed-source (\textit{e.g.,} ChatGPT \cite{achiam2023gpt}), and using them continuously during training remains prohibitively expensive \cite{gu2025miniplm}. To overcome these barriers, hard-label distillation \cite{kim-rush-2016-sequence} uses the teacher model to generate corresponding output sequences for a set of input prompts, thereby constructing a labeled dataset on which the student is trained via standard maximum likelihood estimation \cite{vicuna2023,wang2023self,taori2023stanford,peng2023instruction,wang2025degap}. In practice, this direct approach has yielded solid advances \cite{guo2025deepseek}.

In this work, we revisit the effect of hard labels in LLM distillation. We start from the empirical observation that blending hard and soft labels improves performance, then formalize the cause through our Bridge–Garden framework. Our theory indicates that using only hard or soft labels can result in suboptimal performance, whereas combining both can resolve this limitation.

\subsection{Hard-Soft Hybrid Supervision in Knowledge Distillation}

Several studies in image classification have also investigated strategies to combine hard and soft labels for student training. However, a fundamental distinction exists regarding the nature of the hard labels. Unlike in LLM distillation, where hard labels are typically generated by the teacher (representing merely a subset of the teacher's distribution, as discussed in Sec.~\ref{sec:anomaly})~\cite{kim-rush-2016-sequence}, image classification tasks generally utilize ground-truth labels from the training dataset. In that context, hard labels provide accurate supervision signals even when the teacher makes incorrect predictions. Despite this structural difference, prior research has still achieved significant progress by effectively combining hard and soft labels \cite{zhou2021rethinkingsoftlabelsknowledge, zheng2024knowledge, ren2023tailoring, hamsafety}.

For instance, \cite{zhou2021rethinkingsoftlabelsknowledge} decomposes the knowledge distillation objective via bias-variance analysis and weights the hard and KL losses based on sample difficulty, ensuring the teacher provides better guidance for samples that are difficult for the student. \cite{zheng2024knowledge} proposes using teacher entropy as a weighting factor to focus on samples where the teacher's predictions are more diverse. Additionally, \cite{li2023curriculum, jafari-etal-2021-annealing} balances the contributions of hard and KL losses by scaling the KL loss with a curriculum-based temperature coefficient. \cite{ganguly2024adakd} uses a teacher-loss-based decay function to prioritize easy samples and progressively shift focus to harder instances. \cite{ren2023tailoring} assigns higher loss weights to samples that yield greater gradient similarity between the student's training update and the validation set performance, ensuring the teacher prioritizes teaching what helps the student generalize. \cite{hamsafety} employs soft-label distillation on refusal responses to smooth the loss surface and hard-label supervision on filtered benign samples to mitigate gradient conflicts between safety and downstream tasks.
\rev{Recent token-adaptive methods for autoregressive LMs also adjust the teaching mode or divergence at each token \cite{zhong2024revisiting,jung2025todi}. Our work studies a different axis of adaptation: mixing the teacher-sampled hard token with the teacher's full next-token distribution.}

Despite these successful and mature developments in image classification, \textbf{the application of hybrid supervision in LLM distillation remains largely unexplored, and theoretical analysis explaining its effectiveness is still lacking}. In this work, we formally demonstrate that this approach can be effectively applied to LLM distillation. More importantly, our Bridge-Garden framework reveals that its efficacy stems from better mitigating the complex issue of exposure bias.

\subsection{Exposure Bias in Knowledge Distillation}

A key challenge in distilling autoregressive models is exposure bias \cite{bengio2015scheduled}, which arises from a fundamental distribution mismatch: during training, the student learns from sequences produced by the teacher, yet at inference time it must generate from its own previous outputs. This discrepancy can cause significant degradation in performance.

To address this, on-policy distillation methods \cite{agarwal2024onpolicy,gu2024minillm,kodistillm,rosset2024direct} train the student on its own generated outputs, with the teacher providing corrective feedback. This directly aligns the training and inference distributions and yields effective improvements in practice \cite{yang2025qwen3}. \cite{xu2025speculative} further leverages speculative decoding to use the teacher’s distribution to correct the noise introduced by low-quality early student generations.

Although prior work has proposed various heuristics to mitigate exposure bias, a systematic and principled theoretical understanding of exposure bias in KD has remained largely absent. We fill this gap by introducing the Bridge--Garden framework (Theorem~\ref{cor:eb_cp}) to characterize how exposure bias arises. Grounded in this theory, we develop a family of principled distillation methods for reducing exposure bias effectively. Moreover, a key advantage of our approach is its efficiency: it requires no expensive sampling from the student model during training. This makes it particularly suitable for large-scale industrial applications.


\section{\texorpdfstring{A $\kappa$-Weighted Bound on Exposure Bias (Proof of Thm.~\ref{thm:sensitivity_bound})}{A kappa-Weighted Bound on Exposure Bias (Proof of Thm.)}}
\label{app:proof_sensitivity_bound}

This section proves Thm.\ref{thm:sensitivity_bound}, showing that the exposure bias $\EB(\pi_\theta)$ is upper bounded by a $d_T$-expectation of a pointwise $\kappa$-weighted deviation term plus a quadratic term.

\paragraph{Setup.}
We analyze length-$T$ autoregressive generation (variable-length can be handled by standard padding and masking).
For any policy $\pi$, let $\xi=(s_1,a_1,\ldots,s_T,a_T)$ denote the induced random autoregressive trajectory, where
$s_t=(x,y_{<t})$ and $a_t\in\mathcal V$, and write $\E_{\pi}[\cdot]$ for expectation with respect to this randomness.
Let $d_\pi$ denote the induced prefix distribution, \textit{i.e.,} the distribution of $s_t$ when $t$ is chosen uniformly from $\{1,\dots,T\}$.
We will use the identity: for any measurable $f$,
\begin{equation}
\label{eq:prefix_identity}
\E_{s\sim d_\pi}[f(s)]
=
\frac{1}{T}\sum_{t=1}^T \E_{\pi}[f(s_t)].
\end{equation}

Recall
\[
\ell_{\mathrm{soft}}(s;\theta)\coloneqq \mathbb D\!\big(\pi_T(\cdot\mid s)\,\|\,\pi_\theta(\cdot\mid s)\big),
\qquad
\mathcal L_{d}(\pi_\theta)\coloneqq \E_{s\sim d}\big[\ell_{\mathrm{soft}}(s;\theta)\big],
\]
and the exposure bias (Def.\ref{eq:eb_def})
\[
\EB(\pi_\theta)=\mathcal L_{d_\theta}(\pi_\theta)-\mathcal L_{d_T}(\pi_\theta).
\]
We use the following standard conditions (only to control the quadratic term):
\begin{enumerate}
\item (\emph{bounded loss}) $0\le \ell_{\mathrm{soft}}(s;\theta)\le L_{\max}$ for all relevant prefixes $s$;
\item (\emph{minimum probability along the interpolation}) for $\pi_u\coloneqq (1-u)\pi_T+u\pi_\theta$ and all $u\in[0,1]$,
$\pi_u(a\mid s)\ge \beta$ for all $a\in\mathcal V$ and relevant prefixes $s$;
\item (\emph{prefix concentrability}) there exists $C_{\mathrm{conc}}\ge 1$ such that
$d_{\pi_u}(s)\le C_{\mathrm{conc}}\,d_T(s)$ for all $u\in[0,1]$ and prefixes $s$.
\end{enumerate}
\rev{The first two conditions are regularity conditions for finite-vocabulary softmax models: bounded logits give finite losses and nonzero token probabilities on the prefixes under consideration.
The third condition is the usual overlap requirement used when an off-policy distribution is used to reason about another rollout distribution.
Here it requires the teacher-prefix distribution to cover the prefixes reached along the teacher--student interpolation.
This is the relevant regime for distillation, where teacher and student share the tokenizer and are trained to keep the student close to the teacher.}

\textbf{Restate of Theorem \ref{thm:sensitivity_bound}}
\textit{Under mild conditions (bounded loss and non-vanishing probabilities), let $\Delta\pi_\theta \coloneqq \pi_\theta-\pi_T$. Then the exposure bias $\EB(\pi_\theta)$ satisfies
\[
\EB(\pi_\theta)\ \le\ \E_{s\sim d_{T}}\!\Big[\,F(s,\pi_\theta)\Big],
\]
where for a constant $C_{\rm 2}>0$,
\[
F(s,\pi_\theta)
:= \underbrace{\sum\nolimits_{a} \textcolor{blue}{\kappa(a\!\mid\! s)} \cdot |\Delta\pi_\theta(a\!\mid\! s)|}_{\textbf{$\kappa(a|s)$-weighted deviation}}
\;+ C_{\rm 2}\|\Delta\pi_\theta(\cdot\!\mid\! s)\|_1^2 .
\]}

\begin{proof}
The proof proceeds in several steps:

\textbf{Step 1: A one-dimensional path identity.}
For $u\in[0,1]$, define the linear interpolation
\[
\pi_u\coloneqq (1-u)\pi_T+u\pi_\theta,
\qquad
g(u)\coloneqq \EB(\pi_u)=\mathcal L_{d_{\pi_u}}(\pi_\theta)-\mathcal L_{d_T}(\pi_\theta).
\]
By construction, $g(0)=\EB(\pi_T)=0$ and $g(1)=\EB(\pi_\theta)$.
Under the above regularity conditions, $g$ is twice differentiable on $[0,1]$.
Applying the fundamental theorem of calculus twice yields
\[
g(1)=g(0)+\int_0^1 g'(u)\,du
= g(0) + g'(0) + \int_0^1 \int_0^u g''(t)\,dt\,du.
\]
Since $g(0)=0$, it remains to simplify the double integral. By swapping the order of integration,
\[
\int_0^1 \int_0^u g''(t)\,dt\,du
=
\int_0^1 \left(\int_t^1 du\right) g''(t)\,dt
=
\int_0^1 (1-t)\,g''(t)\,dt.
\]
Renaming $t$ as $u$ gives the identity
\begin{equation}
\label{eq:taylor_identity_used}
\EB(\pi_\theta)
=
g'(0)+\int_0^1 (1-u)\,g''(u)\,du,
\end{equation}
.

\paragraph{Step 2: Rewrite $g(u)$ using a trajectory functional.}
Define
\[
A(\xi)\coloneqq \sum_{t=1}^T \ell_{\mathrm{soft}}(s_t;\theta),
\qquad
J(u)\coloneqq \E_{\pi_u}[A(\xi)].
\]
By Eq.\eqref{eq:prefix_identity},
\[
\mathcal L_{d_{\pi_u}}(\pi_\theta)
=
\E_{s\sim d_{\pi_u}}\!\big[\ell_{\mathrm{soft}}(s;\theta)\big]
=
\frac{1}{T}\E_{\pi_u}\Big[\sum_{t=1}^T \ell_{\mathrm{soft}}(s_t;\theta)\Big]
=
\frac{1}{T}J(u),
\]
and similarly $\mathcal L_{d_T}(\pi_\theta)=\frac{1}{T}J(0)$.
Therefore,
\begin{equation}
\label{eq:g_and_J_relation}
g(u)=\frac{1}{T}\big(J(u)-J(0)\big),
\qquad
g'(u)=\frac{1}{T}J'(u),
\qquad
g''(u)=\frac{1}{T}J''(u).
\end{equation}

\paragraph{Step 3: Compute $g'(0)$ and express it using $\kappa(a\mid s)$.}
By Eq.\eqref{eq:g_and_J_relation}, it suffices to compute $J'(0)$.

\medskip
\noindent\emph{(3.1) Score-function derivative of $J(u)$.}
Let $\Delta\pi_\theta\coloneqq \pi_\theta-\pi_T$.
Let $\pi_u(\xi)$ denote the probability mass of trajectory $\xi$ induced by $\pi_u$.
By autoregressive factorization,
\[
\log \pi_u(\xi)= \sum_{t=1}^T \log \pi_u(a_t\mid s_t),
\qquad
\frac{d}{du}\log \pi_u(\xi)=\sum_{t=1}^T \frac{\Delta\pi_\theta(a_t\mid s_t)}{\pi_u(a_t\mid s_t)}.
\]
Since $A(\xi)$ does not depend on $u$, the score-function identity yields
\[
J'(u)
=
\E_{\pi_u}\!\Bigg[
A(\xi)\sum_{t=1}^T \frac{\Delta\pi_\theta(a_t\mid s_t)}{\pi_u(a_t\mid s_t)}
\Bigg].
\]
Evaluating at $u=0$ gives
\begin{equation}
\label{eq:Jprime0_step3}
J'(0)
=
\E_{\pi_T}\!\Bigg[
A(\xi)\sum_{t=1}^T \frac{\Delta\pi_\theta(a_t\mid s_t)}{\pi_T(a_t\mid s_t)}
\Bigg],
\qquad
g'(0)=\frac{1}{T}J'(0).
\end{equation}

\medskip
\noindent\emph{(3.2) Regroup by prefixes.}
For each prefix $s$ and token $a\in\mathcal V$, define
\[
Q(s,a)\coloneqq \E_{\pi_T}\!\big[A(\xi)\mid s_t=s,\,a_t=a\big],
\qquad
\bar Q(s)\coloneqq \sum_{a\in\mathcal V}\pi_T(a\mid s)\,Q(s,a).
\]
Conditioning on $s_t=s$ and using $a_t\sim \pi_T(\cdot\mid s)$, we have
\[
\E_{\pi_T}\!\Bigg[
A(\xi)\frac{\Delta\pi_\theta(a_t\mid s_t)}{\pi_T(a_t\mid s_t)}
\ \Bigg|\ s_t=s
\Bigg]
=
\sum_{a\in\mathcal V}\Delta\pi_\theta(a\mid s)\,Q(s,a).
\]
Plugging this into Eq.\eqref{eq:Jprime0_step3}, summing over $t$, and using Eq.\eqref{eq:prefix_identity} yields
\begin{equation}
\label{eq:gprime0_Q_step3}
g'(0)
=
\E_{s\sim d_T}\Big[
\sum_{a\in\mathcal V}\Delta\pi_\theta(a\mid s)\,Q(s,a)
\Big].
\end{equation}
Moreover, since $\sum_{a\in\mathcal V}\Delta\pi_\theta(a\mid s)=0$, we can center $Q$:
\begin{equation}
\label{eq:center_Q_step3}
\sum_{a\in\mathcal V}\Delta\pi_\theta(a\mid s)\,Q(s,a)
=
\sum_{a\in\mathcal V}\Delta\pi_\theta(a\mid s)\,\big(Q(s,a)-\bar Q(s)\big).
\end{equation}

\medskip
\noindent\emph{(3.3) Identify $Q(s,a)-\bar Q(s)$ with $\kappa(a\mid s)$.}
Fix $(s,a)$ with $d_T(s)>0$ and consider the single-override policy $\pi^{(s,a)}$ (Def.~\ref{def:sensitivity}).
It coincides with $\pi_T$ except that it deterministically outputs $a$ at prefix $s$.
Thus, conditioned on $s_t=s$, the continuation under $\pi^{(s,a)}$ matches the teacher continuation conditioned on $a_t=a$,
so
\[
\E_{\pi^{(s,a)}}[A(\xi)\mid s_t=s]=Q(s,a),
\qquad
\E_{\pi_T}[A(\xi)\mid s_t=s]=\bar Q(s).
\]
The visitation frequency of prefix $s$ is unchanged by the override before reaching $s$, hence the difference in total loss satisfies
\[
\E_{\pi^{(s,a)}}[A(\xi)]-\E_{\pi_T}[A(\xi)]
=
T\,d_T(s)\,\big(Q(s,a)-\bar Q(s)\big).
\]
Dividing by $T$ and using $\EB(\pi)=\mathcal L_{d_\pi}(\pi_\theta)-\mathcal L_{d_T}(\pi_\theta)
=\frac{1}{T}\big(\E_{\pi}[A(\xi)]-\E_{\pi_T}[A(\xi)]\big)$ gives
\[
\EB(\pi^{(s,a)})=d_T(s)\,\big(Q(s,a)-\bar Q(s)\big).
\]
Therefore, by Def.~\ref{def:sensitivity},
\[
\kappa(a\mid s)=\frac{\EB(\pi^{(s,a)})}{d_T(s)}=Q(s,a)-\bar Q(s).
\]
Substituting into Eq.\eqref{eq:gprime0_Q_step3}--\eqref{eq:center_Q_step3} yields
\begin{equation}
\label{eq:gprime0_kappa_step3}
g'(0)
=
\E_{s\sim d_T}\Big[
\sum_{a\in\mathcal V}\kappa(a\mid s)\,\Delta\pi_\theta(a\mid s)
\Big].
\end{equation}

\paragraph{Step 4: Upper bound the first-order term.} 
Recall that we analyze the typical regime where student deviations from the teacher exacerbate exposure bias, so $\mathsf{EB}(\pi^{(s,a)})\ge 0$ and the associated sensitivity weight satisfies $\kappa(a\mid s)\ge 0$.
Using the triangle inequality, we obtain
\eqref{eq:gprime0_kappa_step3},
\begin{equation}
\label{eq:first_order_bound}
g'(0)
\le
\E_{s\sim d_T}\Big[
\sum_{a\in\mathcal V}\kappa(a\mid s)\,|\Delta\pi_\theta(a\mid s)|
\Big].
\end{equation}

\paragraph{Step 5: Bound the remainder by a quadratic term.}
From Eq.\eqref{eq:g_and_J_relation} and (3.1), define
\[
H_u(\xi)\coloneqq \sum_{t=1}^T \frac{\Delta\pi_\theta(a_t\mid s_t)}{\pi_u(a_t\mid s_t)},
\qquad\text{so that}\qquad
J'(u)=\E_{\pi_u}\big[A(\xi)\,H_u(\xi)\big].
\]
Differentiating once more gives
\[
J''(u)=\E_{\pi_u}\!\Big[A(\xi)\big(H_u(\xi)^2+\partial_u H_u(\xi)\big)\Big].
\]
Since $\partial_u\!\left(\frac{\Delta\pi_\theta(a_t\mid s_t)}{\pi_u(a_t\mid s_t)}\right)
=
-\left(\frac{\Delta\pi_\theta(a_t\mid s_t)}{\pi_u(a_t\mid s_t)}\right)^2$,
we obtain
\[
H_u(\xi)^2+\partial_u H_u(\xi)
=
\sum_{t_1\neq t_2}
\frac{\Delta\pi_\theta(a_{t_1}\mid s_{t_1})}{\pi_u(a_{t_1}\mid s_{t_1})}\cdot
\frac{\Delta\pi_\theta(a_{t_2}\mid s_{t_2})}{\pi_u(a_{t_2}\mid s_{t_2})}.
\]
Therefore, using $g''(u)=\frac{1}{T}J''(u)$,
\begin{equation}
\label{eq:gsecond_cross}
g''(u)
=
\frac{1}{T}\E_{\pi_u}\!\Bigg[
A(\xi)\sum_{t_1\neq t_2}
\frac{\Delta\pi_\theta(a_{t_1}\mid s_{t_1})}{\pi_u(a_{t_1}\mid s_{t_1})}\cdot
\frac{\Delta\pi_\theta(a_{t_2}\mid s_{t_2})}{\pi_u(a_{t_2}\mid s_{t_2})}
\Bigg].
\end{equation}
Let $r(s)\coloneqq \|\Delta\pi_\theta(\cdot\mid s)\|_1$.
By Assumption (ii), $\pi_u(a\mid s)\ge \beta$, hence
\[
\Big|\frac{\Delta\pi_\theta(a_t\mid s_t)}{\pi_u(a_t\mid s_t)}\Big|
\le \frac{|\Delta\pi_\theta(a_t\mid s_t)|}{\beta}
\le \frac{r(s_t)}{\beta}.
\]
Thus
\[
\Bigg|\sum_{t_1\neq t_2}
\frac{\Delta\pi_\theta(a_{t_1}\mid s_{t_1})}{\pi_u(a_{t_1}\mid s_{t_1})}\cdot
\frac{\Delta\pi_\theta(a_{t_2}\mid s_{t_2})}{\pi_u(a_{t_2}\mid s_{t_2})}\Bigg|
\le
\frac{1}{\beta^2}\sum_{t_1\neq t_2} r(s_{t_1})r(s_{t_2})
\le
\frac{T-1}{\beta^2}\sum_{t=1}^T r(s_t)^2,
\]
where we used $\sum_{i\neq j} b_ib_j\le (T-1)\sum_i b_i^2$ for $b_i\ge 0$.
Also $0\le A(\xi)\le T L_{\max}$ by Assumption (i).
Plugging these into Eq.\eqref{eq:gsecond_cross} yields
\[
|g''(u)|
\le
\frac{1}{T}\cdot (T L_{\max})\cdot \frac{T-1}{\beta^2}\,
\E_{\pi_u}\Big[\sum_{t=1}^T r(s_t)^2\Big]
=
\frac{L_{\max}}{\beta^2}\,T(T-1)\,
\E_{s\sim d_{\pi_u}}\big[r(s)^2\big],
\]
where the last equality follows from Eq.\eqref{eq:prefix_identity}.
By Assumption (iii),
$\E_{s\sim d_{\pi_u}}[r(s)^2]\le C_{\mathrm{conc}}\E_{s\sim d_T}[r(s)^2]$.
Therefore,
\[
\sup_{u\in[0,1]} |g''(u)|
\le
\frac{L_{\max}}{\beta^2}\,C_{\mathrm{conc}}\,T(T-1)\,
\E_{s\sim d_T}\big[\|\Delta\pi_\theta(\cdot\mid s)\|_1^2\big].
\]
Finally, by Eq.\eqref{eq:taylor_identity_used},
\[
\Big|\int_0^1(1-u)g''(u)\,du\Big|
\le
\frac{1}{2}\sup_{u\in[0,1]}|g''(u)|
\le
C_{\rm 2}\cdot
\E_{s\sim d_T}\big[\|\Delta\pi_\theta(\cdot\mid s)\|_1^2\big],
\]
with
\begin{equation}
\label{eq:C2_explicit}
C_{\rm 2}
\coloneqq
\frac{L_{\max}}{2\beta^2}\,C_{\mathrm{conc}}\,T(T-1).
\end{equation}

\paragraph{Step 6: Conclude.}
Combining Eq.\eqref{eq:taylor_identity_used}, Eq.\eqref{eq:first_order_bound}, and Step~5 yields
\[
\EB(\pi_\theta)
\le
\E_{s\sim d_T}\Big[
\sum_{a\in\mathcal V}\kappa(a\mid s)\,|\Delta\pi_\theta(a\mid s)|
\;+\;
C_{\rm 2}\,\|\Delta\pi_\theta(\cdot\mid s)\|_1^2
\Big].
\]
Defining
\[
F(s,\pi_\theta)
\coloneqq
\sum_{a\in\mathcal V}\kappa(a\mid s)\,|\Delta\pi_\theta(a\mid s)|
\;+\;
C_{\rm 2}\,\|\Delta\pi_\theta(\cdot\mid s)\|_1^2,
\]
we obtain $\EB(\pi_\theta)\le \E_{s\sim d_T}[F(s,\pi_\theta)]$, which is exactly Thm.~\ref{thm:sensitivity_bound}.
\end{proof}

\section{The Bridge--Garden Upper Bound on Exposure Bias (Proof of Prop.~\ref{cor:eb_cp})}
\label{app:proof_bg_upper_bound}

\textbf{Restate of Proposition \ref{cor:eb_cp}}.
\textit{Using the partition from Def.\ref{def:partition}, the bound from Thm.\ref{thm:sensitivity_bound} decomposes as follows:
\[
F(\pi_\theta) := F_{\mathcal B}(\pi_\theta) + F_{\mathcal G}(\pi_\theta),
\]
where \(F_{\mathcal X}(\pi_\theta) := \E_{s\sim d_{T}}[1_{\mathcal X}(s) F(s,\pi_\theta)]\) for \(\mathcal X\in\{\mathcal B,\mathcal G\}\), and the term \(F(s,\pi_\theta)\) is defined in Thm.\ref{thm:sensitivity_bound}.}

\begin{proof}
By Def.~\ref{def:partition}, $\mathcal B$ and $\mathcal G$ form a partition of the prefix space $\mathcal S$. Hence, for every prefix $s$,
\begin{equation}
\label{eq:BG_indicator_identity}
1_{\mathcal B}(s)+1_{\mathcal G}(s)=1.
\end{equation}

Thm.~\ref{thm:sensitivity_bound} gives
\begin{equation}
\label{eq:BG_start_from_thm}
\EB(\pi_\theta)\ \le\ \E_{s\sim d_T}\big[F(s,\pi_\theta)\big].
\end{equation}
Insert Eq.\eqref{eq:BG_indicator_identity} into the integrand:
\[
F(s,\pi_\theta)
=
\big(1_{\mathcal B}(s)+1_{\mathcal G}(s)\big)F(s,\pi_\theta)
=
1_{\mathcal B}(s)F(s,\pi_\theta)+1_{\mathcal G}(s)F(s,\pi_\theta).
\]
Taking expectation under $d_T$ and using linearity,
\[
\E_{s\sim d_T}\big[F(s,\pi_\theta)\big]
=
\E_{s\sim d_T}\big[1_{\mathcal B}(s)F(s,\pi_\theta)\big]
+
\E_{s\sim d_T}\big[1_{\mathcal G}(s)F(s,\pi_\theta)\big].
\]
By definition, for $\mathcal X\in\{\mathcal B,\mathcal G\}$,
\[
F_{\mathcal X}(\pi_\theta)\ \coloneqq\ \E_{s\sim d_T}\big[1_{\mathcal X}(s)F(s,\pi_\theta)\big],
\]
so
\[
\E_{s\sim d_T}\big[F(s,\pi_\theta)\big]=F_{\mathcal B}(\pi_\theta)+F_{\mathcal G}(\pi_\theta).
\]
Let $F(\pi_\theta)\coloneqq \E_{s\sim d_T}\big[F(s,\pi_\theta)\big]$. Substituting into Eq.\eqref{eq:BG_start_from_thm} yields
\[
\EB(\pi_\theta)\ \le\ F(\pi_\theta)=F_{\mathcal B}(\pi_\theta)+F_{\mathcal G}(\pi_\theta),
\]
which proves the proposition.
\end{proof}

\section{Complementarity Gain (Proof of Thm.\ref{thm:hybrid_improvement})}
\label{app:Complementarity_Gain}

\textbf{Restate of Theorem \ref{thm:hybrid_improvement}}.
\textit{Let \(\pi_{\text{hard}}\) and \(\pi_{\text{soft}}\) be (near-)minimizers of \(\ell_{\text{hard}}\) and \(\ell_{\text{soft}}\), respectively; in late-stage distillation, often,
\[
F_{\mathcal G}(\pi_{\text{soft}})\ <\ F_{\mathcal G}(\pi_{\text{hard}}),
\quad
F_{\mathcal B}(\pi_{\text{hard}})\ <\ F_{\mathcal B}(\pi_{\text{soft}}).
\]
This suggests that for some \(\lambda\in(0,1)\), an optimizer
\(\pi_{\text{hyb}}\in\arg\min\big(\lambda\ell_{\text{soft}}+(1-\lambda)\ell_{\text{hard}}\big)\)
can achieve
\[
F(\pi_{\text{hyb}})\ <\ \min\{F(\pi_{\text{hard}}),\,F(\pi_{\text{soft}})\}.
\]}

\subsection{Late-KD Region-wise Residual Profiles and Complementarity}
\label{app:D}

This section first establishes the region-wise complementarity relations
\[
F_{\mathcal G}(\pi_{\mathrm{soft}})\;<\;F_{\mathcal G}(\pi_{\mathrm{hard}}),
\qquad
F_{\mathcal B}(\pi_{\mathrm{hard}})\;<\;F_{\mathcal B}(\pi_{\mathrm{soft}}),
\]
under verifiable \emph{late-stage} KD structural conditions (Corollary~\ref{cor:D:complementarity}).
Finally, Appendix~\ref{app:E} shows how these region-wise inequalities yield a strict improvement guarantee for the Hybrid update.

\subsubsection{Auxiliary quantities (cf.\ Thm.~\ref{thm:sensitivity_bound})}
\label{app:D:aux}

Recall the Bridge--Garden partition $(\mathcal B,\mathcal G)$ and indicators $1_{\mathcal B},1_{\mathcal G}$ from Def.~4.2.
For $\mathcal X\in\{\mathcal B,\mathcal G\}$, the region-wise bound is
\[
F_{\mathcal X}(\pi)\coloneqq \E_{s\sim d_T}\!\left[1_{\mathcal X}(s)\,F(s,\pi)\right],
\]
where $F(s,\pi)$ and $\Delta_\pi(\cdot\mid s)$ are as in Thm.~\ref{thm:sensitivity_bound}.

We denote the region masses
\begin{equation}
\label{eq:D:region_masses}
p_{\mathcal B}\coloneqq \E[1_{\mathcal B}(s)],\qquad
p_{\mathcal G}\coloneqq \E[1_{\mathcal G}(s)],
\end{equation}
and assume $p_{\mathcal B}>0$ and $p_{\mathcal G}>0$.

For later bounds, we use the decomposition
\begin{align}
K_{\mathcal X}(\pi)
&\coloneqq \E\!\left[1_{\mathcal X}(s)\sum_{a\in\mathcal V}\kappa(a\mid s)\,|\Delta_\pi(a\mid s)|\right],
\label{eq:D:KX_def}\\
\delta_{2,\mathcal X}(\pi)^2
&\coloneqq \E\!\left[1_{\mathcal X}(s)\,\|\Delta_\pi(\cdot\mid s)\|_1^2\right],
\label{eq:D:delta2X_def}
\end{align}
so that
\begin{equation}
\label{eq:D:FX_split}
F_{\mathcal X}(\pi)=K_{\mathcal X}(\pi)+C_2\,\delta_{2,\mathcal X}(\pi)^2.
\end{equation}
We will also use the basic inequality for distributions:
\begin{equation}
\label{eq:D:l1_basic}
0\le \|\Delta_\pi(\cdot\mid s)\|_1 \le 2
\quad\Longrightarrow\quad
\|\Delta_\pi(\cdot\mid s)\|_1^2\le 2\,\|\Delta_\pi(\cdot\mid s)\|_1.
\end{equation}

Finally, define the teacher top token
\begin{equation}
\label{eq:D:b_def}
b(s)\coloneqq \arg\max_{a\in\mathcal V}\pi_T(a\mid s).
\end{equation}

\subsubsection{Late-KD structural conditions}
\label{app:D:assumptions_v2}

\noindent\textbf{(D1) Bridges: teacher nearly deterministic.}
Define the teacher tail mass
\begin{equation}
\label{eq:D:eta_def_v2}
\eta(s)\;:=\;1-\pi_T(b(s)\mid s)\;=\;\sum_{a\neq b(s)}\pi_T(a\mid s).
\end{equation}
Assume that there exists $\varepsilon_{\mathcal B}\in(0,1/2)$ such that
\begin{equation}
\label{eq:D:bridge_teacher_tail_v2}
\eta(s)\;\le\;\varepsilon_{\mathcal B},
\qquad \forall s\ \text{with}\ 1_{\mathcal B}(s)=1.
\end{equation}

\smallskip
\noindent\textbf{(D2) Gardens: teacher far from any one-hot.}
Assume there exists $\eta_{\mathcal G}\in(0,1/2]$ such that for all $s$ with $1_{\mathcal G}(s)=1$,
\begin{equation}
\label{eq:D:garden_not_onehot_v2}
\max_{a\in \mathcal V}\pi_T(a\mid s)\;\le\;1-\eta_{\mathcal G}
\qquad\Longleftrightarrow\qquad
\|\delta_a-\pi_T(\cdot\mid s)\|_1\;\ge\;2\eta_{\mathcal G}\ \ \forall a\in \mathcal V.
\end{equation}

\smallskip
\noindent\textbf{(D3) Optimization effectiveness (late KD).}
Assume there exist $\eta_H,\eta_S>0$ such that
\begin{align}
\mathbb E\!\left[\|\pi_{\mathrm{hard}}(\cdot\mid s)-\delta_{a^*}\|_1\right]
&\le \eta_H,
\label{eq:D:hard_fit_onehot_v2}\\
\mathbb E\!\left[\|\pi_{\mathrm{soft}}(\cdot\mid s)-\pi_T(\cdot\mid s)\|_1\right]
&\le \eta_S.
\label{eq:D:soft_fit_teacher_v2}
\end{align}

\smallskip
\noindent\textbf{(D4) Soft is under-confident on Bridges on average.}
Define
\begin{equation}
\label{eq:D:g_def_v2}
g(s)\;:=\;\big[\pi_T(a^*\mid s)-\pi_{\mathrm{soft}}(a^*\mid s)\big]_+,
\end{equation}
where $a^*$ is the hard-KD training target sampled from the teacher model,
\textit{i.e.,} $a^*\sim\pi_T(\cdot\mid s)$. Assume
\begin{equation}
\label{eq:D:bridge_underconf_v2}
\mathbb E\!\left[1_{\mathcal B}(s)\,g(s)\right]\;\ge\;\Gamma_{\mathcal B},
\qquad \Gamma_{\mathcal B}>0.
\end{equation}

\subsubsection{Region-wise bounds and complementarity}
\label{app:D:bounds_v2}

\begin{theorem}[Late-KD region-wise bounds for $F_{\mathcal B}$ and $F_{\mathcal G}$]
\label{thm:D:bounds_v2}
Under \textnormal{(D1)--(D4)}, define
\begin{equation}
\label{eq:D:AB_alphaG_v2}
A_{\mathcal B}\;:=\;\eta_H+4\,\mathbb E\!\left[1_{\mathcal B}(s)\,\eta(s)\right],
\qquad
\alpha_{\mathcal G}\;:=\;\bigl(2\eta_{\mathcal G}\,p_{\mathcal G}-\eta_H\bigr)_+.
\end{equation}
Then:
\begin{align}
F_{\mathcal B}(\pi_{\mathrm{hard}})
&\le (\kappa_{\max}+2C_2)\,A_{\mathcal B}
\ \le\ (\kappa_{\max}+2C_2)\bigl(\eta_H+4p_{\mathcal B}\varepsilon_{\mathcal B}\bigr),
\label{eq:D:FB_hard_upper_v2}\\
F_{\mathcal G}(\pi_{\mathrm{soft}})
&\le (\kappa_{\max}+2C_2)\,\eta_S,
\label{eq:D:FG_soft_upper_v2}\\
F_{\mathcal G}(\pi_{\mathrm{hard}})
&\ge \frac{C_2}{p_{\mathcal G}}\,\alpha_{\mathcal G}^2,
\label{eq:D:FG_hard_lower_v2}\\
F_{\mathcal B}(\pi_{\mathrm{soft}})
&\ge \frac{4C_2}{p_{\mathcal B}}\,\Gamma_{\mathcal B}^2.
\label{eq:D:FB_soft_lower_v2}
\end{align}
\end{theorem}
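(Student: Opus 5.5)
The plan is to reduce all four inequalities to first-moment bounds on $r_\pi(s):=\|\Delta_\pi(\cdot\mid s)\|_1$ over each region, using the split \eqref{eq:D:FX_split}. I read $\kappa_{\max}$ as a uniform bound $0\le\kappa(a\mid s)\le\kappa_{\max}$; nonnegativity is the regime already adopted in Step~4 of the proof of Thm.~\ref{thm:sensitivity_bound}. Throughout, the expectation $\E$ runs jointly over $s\sim d_T$ and the hard target $a^*\sim\pi_T(\cdot\mid s)$.

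\emph{Upper bounds.} Pointwise, $F(s,\pi)\le \kappa_{\max} r_\pi(s)+C_2 r_\pi(s)^2\le(\kappa_{\max}+2C_2)\,r_\pi(s)$ by \eqref{eq:D:l1_basic}. Hence
\[
F_{\mathcal X}(\pi)\le(\kappa_{\max}+2C_2)\,\E[1_{\mathcal X}(s)\,r_\pi(s)].
\]
For \eqref{eq:D:FG_soft_upper_v2}, drop the indicator and apply \eqref{eq:D:soft_fit_teacher_v2}.

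For \eqref{eq:D:FB_hard_upper_v2}, use the triangle inequality
\[
r_{\mathrm{hard}}(s)\le\|\pi_{\mathrm{hard}}-\delta_{a^*}\|_1+\|\delta_{a^*}-\pi_T(\cdot\mid s)\|_1 .
\]
The first term contributes at most $\eta_H$ by \eqref{eq:D:hard_fit_onehot_v2}. For the second term, note $\|\delta_a-\pi_T\|_1=2(1-\pi_T(a\mid s))$. Averaging over $a^*\sim\pi_T$ gives
\[
2\Bigl(1-\sum_a\pi_T(a\mid s)^2\Bigr)\le 2\bigl(1-\pi_T(b(s)\mid s)^2\bigr)=2\eta(s)\bigl(2-\eta(s)\bigr)\le 4\eta(s).
\]
Multiplying by $1_{\mathcal B}$ yields $A_{\mathcal B}$. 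Then (D1) gives $\E[1_{\mathcal B}\eta]\le p_{\mathcal B}\varepsilon_{\mathcal B}$, which is the second inequality.

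\emph{Lower bounds.} Since $\kappa\ge0$, we have $F_{\mathcal X}(\pi)\ge C_2\,\delta_{2,\mathcal X}(\pi)^2$. By Cauchy--Schwarz (or Jensen under the normalized measure restricted to $\mathcal X$),
\[
\E[1_{\mathcal X}r_\pi^2]\ \ge\ \frac{(\E[1_{\mathcal X}r_\pi])^2}{p_{\mathcal X}},
\]
so it suffices to lower bound the first moment $\E[1_{\mathcal X}r_\pi]$ in each case.

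For hard on Gardens, use the reverse triangle inequality together with (D2):
\[
r_{\mathrm{hard}}\ \ge\ \|\delta_{a^*}-\pi_T\|_1-\|\pi_{\mathrm{hard}}-\delta_{a^*}\|_1\ \ge\ 2\eta_{\mathcal G}-\|\pi_{\mathrm{hard}}-\delta_{a^*}\|_1 .
\]
Multiplying by $1_{\mathcal G}$ and applying \eqref{eq:D:hard_fit_onehot_v2} gives $\E[1_{\mathcal G}r_{\mathrm{hard}}]\ge 2\eta_{\mathcal G}p_{\mathcal G}-\eta_H$. Since $r\ge0$, this is also $\ge\alpha_{\mathcal G}$, and squaring yields \eqref{eq:D:FG_hard_lower_v2}.

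For soft on Bridges, use that $\Delta$ sums to zero:
\[
\|\Delta\|_1\ \ge\ |\Delta(a^*)|+\Bigl|\sum_{a\neq a^*}\Delta(a)\Bigr|\ =\ 2|\Delta(a^*)|\ \ge\ 2g(s).
\]
Then (D4) gives $\E[1_{\mathcal B}r_{\mathrm{soft}}]\ge2\Gamma_{\mathcal B}$, and squaring gives the factor $4\Gamma_{\mathcal B}^2/p_{\mathcal B}$ in \eqref{eq:D:FB_soft_lower_v2}.

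The main obstacle is bookkeeping of the random hard target $a^*$. In (D3) and (D4) it sits inside the expectation, whereas $F$ is a function of $s$ only. I will treat $\pi_{\mathrm{hard}}$ and $\pi_{\mathrm{soft}}$ as fixed in $s$, and average over $a^*$ only in the auxiliary triangle-inequality terms, as above. In the Cauchy--Schwarz steps I will make sure the same joint measure is used on both sides. I will also state the assumption $\kappa\in[0,\kappa_{\max}]$ explicitly at the start.
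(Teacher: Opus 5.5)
Your proposal is correct and follows essentially the same route as the paper. The shared steps are the pointwise bound $F(s,\pi)\le(\kappa_{\max}+2C_2)\,r_\pi(s)$ via $r^2\le 2r$, the triangle and reverse-triangle inequalities against $\delta_{a^*}$ for the two upper bounds and the Garden lower bound, and $F_{\mathcal X}\ge C_2\,\delta_{2,\mathcal X}^2$ combined with Cauchy--Schwarz for both lower bounds. The differences are cosmetic: in the Bridge lower bound you apply Cauchy--Schwarz to $r_{\mathrm{soft}}$ after averaging rather than to $g$ after squaring, and your $2|\Delta(a^*)|$ argument avoids the paper's case split on $g(s)>0$; your explicit assumption $0\le\kappa\le\kappa_{\max}$ and your joint $(s,a^*)$ bookkeeping make precise what the paper leaves implicit.
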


\begin{proof}
We prove the four inequalities in turn.

\smallskip
\noindent\textbf{(1) Upper bound on $F_{\mathcal B}(\pi_{\mathrm{hard}})$.}
Fix $s$ with $1_{\mathcal B}(s)=1$.
By the triangle inequality,
\[
\|\pi_{\mathrm{hard}}(\cdot\mid s)-\pi_T(\cdot\mid s)\|_1
\le
\|\pi_{\mathrm{hard}}(\cdot\mid s)-\delta_{a^*}\|_1
+\|\delta_{a^*}-\pi_T(\cdot\mid s)\|_1.
\]
For the second term, conditioning on $s$ and using $a^*\sim\pi_T(\cdot\mid s)$,
\[
\mathbb E\!\left[\|\delta_{a^*}-\pi_T(\cdot\mid s)\|_1\mid s\right]
=2\Bigl(1-\sum_{a\in\mathcal V}\pi_T(a\mid s)^2\Bigr)
\le 2\bigl(1-\pi_T(b(s)\mid s)^2\bigr)
=4\eta(s)-2\eta(s)^2
\le 4\eta(s),
\]
where we used Eq.\eqref{eq:D:eta_def_v2} in the last two steps.
Multiplying by $1_{\mathcal B}(s)$ and taking expectation gives
\[
\mathbb E\!\left[1_{\mathcal B}(s)\,\|\pi_{\mathrm{hard}}(\cdot\mid s)-\pi_T(\cdot\mid s)\|_1\right]
\le
\mathbb E\!\left[\|\pi_{\mathrm{hard}}(\cdot\mid s)-\delta_{a^*}\|_1\right]
+4\,\mathbb E\!\left[1_{\mathcal B}(s)\eta(s)\right]
\le A_{\mathcal B},
\]
using Eq.\eqref{eq:D:hard_fit_onehot_v2} and Eq.\eqref{eq:D:AB_alphaG_v2},
\[
K_{\mathcal B}(\pi_{\mathrm{hard}})
\le
\kappa_{\max}\,\mathbb E\!\left[1_{\mathcal B}(s)\,\|\Delta_{\pi_{\mathrm{hard}}}(\cdot\mid s)\|_1\right].
\]
By Eq.\eqref{eq:D:l1_basic},
\[
\delta_{2,\mathcal B}(\pi_{\mathrm{hard}})^2
=
\mathbb E\!\left[1_{\mathcal B}(s)\,\|\Delta_{\pi_{\mathrm{hard}}}(\cdot\mid s)\|_1^2\right]
\le
2\,\mathbb E\!\left[1_{\mathcal B}(s)\,\|\Delta_{\pi_{\mathrm{hard}}}(\cdot\mid s)\|_1\right].
\]
Combining with Eq.\eqref{eq:D:FX_split} yields
\[
F_{\mathcal B}(\pi_{\mathrm{hard}})
\le
(\kappa_{\max}+2C_2)\,
\mathbb E\!\left[1_{\mathcal B}(s)\,\|\Delta_{\pi_{\mathrm{hard}}}(\cdot\mid s)\|_1\right]
\le (\kappa_{\max}+2C_2)\,A_{\mathcal B},
\]
which proves the first inequality in Eq.\eqref{eq:D:FB_hard_upper_v2}. The second inequality follows from
\eqref{eq:D:bridge_teacher_tail_v2}, which implies
$\mathbb E[1_{\mathcal B}(s)\eta(s)]\le p_{\mathcal B}\varepsilon_{\mathcal B}$.

\smallskip
\noindent\textbf{(2) Upper bound on $F_{\mathcal G}(\pi_{\mathrm{soft}})$.}
By Eq.\eqref{eq:D:soft_fit_teacher_v2},
\[
\mathbb E\!\left[1_{\mathcal G}(s)\,\|\Delta_{\pi_{\mathrm{soft}}}(\cdot\mid s)\|_1\right]
\le
\mathbb E\!\left[\|\pi_{\mathrm{soft}}(\cdot\mid s)-\pi_T(\cdot\mid s)\|_1\right]
\le \eta_S.
\]
Thus, using Eq.\eqref{eq:D:l1_basic} exactly as above,
\[
K_{\mathcal G}(\pi_{\mathrm{soft}})\le \kappa_{\max}\eta_S,
\qquad
\delta_{2,\mathcal G}(\pi_{\mathrm{soft}})^2 \le 2\eta_S,
\]
and Eq.\eqref{eq:D:FG_soft_upper_v2} follows from Eq.\eqref{eq:D:FX_split}.

\smallskip
\noindent\textbf{(3) Lower bound on $F_{\mathcal G}(\pi_{\mathrm{hard}})$.}
Fix $s$ with $1_{\mathcal G}(s)=1$. By Eq.\eqref{eq:D:garden_not_onehot_v2},
\(
\|\delta_{a^*}-\pi_T(\cdot\mid s)\|_1\ge 2\eta_{\mathcal G}.
\)
By the reverse triangle inequality,
\[
\|\Delta_{\pi_{\mathrm{hard}}}(\cdot\mid s)\|_1
=
\|\pi_{\mathrm{hard}}(\cdot\mid s)-\pi_T(\cdot\mid s)\|_1
\ge
\|\delta_{a^*}-\pi_T(\cdot\mid s)\|_1
-\|\pi_{\mathrm{hard}}(\cdot\mid s)-\delta_{a^*}\|_1
\ge
2\eta_{\mathcal G}-\|\pi_{\mathrm{hard}}(\cdot\mid s)-\delta_{a^*}\|_1.
\]
Multiplying by $1_{\mathcal G}(s)$ and taking expectation yields
\[
\mathbb E\!\left[1_{\mathcal G}(s)\,\|\Delta_{\pi_{\mathrm{hard}}}(\cdot\mid s)\|_1\right]
\ge 2\eta_{\mathcal G}p_{\mathcal G}
-\mathbb E\!\left[\|\pi_{\mathrm{hard}}(\cdot\mid s)-\delta_{a^*}\|_1\right]
\ge 2\eta_{\mathcal G}p_{\mathcal G}-\eta_H,
\]
hence $\mathbb E[1_{\mathcal G}\|\Delta_{\pi_{\mathrm{hard}}}\|_1]\ge \alpha_{\mathcal G}$ by Eq.\eqref{eq:D:AB_alphaG_v2}.
Now apply Cauchy--Schwarz to $X:=1_{\mathcal G}(s)\|\Delta_{\pi_{\mathrm{hard}}}(\cdot\mid s)\|_1\ge 0$:
\[
\mathbb E[X]^2
\le
\mathbb E[1_{\mathcal G}(s)]\cdot
\mathbb E\!\left[1_{\mathcal G}(s)\,\|\Delta_{\pi_{\mathrm{hard}}}(\cdot\mid s)\|_1^2\right]
=
p_{\mathcal G}\,\delta_{2,\mathcal G}(\pi_{\mathrm{hard}})^2.
\]
Therefore $\delta_{2,\mathcal G}(\pi_{\mathrm{hard}})^2\ge \alpha_{\mathcal G}^2/p_{\mathcal G}$.
Finally, since $K_{\mathcal G}(\pi_{\mathrm{hard}})\ge 0$, Eq.\eqref{eq:D:FX_split} gives
\[
F_{\mathcal G}(\pi_{\mathrm{hard}})
\ge
C_2\,\delta_{2,\mathcal G}(\pi_{\mathrm{hard}})^2
\ge
\frac{C_2}{p_{\mathcal G}}\,\alpha_{\mathcal G}^2,
\]
which is Eq.\eqref{eq:D:FG_hard_lower_v2}.

\smallskip
\noindent\textbf{(4) Lower bound on $F_{\mathcal B}(\pi_{\mathrm{soft}})$.}
Fix $s$ with $1_{\mathcal B}(s)=1$.
If $g(s)>0$, then
$\Delta_{\pi_{\mathrm{soft}}}(a^*\mid s)=\pi_{\mathrm{soft}}(a^*\mid s)-\pi_T(a^*\mid s)=-g(s)$.
Using $\sum_{a\in\mathcal V}\Delta_{\pi_{\mathrm{soft}}}(a\mid s)=0$, we have
$\sum_{a\neq a^*}\Delta_{\pi_{\mathrm{soft}}}(a\mid s)=g(s)$ and thus
$\sum_{a\neq a^*}|\Delta_{\pi_{\mathrm{soft}}}(a\mid s)|\ge g(s)$. Hence
\[
\|\Delta_{\pi_{\mathrm{soft}}}(\cdot\mid s)\|_1
=
|\Delta_{\pi_{\mathrm{soft}}}(a^*\mid s)|
+\sum_{a\neq a^*}|\Delta_{\pi_{\mathrm{soft}}}(a\mid s)|
\ge g(s)+g(s)=2g(s),
\]
and the same inequality holds trivially if $g(s)=0$. Consequently,
\[
\delta_{2,\mathcal B}(\pi_{\mathrm{soft}})^2
=
\mathbb E\!\left[1_{\mathcal B}(s)\,\|\Delta_{\pi_{\mathrm{soft}}}(\cdot\mid s)\|_1^2\right]
\ge
4\,\mathbb E\!\left[1_{\mathcal B}(s)\,g(s)^2\right].
\]
By Cauchy--Schwarz, $\mathbb E[1_{\mathcal B}g]^2\le \mathbb E[1_{\mathcal B}]\,\mathbb E[1_{\mathcal B}g^2]
= p_{\mathcal B}\mathbb E[1_{\mathcal B}g^2]$, hence
\[
\mathbb E[1_{\mathcal B}g^2]
\ge \frac{\mathbb E[1_{\mathcal B}g]^2}{p_{\mathcal B}}
\ge \frac{\Gamma_{\mathcal B}^2}{p_{\mathcal B}},
\]
by Eq.\eqref{eq:D:bridge_underconf_v2}. Therefore
\[
\delta_{2,\mathcal B}(\pi_{\mathrm{soft}})^2 \ge \frac{4}{p_{\mathcal B}}\,\Gamma_{\mathcal B}^2.
\]
Since $K_{\mathcal B}(\pi_{\mathrm{soft}})\ge 0$, Eq.\eqref{eq:D:FX_split} implies
\[
F_{\mathcal B}(\pi_{\mathrm{soft}})
\ge C_2\,\delta_{2,\mathcal B}(\pi_{\mathrm{soft}})^2
\ge \frac{4C_2}{p_{\mathcal B}}\,\Gamma_{\mathcal B}^2,
\]
which is Eq.\eqref{eq:D:FB_soft_lower_v2}.
\end{proof}

\smallskip
\noindent\textbf{Remark.}
Thm.~\ref{thm:D:bounds_v2} provides \emph{complementary} upper/lower bounds on the region-wise objectives
$F_{\mathcal B}$ and $F_{\mathcal G}$ for $\pi_{\mathrm{hard}}$ and $\pi_{\mathrm{soft}}$.
To turn these bounds into a \emph{strict} region-wise comparison, one needs a quantitative regime in which
the late-stage optimization residuals are dominated by the region-structural terms.

Specifically, in late KD one typically observes that the Soft-only fit-to-teacher error $\eta_S$ becomes small,
and the Hard-only Bridge mismatch term
\[
A_{\mathcal B}=\eta_H+4\,\mathbb E_{s\sim d_T}\!\big[1_{\mathcal B}(s)\eta(s)\big]
\]
also becomes small as the hard objective converges and the teacher is nearly deterministic on Bridges.
At the same time, the region structure does not vanish: Gardens remain far from one-hot (captured by $\alpha_{\mathcal G}$),
and Soft-only retains a nontrivial average under-confidence on Bridges (captured by $\Gamma_{\mathcal B}$).
Empirically, we also observe a consistent contrast between the two endpoints:
the Hard-only solution $\pi_{\mathrm{hard}}$ tends to produce more peaked output distributions,
whereas the Soft-only solution $\pi_{\mathrm{soft}}$ tends to produce smoother ones; see Fig.\ref{fig:student_entropy}.

In this domination regime, the strict ordering follows whenever the residual upper bounds fall below the
structural lower bounds, namely,
\begin{equation}
\label{eq:D:sep_conditions}
\underbrace{(\kappa_{\max}+2C_2)\,\eta_S}_{\text{Soft upper bound on }F_\mathcal{G}(\pi_{\mathrm{soft}})}
\ <\
\underbrace{\frac{C_2}{p_{\mathcal G}}\,\alpha_{\mathcal G}^2}_{\text{Hard lower bound on }F_\mathcal{G}(\pi_{\mathrm{hard}})},
\qquad
\underbrace{(\kappa_{\max}+2C_2)\,A_{\mathcal B}}_{\text{Hard upper bound on }F_\mathcal{B}(\pi_{\mathrm{hard}})}
\ <\
\underbrace{\frac{4C_2}{p_{\mathcal B}}\,\Gamma_{\mathcal B}^2}_{\text{Soft lower bound on }F_\mathcal{B}(\pi_{\mathrm{soft}})}.
\end{equation}

\begin{corollary}[Region-wise complementarity inequalities]
\label{cor:D:complementarity}
Under the conditions of Theorem~\ref{thm:D:bounds_v2} and the domination regime Eq.\eqref{eq:D:sep_conditions},
\[
F_\mathcal{G}(\pi_{\mathrm{soft}})\ <\ F_\mathcal{G}(\pi_{\mathrm{hard}}),
\qquad
F_\mathcal{B}(\pi_{\mathrm{hard}})\ <\ F_\mathcal{B}(\pi_{\mathrm{soft}}).
\]
\end{corollary}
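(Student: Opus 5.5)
The corollary follows by chaining the four inequalities of Theorem~\ref{thm:D:bounds_v2} through the strict separation conditions Eq.\eqref{eq:D:sep_conditions}. No new estimate is needed. The two inequalities are treated separately, each as a sandwich of the form
\[
\text{upper bound on one endpoint}\;<\;\text{lower bound on the other endpoint}.
\]

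\textbf{Garden inequality.} First I check that the hypotheses (D1)--(D4) are in force, so that Theorem~\ref{thm:D:bounds_v2} applies. The second inequality of Theorem~\ref{thm:D:bounds_v2}, Eq.\eqref{eq:D:FG_soft_upper_v2}, gives
\[
F_{\mathcal G}(\pi_{\mathrm{soft}})\le(\kappa_{\max}+2C_2)\,\eta_S .
\]
The first condition in Eq.\eqref{eq:D:sep_conditions} says this is strictly less than $\frac{C_2}{p_{\mathcal G}}\alpha_{\mathcal G}^2$. The third inequality, Eq.\eqref{eq:D:FG_hard_lower_v2}, gives
\[
\frac{C_2}{p_{\mathcal G}}\alpha_{\mathcal G}^2\le F_{\mathcal G}(\pi_{\mathrm{hard}}).
\]
Transitivity, with the middle link strict, yields $F_{\mathcal G}(\pi_{\mathrm{soft}})<F_{\mathcal G}(\pi_{\mathrm{hard}})$.

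\textbf{Bridge inequality.} The argument is symmetric. Eq.\eqref{eq:D:FB_hard_upper_v2} bounds $F_{\mathcal B}(\pi_{\mathrm{hard}})$ above by $(\kappa_{\max}+2C_2)A_{\mathcal B}$. The second condition of Eq.\eqref{eq:D:sep_conditions} makes this strictly smaller than $\frac{4C_2}{p_{\mathcal B}}\Gamma_{\mathcal B}^2$. Eq.\eqref{eq:D:FB_soft_lower_v2} bounds that quantity by $F_{\mathcal B}(\pi_{\mathrm{soft}})$. Together these give $F_{\mathcal B}(\pi_{\mathrm{hard}})<F_{\mathcal B}(\pi_{\mathrm{soft}})$.

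\textbf{Points to check.} The only real obstacle is bookkeeping: the bounds must be compared under the same expectation $\E_{s\sim d_T}$ and with the same constant $C_2$ from Eq.\eqref{eq:C2_explicit}. I will also check that $p_{\mathcal B},p_{\mathcal G}>0$, as assumed after Eq.\eqref{eq:D:region_masses}, so the lower bounds are well defined. The quantity $\kappa_{\max}$ must be read as $\sup_{s,a}\kappa(a\mid s)$, which is finite under bounded loss because $|\kappa(a\mid s)|\le TL_{\max}$. The lower bounds drop the nonnegative $K_{\mathcal X}$ terms, which is valid because Step~4 already assumes $\kappa(a\mid s)\ge0$. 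With these checks done, the corollary is immediate.
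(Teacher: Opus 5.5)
Your proposal is correct and matches the paper's proof: each inequality is obtained by sandwiching the corresponding strict condition of Eq.~\eqref{eq:D:sep_conditions} between the matching upper and lower bounds of Theorem~\ref{thm:D:bounds_v2}. Your extra checks (finite $\kappa_{\max}\le TL_{\max}$, $p_{\mathcal B},p_{\mathcal G}>0$, $\kappa\ge0$ for dropping $K_{\mathcal X}$) are already covered by the theorem's hypotheses and proof, so they add nothing new but do no harm.
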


\begin{proof}
Using Eq.\eqref{eq:D:FG_soft_upper_v2} and Eq.\eqref{eq:D:FG_hard_lower_v2},
\[
F_\mathcal{G}(\pi_{\mathrm{soft}})
\ \le\ (\kappa_{\max}+2C_2)\,\eta_S
\ <\ \frac{C_2}{p_{\mathcal G}}\,\alpha_{\mathcal G}^2
\ \le\ F_\mathcal{G}(\pi_{\mathrm{hard}}),
\]
where the strict inequality is the first condition in Eq.\eqref{eq:D:sep_conditions}.
Using Eq.\eqref{eq:D:FB_hard_upper_v2} and Eq.\eqref{eq:D:FB_soft_lower_v2},
\[
F_\mathcal{B}(\pi_{\mathrm{hard}})
\ \le\ (\kappa_{\max}+2C_2)\,A_{\mathcal B}
\ <\ \frac{4C_2}{p_{\mathcal B}}\,\Gamma_{\mathcal B}^2
\ \le\ F_\mathcal{B}(\pi_{\mathrm{soft}}),
\]
where the strict inequality is the second condition in Eq.\eqref{eq:D:sep_conditions}.
\end{proof}

\subsection{Cross-direction Descent and Hybrid Improvement}
\label{app:E}

This subsection proves the main-text claim (Theorem \ref{thm:hybrid_improvement}) that a one-step hybrid update can strictly reduce the bound $F$ compared with both pure KD solutions, $\pi_{\mathrm{soft}}$ (Soft-only) and $\pi_{\mathrm{hard}}$ (Hard-only).
The idea is to show that $F$ has a strictly negative one-sided directional derivative along the cross direction at each of these two solutions, and then pick a convex combination of the Soft/Hard directions with a small step size to guarantee a strict decrease (\textbf{see Theorem} \ref{thm:E:hybrid_strict_improve}).

\subsubsection{Setup and structural assumptions}
\label{app:E:setup}

\smallskip
\noindent\textbf{Recall (exposure-bias bound).}
From App.\ref{app:D}, we use the exposure-bias bound
\[
F(\pi)=F_{\mathcal B}(\pi)+F_{\mathcal G}(\pi),
\qquad
F_X(\pi)=K_X(\pi)+C_2\,\delta_{2,X}(\pi)^2,\quad X\in\{\mathcal B,\mathcal G\}.
\]

\smallskip
\noindent\textbf{Feasible Soft/Hard interpolations.}
For any policy $\pi$ and $t\in[0,1]$, define
\begin{align}
(\mathsf T_S^t\pi)(\cdot\mid s)
&:= (1-t)\pi(\cdot\mid s)+t\,\pi_T(\cdot\mid s),
\label{eq:E:TS_def}\\
(\mathsf T_H^t\pi)(\cdot\mid s)
&:= (1-t)\pi(\cdot\mid s)+t\,\delta_{a^*}(\cdot),
\label{eq:E:TH_def}
\end{align}
where $a^*\sim\pi_T(\cdot\mid s)$ is the hard label token in Eq.~(3).
The corresponding feasible directions are
\begin{equation}
v_S(\pi):=\pi_T-\pi,
\qquad
v_H(\pi):=\delta_{a^*}-\pi .
\label{eq:E:vS_vH}
\end{equation}

\smallskip
\noindent\textbf{Right directional derivative.}
For any functional $J[\pi]$ and any feasible operator $\mathsf T^t$,
\begin{equation}
D_{\mathsf T}^{+}J[\pi]
:=\left.\frac{d}{dt}\,J[\mathsf T^t\pi]\right|_{t=0^+}.
\label{eq:E:right_dirderiv}
\end{equation}

\smallskip
\noindent\textbf{Endpoint asymmetry assumptions.}
These conditions formalize complementarity at $\pi_{\mathrm{soft}}$ (Soft-only)
and $\pi_{\mathrm{hard}}$ (Hard-only).
Empirically, we also find these endpoint asymmetries to be consistently observable across models and datasets:
the Soft-only solution typically yields a smoother, less confident (higher-entropy) distribution, whereas the Hard-only
solution yields a sharper (lower-entropy) distribution and can underfit the teacher's distributional
diversity in the Garden region (see Sec.\ref{sec:main_experiments}).

\smallskip
\noindent\textbf{(E1) Oversmoothing at the Soft-only endpoint.}
Assume $g(s)>0$ almost surely, where $g(s)$ is defined in Eq.\eqref{eq:D:g_def_v2}.

\smallskip
\noindent\textbf{(E2) Nondegenerate mean gap.}
\begin{equation}
\mathbb E_{s\sim d_T}[g(s)]\ \ge\ \Gamma_{\rm OS}\ >\ 0.
\label{eq:E:OS_gap}
\end{equation}

\smallskip
\noindent\textbf{(E3) $\kappa$-shift at $\pi_{\mathrm{soft}}$.}
Under (E1)--(E2), $\pi_{\mathrm{soft}}$ is oversmoothed and therefore assigns insufficient probability to the
teacher-sampled token $a^*$, with the remaining mass necessarily redistributed over $a\neq a^*$.
Assumption (E3) formalizes that, in $\kappa$-weighted terms, this mismatch is primarily attributable to tokens that
$\pi_{\mathrm{soft}}$ \emph{over-assigns} relative to $\pi_T$.

Let $d_a(s):=\pi_{\mathrm{soft}}(a\mid s)-\pi_T(a\mid s)$ and define
\[
\mathcal P(s):=\{a\neq a^*: d_a(s)>0\},
\qquad
\mathcal N(s):=\{a\neq a^*: d_a(s)\le 0\}.
\]
Assume that almost surely,
\begin{equation}
\sum_{a\in\mathcal N(s)} \kappa(a\mid s)\,\pi_{\mathrm{soft}}(a\mid s)
\ \le\
\kappa(a^*\mid s)\bigl(1-\pi_{\mathrm{soft}}(a^*\mid s)\bigr)
\ +\
\sum_{a\in\mathcal P(s)} \kappa(a\mid s)\,\pi_{\mathrm{soft}}(a\mid s).
\label{eq:E:kappa_shift}
\end{equation}

\begin{remark}
Eq.~\eqref{eq:E:kappa_shift} captures a typical oversmoothing effect at $\pi_{\mathrm{soft}}$:
because $\pi_{\mathrm{soft}}$ is overly smooth, it assigns nontrivial probability mass to some non-$a^*$ tokens with large
$\kappa(a\mid s)$, and in particular over-assigns them relative to the teacher (i.e., $d_a(s)>0$, $a\in\mathcal P(s)$).
As a result, the $\kappa$-weighted mismatch is mainly driven by these high-$\kappa$ over-assigned tokens.
\end{remark}

\smallskip
\noindent\textbf{(E4) Hard distillation leaves residual error in the Garden region (follows directly from Theorem~\ref{thm:D:bounds_v2}).}
\begin{equation}
\delta_{2,\mathcal G}(\pi_{\mathrm{hard}})\ \ge\ \underline\delta_{\mathcal G}\ >\ 0.
\label{eq:E:H_res_G}
\end{equation}

\subsubsection{Directional lemmas}
\label{app:E:main}

\begin{lemma}[Right directional derivative of $|\cdot|$]
\label{lem:E:abs_right}
For any $u,w\in\mathbb R$, $\phi(t):=|u+t w|$ has a right derivative at $t=0$ given by
\[
\phi'_+(0)=
\begin{cases}
\mathrm{sign}(u)\,w, & u\neq 0,\\
|w|, & u=0.
\end{cases}
\]
\end{lemma}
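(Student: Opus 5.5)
We will prove Lemma~\ref{lem:E:abs_right} directly from the definition of the right derivative, $\phi'_+(0)=\lim_{t\to 0^+}\bigl(|u+tw|-|u|\bigr)/t$, splitting into the two cases in the statement.

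\emph{Case $u\neq 0$.} The idea is that the sign of $u+tw$ is locally constant. For $0<t<|u|/(|w|+1)$ we have $|tw|<|u|$, so $u+tw$ has the same sign as $u$. Hence $|u+tw|=\mathrm{sign}(u)(u+tw)=|u|+t\,\mathrm{sign}(u)\,w$. The difference quotient therefore equals $\mathrm{sign}(u)\,w$ identically on this interval, and the limit is $\mathrm{sign}(u)\,w$. In fact $\phi$ is differentiable at $0$ in this case; only the right limit is needed here.

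\emph{Case $u=0$.} For every $t>0$ we have $\phi(t)=|tw|=t|w|$, so the difference quotient is exactly $|w|$ and the right derivative is $|w|$. The derivative is genuinely one-sided: the left quotient equals $-|w|$. This is why only right derivatives $D^{+}_{\mathsf T}$ are used in Eq.~\eqref{eq:E:right_dirderiv}.

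The main point of care is the positivity threshold for $t$ in the first case; adding $1$ to the denominator handles $w=0$. In later use, the lemma is applied coordinatewise to $|\Delta_\pi(a\mid s)+t\,v(a\mid s)|$ with $v\in\{v_S,v_H\}$, and the results are summed over $a$ with weights $\kappa(a\mid s)$. Exchanging the sum with the limit is harmless because $\mathcal V$ is finite. Exchanging the expectation over $s$ with the limit is justified by dominated convergence, since the quotient is bounded by $|w|\le 2$ by the triangle inequality.
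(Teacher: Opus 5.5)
Your proof is correct. The paper states this lemma without proof, treating it as an elementary fact, and your two-case computation of the difference quotient is exactly the standard argument it relies on: the sign of $u+tw$ is locally constant when $u\neq 0$, and $\phi(t)=t|w|$ exactly when $u=0$.

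One small caveat applies to your closing remark, which goes beyond the lemma itself. Once the $\kappa(a\mid s)$ weights are included, the dominating bound for the difference quotient becomes $\sum_a \kappa(a\mid s)\,|v(a\mid s)|\le 2\kappa_{\max}$. The interchange with $\mathbb{E}_{s\sim d_T}$ therefore also needs the assumption $\kappa_{\max}<\infty$, which the paper makes in Lemma~\ref{lem:E:LipF}.
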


\begin{lemma}[Soft interpolation contracts $F_X$]
\label{lem:E:soft_contract}
Fix any policy $\pi$ and $X\in\{\mathcal B,\mathcal G\}$. Let $\pi_t:=\mathsf T_S^t\pi$.
Then for all $t\in[0,1]$,
\[
K_X(\pi_t)=(1-t)K_X(\pi),
\qquad
\delta_{2,X}(\pi_t)^2=(1-t)^2\delta_{2,X}(\pi)^2,
\]
and hence
\begin{equation}
D_{\mathsf T_S}^{+}F_X(\pi)
= -K_X(\pi)-2C_2\delta_{2,X}(\pi)^2\ \le\ 0.
\label{eq:E:soft_dirderiv}
\end{equation}
\end{lemma}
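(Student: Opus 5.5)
The key observation is that the soft interpolation $\mathsf T_S^t$ rescales the deviation from the teacher pointwise. For every prefix $s$ and token $a$, Eq.~\eqref{eq:E:TS_def} gives
\[
\Delta_{\pi_t}(a\mid s)=\pi_t(a\mid s)-\pi_T(a\mid s)=(1-t)\bigl(\pi(a\mid s)-\pi_T(a\mid s)\bigr)=(1-t)\,\Delta_\pi(a\mid s).
\]
This holds because $\pi_t-\pi_T=(1-t)\pi+t\pi_T-\pi_T=(1-t)(\pi-\pi_T)$. It is a purely algebraic identity, valid for all $t\in[0,1]$ and independent of the region $X$. I would record it first and treat the rest as consequences.

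Next I substitute the identity into the two pieces of the decomposition Eq.~\eqref{eq:D:FX_split}. For $K_X$ (Eq.~\eqref{eq:D:KX_def}), since $1-t\ge 0$ we have $|\Delta_{\pi_t}(a\mid s)|=(1-t)|\Delta_\pi(a\mid s)|$. The weights $\kappa(a\mid s)$ and the indicator $1_X(s)$ are fixed throughout. So the factor $(1-t)$ pulls out of the sum over $a$ and out of the $d_T$-expectation by linearity, giving $K_X(\pi_t)=(1-t)K_X(\pi)$. For $\delta_{2,X}$ (Eq.~\eqref{eq:D:delta2X_def}), homogeneity of the $\ell_1$ norm gives $\|\Delta_{\pi_t}(\cdot\mid s)\|_1^2=(1-t)^2\|\Delta_\pi(\cdot\mid s)\|_1^2$, and taking expectations yields $\delta_{2,X}(\pi_t)^2=(1-t)^2\delta_{2,X}(\pi)^2$.

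Combining the two, $F_X(\pi_t)=(1-t)K_X(\pi)+C_2(1-t)^2\delta_{2,X}(\pi)^2$. This is a polynomial in $t$, so the right derivative at $0$ in the sense of Eq.~\eqref{eq:E:right_dirderiv} is the ordinary derivative, namely $-K_X(\pi)-2C_2\delta_{2,X}(\pi)^2$. Its nonpositivity follows from $\kappa(a\mid s)\ge 0$, as assumed in Step~4 of the proof of Thm.~\ref{thm:sensitivity_bound}, so that $K_X\ge 0$, together with $C_2>0$ and $\delta_{2,X}^2\ge 0$.

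The only point that needs care is whether $\kappa$ is held fixed along the path. Per the remark after Def.~\ref{def:sensitivity}, $\kappa$ is evaluated with the current student, so it could in principle vary with $\pi_t$. I will state explicitly that, as in App.~\ref{app:D:aux}, the functional $F_X$ is treated with $\kappa$ and $C_2$ frozen as fixed weights (and $d_T$ as the fixed prefix distribution). Once that convention is made explicit, no analytic obstacle remains; in particular, Lemma~\ref{lem:E:abs_right} is not needed, because the scaling is exact even where $\Delta_\pi(a\mid s)=0$.
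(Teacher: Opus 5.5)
Your proposal is correct and follows the paper's proof essentially verbatim. The paper also observes $\Delta_{\pi_t}=(1-t)\Delta_\pi$ (hence $r_{\pi_t}=(1-t)r_\pi$), substitutes this into the definitions of $K_X$ and $\delta_{2,X}$, and differentiates at $t=0^+$. Your explicit remarks are worth keeping: that $\kappa$ and $C_2$ are held fixed along the path, and that the sign $\le 0$ rests on the standing assumption $\kappa(a\mid s)\ge 0$ from Step~4. The paper leaves both points implicit.
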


\begin{proof}
Under $\pi_t=(1-t)\pi+t\pi_T$, we have $\Delta_{\pi_t}=(1-t)\Delta_\pi$ and thus $r_{\pi_t}(s)=(1-t)r_\pi(s)$.
Plugging into the definitions of $K_X$ and $\delta_{2,X}$ yields the claimed scalings; differentiating at $t=0^+$ gives Eq.\eqref{eq:E:soft_dirderiv}.
\end{proof}

\begin{lemma}[Hard step at $\pi_{\mathrm{soft}}$ strictly decreases $F$]
\label{lem:E:hard_strict}
Let $\pi_t:=\mathsf T_H^t\pi_{\mathrm{soft}}$.
Under Eq.\eqref{eq:E:kappa_shift} and \textnormal{(E1)--(E2)}, for each $X\in\{\mathcal B,\mathcal G\}$,
\begin{align}
\left.\frac{d}{dt}\delta_{2,X}(\pi_t)^2\right|_{t=0^+}
&\le -8\,\mathbb E_{s\sim d_T}\!\left[1_X(s)\,g(s)^2\right],
\label{eq:E:hard_delta2}\\
\left.\frac{d}{dt}K_X(\pi_t)\right|_{t=0^+}
&\le 0.
\label{eq:E:hard_K}
\end{align}
Consequently,
\begin{equation}
D_{\mathsf T_H}^{+}F(\pi_{\mathrm{soft}})
\le -8C_2\,\mathbb E_{s\sim d_T}[g(s)^2]
\le -8C_2\Gamma_{\rm OS}^2
<0.
\label{eq:E:hard_F}
\end{equation}
\end{lemma}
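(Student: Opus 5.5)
The plan is to differentiate pointwise in $s$ (and in the sampled target $a^*$), using Lemma~\ref{lem:E:abs_right} coordinate by coordinate, and only then take expectations over $d_T$. Along $\pi_t=\mathsf T_H^t\pi_{\mathrm{soft}}=(1-t)\pi_{\mathrm{soft}}+t\delta_{a^*}$ we have
\[
\Delta_{\pi_t}(a\mid s)=d_a(s)+t\,w_a,\qquad w_a:=\delta_{a^*}(a)-\pi_{\mathrm{soft}}(a\mid s).
\]
This is affine in $t$, so each $|\Delta_{\pi_t}(a\mid s)|$ has a right derivative given by Lemma~\ref{lem:E:abs_right}. The difference quotients are uniformly bounded by $|w_a|\le 1$, and the vocabulary is finite. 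Dominated convergence therefore lets me exchange $\frac{d}{dt}\big|_{0^+}$ with $\E_{s\sim d_T}[1_X(s)\,\cdot\,]$ and with the sum over $a$. I expect this interchange, together with the bookkeeping of the sign cases, to be the only technical obstacle; it is routine once the bounds are noted.

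\emph{Coordinatewise derivatives.} By (E1), $d_{a^*}(s)=-g(s)<0$ almost surely and $w_{a^*}=1-\pi_{\mathrm{soft}}(a^*\mid s)$. Hence
\[
\tfrac{d}{dt}|\Delta_{\pi_t}(a^*\mid s)|\big|_{0^+}=-\bigl(1-\pi_{\mathrm{soft}}(a^*\mid s)\bigr).
\]
For $a\neq a^*$ we have $w_a=-\pi_{\mathrm{soft}}(a\mid s)$, so the right derivative is $-\pi_{\mathrm{soft}}(a\mid s)$ when $a\in\mathcal P(s)$. When $a\in\mathcal N(s)$ it is $+\pi_{\mathrm{soft}}(a\mid s)$; this covers both $d_a<0$ and, via the $u=0$ case of Lemma~\ref{lem:E:abs_right}, $d_a=0$.

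\emph{The $K_X$ term.} Weighting these derivatives by $\kappa(a\mid s)\ge 0$, which is the regime of Step 4 of Thm.~\ref{thm:sensitivity_bound}, and summing gives
\[
-\kappa(a^*\mid s)\bigl(1-\pi_{\mathrm{soft}}(a^*\mid s)\bigr)-\sum_{\mathcal P(s)}\kappa\,\pi_{\mathrm{soft}}+\sum_{\mathcal N(s)}\kappa\,\pi_{\mathrm{soft}}.
\]
This is $\le 0$ almost surely by exactly Eq.~\eqref{eq:E:kappa_shift}. Multiplying by $1_X(s)$ and taking expectations yields Eq.~\eqref{eq:E:hard_K}.

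\emph{The $\delta_{2,X}$ term.} Write $r_t(s)=\|\Delta_{\pi_t}(\cdot\mid s)\|_1$. Summing the unweighted derivatives and using $1-\pi_{\mathrm{soft}}(a^*\mid s)=\sum_{\mathcal P}\pi_{\mathrm{soft}}+\sum_{\mathcal N}\pi_{\mathrm{soft}}$ gives
\[
r'_{0^+}(s)=-2\sum_{a\in\mathcal P(s)}\pi_{\mathrm{soft}}(a\mid s).
\]
Since $\sum_a d_a=0$, we get $\sum_{a\neq a^*}d_a=g(s)$, hence $\sum_{\mathcal P}d_a\ge g(s)$. On $\mathcal P$ we also have $\pi_{\mathrm{soft}}(a\mid s)\ge d_a(s)$ because $\pi_T\ge0$. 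Together these give $r'_{0^+}(s)\le -2g(s)\le 0$. From part (4) of Thm.~\ref{thm:D:bounds_v2} we have $r_0(s)\ge 2g(s)$, so
\[
\tfrac{d}{dt}r_t(s)^2\big|_{0^+}=2r_0r'_{0^+}\le 2\cdot 2g\cdot(-2g)=-8g(s)^2.
\]
Multiplying by $1_X$ and taking expectations gives Eq.~\eqref{eq:E:hard_delta2}.

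\emph{Conclusion.} Combine the two terms through $F_X=K_X+C_2\delta_{2,X}^2$ and sum over $X\in\{\mathcal B,\mathcal G\}$, using $1_{\mathcal B}+1_{\mathcal G}=1$. This gives $D^+_{\mathsf T_H}F(\pi_{\mathrm{soft}})\le -8C_2\E[g^2]$. Jensen's inequality, $\E[g^2]\ge \E[g]^2\ge\Gamma_{\rm OS}^2$ by (E2), then yields Eq.~\eqref{eq:E:hard_F}.
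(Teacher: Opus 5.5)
Your proposal is correct and is essentially the paper's proof. It uses the same right-derivative case split over $a^*$, $\mathcal P(s)$ and $\mathcal N(s)$, the same bound $r'_{0^+}(s)=-2\sum_{a\in\mathcal P(s)}\pi_{\mathrm{soft}}(a\mid s)\le -2g(s)$ combined with $r_0(s)\ge 2g(s)$, and Eq.~\eqref{eq:E:kappa_shift} directly for the $K_X$ term. Your additions only make explicit what the paper leaves implicit: dominated convergence to exchange the right derivative with $\E_{s\sim d_T}$ (for the $K_X$ quotients this also needs $\kappa$ bounded, which holds since $|\kappa|\le TL_{\max}$), and Jensen's inequality $\E[g^2]\ge\E[g]^2\ge\Gamma_{\rm OS}^2$ for the final step.
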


\begin{proof}
Fix a prefix $s$ and condition on the sampled hard label $a^*\sim\pi_T(\cdot\mid s)$.
Define
$q(a)=\pi_{\mathrm{soft}}(a\mid s)$, $p(a)=\pi_T(a\mid s)$, and $d_a=q(a)-p(a)$.
By \textnormal{(E1)} and Eq.\eqref{eq:D:g_def_v2}, we have $d_{a^*}=-g(s)<0$.
Under the Hard interpolation $q_t=(1-t)q+t\delta_{a^*}$,
\[
d_{a^*}(t)=d_{a^*}+t(1-q(a^*)),\qquad d_a(t)=d_a-tq(a)\ \ (a\neq a^*).
\]

\medskip
\noindent\textbf{Step 1.}
By Lemma~\ref{lem:E:abs_right},
\[
\left.\frac{d}{dt}|d_{a^*}(t)|\right|_{0^+}=-(1-q(a^*)),
\qquad
\left.\frac{d}{dt}|d_a(t)|\right|_{0^+}=
\begin{cases}
-q(a), & d_a>0,\\
+q(a), & d_a\le 0.
\end{cases}
\]
Thus
\[
\left.\frac{d}{dt}r_{\pi_t}(s)\right|_{0^+}
=-(1-q(a^*))-\sum_{a\in\mathcal P(s)}q(a)+\sum_{a\in\mathcal N(s)}q(a)
=-2\sum_{a\in\mathcal P(s)}q(a).
\]
Moreover, since $\sum_{a\neq a^*}d_a=-d_{a^*}=g(s)$ and $d_a\le 0$ on $\mathcal N(s)$, we have
$g(s)\le \sum_{a\in\mathcal P(s)} d_a$.
For $a\in\mathcal P(s)$, $q(a)=p(a)+d_a\ge d_a$, hence $\sum_{a\in\mathcal P(s)}q(a)\ge g(s)$, so
\begin{equation}
\left.\frac{d}{dt}r_{\pi_t}(s)\right|_{0^+}\le -2g(s).
\label{eq:E:r_prime}
\end{equation}

\medskip
\noindent\textbf{Step 2.}
At $t=0$, $|d_{a^*}|=g(s)$ and
$\sum_{a\neq a^*}|d_a|\ge\left|\sum_{a\neq a^*}d_a\right|=g(s)$, hence
$r_{\pi_{\mathrm{soft}}}(s)\ge 2g(s)$.
Therefore,
\[
\left.\frac{d}{dt}r_{\pi_t}(s)^2\right|_{0^+}
=2\,r_{\pi_{\mathrm{soft}}}(s)\left.\frac{d}{dt}r_{\pi_t}(s)\right|_{0^+}
\le -8g(s)^2,
\]
and multiplying by $1_X(s)$ then taking $\mathbb E_{s\sim d_T}$ gives Eq.\eqref{eq:E:hard_delta2}.

\medskip
\noindent\textbf{Step 3.}
Let $k(t):=\sum_a \kappa(a\mid s)\,|d_a(t)|$. The same case split yields
\[
k'_+(0)
=-\kappa(a^*\mid s)(1-q(a^*))
-\sum_{a\in\mathcal P(s)}\kappa(a\mid s)\,q(a)
+\sum_{a\in\mathcal N(s)}\kappa(a\mid s)\,q(a).
\]
Assumption Eq.\eqref{eq:E:kappa_shift} implies $k'_+(0)\le 0$.
Averaging over $s\sim d_T$ and restricting with $1_X(s)$ yields Eq.\eqref{eq:E:hard_K}.
Summing over $X\in\{\mathcal B,\mathcal G\}$ proves Eq.\eqref{eq:E:hard_F}.
\end{proof}

\begin{lemma}[Cross-direction strict descent at the two pure endpoints]
\label{lem:E:cross_direction}
Under \textnormal{(E1)--(E4)}, there exists $\eta_0>0$ such that
\[
D_{\mathsf T_S}^{+}F(\pi_{\mathrm{hard}})\ \le\ -\eta_0,
\qquad
D_{\mathsf T_H}^{+}F(\pi_{\mathrm{soft}})\ \le\ -\eta_0.
\]
In particular, one may take $\eta_0=\min\{\,2C_2\underline\delta_{\mathcal G}^2,\ 8C_2\Gamma_{\rm OS}^2\,\}$.
\end{lemma}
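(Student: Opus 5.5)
The two claims are proved separately, each by invoking one of the lemmas just established and then inserting the relevant endpoint assumption to get an explicit negative constant. At the end we take the minimum of the two constants.

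\emph{Soft direction at the Hard-only endpoint.} Apply Lemma~\ref{lem:E:soft_contract} with $\pi=\pi_{\mathrm{hard}}$, separately for $X=\mathcal B$ and $X=\mathcal G$, and sum. Because $F=F_{\mathcal B}+F_{\mathcal G}$ and right directional derivatives are additive, this gives
\[
D_{\mathsf T_S}^{+}F(\pi_{\mathrm{hard}})=-\sum_{X\in\{\mathcal B,\mathcal G\}}\bigl(K_X(\pi_{\mathrm{hard}})+2C_2\,\delta_{2,X}(\pi_{\mathrm{hard}})^2\bigr).
\]
Every summand is nonnegative: $K_X\ge 0$ because $\kappa(a\mid s)\ge 0$ in the regime fixed in Step~4 of App.~\ref{app:proof_sensitivity_bound}, and $\delta_{2,X}^2\ge 0$. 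So we may drop all terms except $2C_2\,\delta_{2,\mathcal G}(\pi_{\mathrm{hard}})^2$. Assumption (E4) bounds this term below by $2C_2\underline\delta_{\mathcal G}^2$, hence $D_{\mathsf T_S}^{+}F(\pi_{\mathrm{hard}})\le -2C_2\underline\delta_{\mathcal G}^2$.

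\emph{Hard direction at the Soft-only endpoint.} This is exactly Eq.~\eqref{eq:E:hard_F} of Lemma~\ref{lem:E:hard_strict}, whose hypotheses (E1)--(E3) are included in (E1)--(E4). It gives $D_{\mathsf T_H}^{+}F(\pi_{\mathrm{soft}})\le -8C_2\,\E[g(s)^2]$. Jensen's inequality, $\E[g^2]\ge(\E g)^2$, together with (E2) then yields the bound $\le -8C_2\Gamma_{\rm OS}^2$.

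Setting $\eta_0=\min\{2C_2\underline\delta_{\mathcal G}^2,\,8C_2\Gamma_{\rm OS}^2\}>0$ makes both inequalities hold simultaneously.

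\emph{Main obstacle.} The delicate point is the sign bookkeeping in the first part. Dropping $K_{\mathcal B}$, $K_{\mathcal G}$ and $\delta_{2,\mathcal B}^2$ requires $\kappa\ge 0$, so I will state explicitly that we work in the same regime as Step~4 of App.~\ref{app:proof_sensitivity_bound}. A second point is that Lemma~\ref{lem:E:hard_strict} is a statement conditional on $a^*$. Taking the expectation over $a^*$ commutes with the right derivative: by Lemma~\ref{lem:E:abs_right} each per-$s$ right derivative exists, and the integrands are bounded (probabilities together with a bounded $\kappa$), so differentiation under $\E$ is justified by dominated convergence.
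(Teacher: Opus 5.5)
Your proof is correct and follows the paper's argument essentially verbatim. You apply Lemma~\ref{lem:E:soft_contract} at $\pi_{\mathrm{hard}}$, drop the nonnegative $K_X$ and $\delta_{2,\mathcal B}^2$ terms, invoke (E4), then cite Lemma~\ref{lem:E:hard_strict} at $\pi_{\mathrm{soft}}$ and take the minimum; your explicit remarks on $\kappa\ge 0$, on the Jensen step $\E[g^2]\ge(\E g)^2$ (already folded into Eq.~\eqref{eq:E:hard_F}), and on averaging over $a^*$ are careful additions that the paper leaves implicit.
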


\begin{proof}
Lemma~\ref{lem:E:soft_contract} yields
\[
D_{\mathsf T_S}^{+}F(\pi_{\mathrm{hard}})
=
\sum_{X\in\{\mathcal B,\mathcal G\}}
\Big(-K_X(\pi_{\mathrm{hard}})-2C_2\delta_{2,X}(\pi_{\mathrm{hard}})^2\Big)
\le
-2C_2\delta_{2,\mathcal G}(\pi_{\mathrm{hard}})^2
\le
-2C_2\underline\delta_{\mathcal G}^2,
\]
using $K_X(\cdot)\ge 0$ and Eq.\eqref{eq:E:H_res_G}.
The second inequality follows from Lemma~\ref{lem:E:hard_strict}, which gives
$D_{\mathsf T_H}^{+}F(\pi_{\mathrm{soft}})\le -8C_2\Gamma_{\rm OS}^2$.
\end{proof}


\smallskip
\noindent\textbf{Hybrid feasible update.}
For $\lambda\in[0,1]$, define
\[
q_\lambda(\cdot\mid s)=(1-\lambda)\delta_{a^*}(\cdot)+\lambda\pi_T(\cdot\mid s),
\qquad
(\mathsf T_\lambda^t\pi)(\cdot\mid s)=(1-t)\pi(\cdot\mid s)+t\,q_\lambda(\cdot\mid s),
\]
so that $\mathsf T_\lambda^t\pi=\pi+t\,v_\lambda(\pi)$ with
\begin{equation}
v_\lambda(\pi)=(1-\lambda)v_H(\pi)+\lambda v_S(\pi).
\label{eq:E:v_lambda}
\end{equation}

\smallskip
\noindent\textbf{Policy metric (avoids notation clash).}
Define the $d_T$-weighted $\ell_1$ norm
\begin{equation}
\|U\|_{1,d_T}
\ :=\
\mathbb E_{s\sim d_T}\!\Big[\|U(\cdot\mid s)\|_1\Big]
\ =\
\mathbb E_{s\sim d_T}\!\Big[\sum_{a\in\mathcal V}|U(a\mid s)|\Big].
\label{eq:E:norm_def}
\end{equation}
For brevity, write $\|U\| := \|U\|_{1,d_T}$.

\smallskip
\noindent\textbf{(S1) Approximate endpoint fixed points (in $\|\cdot\|_{1,d_T}$).}
\begin{equation}
\|v_H(\pi_{\mathrm{hard}})\|\le\varepsilon,
\qquad
\|v_S(\pi_{\mathrm{soft}})\|\le\varepsilon.
\label{eq:E:S1}
\end{equation}

\smallskip
\noindent\textbf{A convexity fact: hybrid is a convex combination of the two pure updates.}
Note that for every $\pi,t,\lambda$ we have the identity
\begin{equation}
\mathsf T_\lambda^t\pi
=
(1-\lambda)\,\mathsf T_H^t\pi
+\lambda\,\mathsf T_S^t\pi,
\label{eq:E:Thyb_as_convex_combo}
\end{equation}
since both sides equal $(1-t)\pi+t\bigl((1-\lambda)\delta_{a^*}+\lambda\pi_T\bigr)$ pointwise in $s$.

\begin{lemma}[Convexity of $F$ and a hybrid directional-derivative inequality]
\label{lem:E:hyb_dir_ineq}
By construction of the bound $F$ defined in Thm.~\ref{thm:sensitivity_bound} (restated above), $F$ is convex in $\pi$.
Consequently, for any policy $\pi$ and any $\lambda\in[0,1]$,
\begin{equation}
D_{\mathsf T_\lambda}^{+}F(\pi)
\ \le\
(1-\lambda)\,D_{\mathsf T_H}^{+}F(\pi)
+\lambda\,D_{\mathsf T_S}^{+}F(\pi).
\label{eq:E:Dhyb_upper}
\end{equation}
\end{lemma}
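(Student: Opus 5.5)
The plan is to treat $F$ as a functional of the policy argument $\pi$, with the sensitivity weights $\kappa(a\mid s)$ held fixed. I will then get the directional inequality from convexity along the segment $t\mapsto \mathsf T_\lambda^t\pi$ together with the identity Eq.\eqref{eq:E:Thyb_as_convex_combo}.

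\emph{Step 1 (what is being varied).} In $F(\pi)=\E_{s\sim d_T}\big[\sum_a\kappa(a\mid s)\,|\pi(a\mid s)-\pi_T(a\mid s)|+C_2\|\pi(\cdot\mid s)-\pi_T(\cdot\mid s)\|_1^2\big]$, I regard $\kappa(\cdot\mid s)$ as frozen weights. They are computed with the current student and do not move with the interpolation variable. I also regard $d_T$, $C_2$ and the sampled hard label $a^*$ (conditioned on per prefix $s$) as fixed. This is the step I expect to need the most care. If $\kappa$ were allowed to depend on $\pi$, convexity would not follow from the structure of $F$. The claim ``by construction'' therefore has to be read in this frozen-weight sense, which is the same sense used in Lemmas~\ref{lem:E:soft_contract}--\ref{lem:E:hard_strict}. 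I also use $\kappa(a\mid s)\ge 0$, the regime adopted in Step~4 of the proof of Thm.~\ref{thm:sensitivity_bound}.

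\emph{Step 2 (convexity).} Fix $s$. The map $\pi(\cdot\mid s)\mapsto|\pi(a\mid s)-\pi_T(a\mid s)|$ is a norm composed with an affine map, so it is convex. A nonnegative weighted sum of such maps is convex, so the $\kappa$-term is convex. The map $\pi\mapsto\|\pi(\cdot\mid s)-\pi_T(\cdot\mid s)\|_1$ is convex and nonnegative, and $x\mapsto x^2$ is convex and nondecreasing on $[0,\infty)$, so the quadratic term is convex. Summing, with $C_2>0$, and then taking the expectation over $s\sim d_T$ (and over $a^*$) preserves convexity. Hence $F$ is convex on the convex set of policies.

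\emph{Step 3 (directional inequality).} Set $\phi_\lambda(t):=F(\mathsf T_\lambda^t\pi)$, $\phi_H(t):=F(\mathsf T_H^t\pi)$ and $\phi_S(t):=F(\mathsf T_S^t\pi)$ for $t\in[0,1]$. Each is a convex function of $t$, because it is $F$ composed with an affine path. Each is also Lipschitz in $t$, since $\|\Delta\|_1\le 2$ and $\kappa$ is bounded. So each has a finite right derivative at $0$, and these right derivatives are exactly $D^+_{\mathsf T_\lambda}F(\pi)$, $D^+_{\mathsf T_H}F(\pi)$ and $D^+_{\mathsf T_S}F(\pi)$. By Eq.\eqref{eq:E:Thyb_as_convex_combo} and convexity of $F$,
\[
\phi_\lambda(t)\le(1-\lambda)\phi_H(t)+\lambda\,\phi_S(t),
\]
with equality at $t=0$, where all three equal $F(\pi)$. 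Subtract $F(\pi)$, divide by $t>0$, and let $t\downarrow 0$. This gives Eq.\eqref{eq:E:Dhyb_upper}. The limits exist because the difference quotients of convex functions are monotone in $t$.
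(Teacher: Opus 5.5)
Your proof is correct and follows essentially the same route as the paper. Both arguments show that the integrand is a nonnegative $\kappa$-weighted sum of absolute values plus a squared $\ell_1$ norm, note that convexity survives $\E_{s\sim d_T}$, and then combine the identity $\mathsf T_\lambda^t\pi=(1-\lambda)\mathsf T_H^t\pi+\lambda\mathsf T_S^t\pi$ with a limit of difference quotients. Your explicit caveats (frozen $\kappa$ and $C_2$, and $\kappa\ge 0$) and your monotone-difference-quotient argument for the existence of the right derivatives are more careful versions of what the paper leaves implicit or obtains from Lemma~\ref{lem:E:abs_right}.
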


\begin{proof}
Fix any prefix $s$. The pointwise integrand
\[
\pi(\cdot\mid s)\ \mapsto\ \sum_{a\in\mathcal V}\kappa(a\mid s)\,|\pi(a\mid s)-\pi_T(a\mid s)|
\;+\;
C_2\|\pi(\cdot\mid s)-\pi_T(\cdot\mid s)\|_1^2
\]
is convex because it is a nonnegative weighted sum of absolute values plus a squared norm.
Taking $\mathbb E_{s\sim d_T}[1_X(s)\cdot]$ preserves convexity, hence each $F_X$ and thus $F=F_{\mathcal B}+F_{\mathcal G}$
is convex.

Using Eq.\eqref{eq:E:Thyb_as_convex_combo} and convexity of $F$,
\[
F(\mathsf T_\lambda^t\pi)
\le
(1-\lambda)F(\mathsf T_H^t\pi)+\lambda F(\mathsf T_S^t\pi).
\]
Subtract $F(\pi)=(1-\lambda)F(\pi)+\lambda F(\pi)$ from both sides, divide by $t>0$, and take $t\to 0^+$.
The right directional derivatives exist by Lemma~\ref{lem:E:abs_right} (applied inside the definitions of $K_X$ and $\delta_{2,X}$),
yielding Eq.\eqref{eq:E:Dhyb_upper}.
\end{proof}

\begin{lemma}[A Lipschitz upper bound for $F$]
\label{lem:E:LipF}
Let $\kappa_{\max}:=\sup_{s,a}\kappa(a\mid s)<\infty$. Then for any policies $\pi,\pi'$,
\begin{equation}
|F(\pi')-F(\pi)|
\ \le\
L_{\mathrm{Lip}}\ \|\pi'-\pi\|,
\qquad
L_{\mathrm{Lip}}:=\kappa_{\max}+4C_2.
\label{eq:E:LipF}
\end{equation}
In particular, for any feasible operator $\mathsf T^t\pi=\pi+t\,v(\pi)$,
\begin{equation}
D_{\mathsf T}^{+}F(\pi)\ \le\ L_{\mathrm{Lip}}\,\|v(\pi)\|.
\label{eq:E:dirderiv_Lip}
\end{equation}
\end{lemma}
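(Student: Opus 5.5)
Since $F=F_{\mathcal B}+F_{\mathcal G}$ with $F_{\mathcal X}=K_{\mathcal X}+C_2\,\delta_{2,\mathcal X}^2$, it suffices to bound, for any two policies $\pi,\pi'$, the pointwise difference of the integrand of $F$ at each prefix $s$, and then integrate against $d_T$. Write $\Delta(\cdot\mid s):=\pi(\cdot\mid s)-\pi_T(\cdot\mid s)$ and $\Delta'(\cdot\mid s):=\pi'(\cdot\mid s)-\pi_T(\cdot\mid s)$. Then $\Delta'-\Delta=\pi'-\pi$ pointwise, since the teacher term cancels.

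For the $\kappa$-weighted part, use the reverse triangle inequality $\big||\Delta'(a\mid s)|-|\Delta(a\mid s)|\big|\le |\pi'(a\mid s)-\pi(a\mid s)|$. Multiplying by $\kappa(a\mid s)$ requires nonnegative weights. We have $0\le\kappa(a\mid s)\le\kappa_{\max}$, using the standing sign convention from Step 4 of the proof of Thm.~\ref{thm:sensitivity_bound}; if signs were not assumed, use $|\kappa|\le\kappa_{\max}$ instead. Summing over $a$ gives a pointwise bound of $\kappa_{\max}\|\pi'(\cdot\mid s)-\pi(\cdot\mid s)\|_1$.

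For the quadratic part, set $r=\|\Delta(\cdot\mid s)\|_1$ and $r'=\|\Delta'(\cdot\mid s)\|_1$. Factor $|r'^2-r^2|=(r'+r)\,|r'-r|$. Eq.~\eqref{eq:D:l1_basic} gives $r,r'\le 2$, so $r'+r\le 4$. The triangle inequality for $\|\cdot\|_1$ gives $|r'-r|\le\|\pi'(\cdot\mid s)-\pi(\cdot\mid s)\|_1$. Hence this term contributes at most $4C_2\|\pi'(\cdot\mid s)-\pi(\cdot\mid s)\|_1$.

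Summing the two pointwise bounds and integrating over $s\sim d_T$ yields Eq.~\eqref{eq:E:LipF} with $L_{\mathrm{Lip}}=\kappa_{\max}+4C_2$, measured in the norm of Eq.~\eqref{eq:E:norm_def}. This step uses linearity of expectation and pulls the absolute value inside the expectation, which is legitimate because $\kappa$ is held fixed.

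For Eq.~\eqref{eq:E:dirderiv_Lip}, the operator is $\mathsf T^t\pi=\pi+t\,v(\pi)$. By Eq.~\eqref{eq:E:LipF},
\[
\frac{F(\mathsf T^t\pi)-F(\pi)}{t}\le \frac{L_{\mathrm{Lip}}\,\|t\,v(\pi)\|}{t}=L_{\mathrm{Lip}}\|v(\pi)\|
\]
for every $t\in(0,1]$. Lemma~\ref{lem:E:abs_right}, applied termwise as in Lemma~\ref{lem:E:hyb_dir_ineq}, shows the right limit exists. Taking $t\to0^+$ then gives the claim.

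The main delicate point is not the algebra but the bookkeeping about $\kappa$. In Def.~\ref{def:sensitivity}, $\kappa$ is evaluated with the current student. The Lipschitz statement requires treating $\kappa$ (and $C_2$) as fixed weights independent of the policy argument, as is implicit in the convexity argument of Lemma~\ref{lem:E:hyb_dir_ineq}. I would state this explicitly at the outset. I would also note that feasibility of $\mathsf T^t\pi$, meaning it is still a distribution, is what guarantees $r'\le 2$ along the path.
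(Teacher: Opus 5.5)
Your proof is correct and matches the paper's argument essentially line for line: the same pointwise reverse-triangle bound on the $\kappa$-weighted term, the same factorization $|r'^2-r^2|=(r'+r)\,|r'-r|$ with $r,r'\le 2$, integration against $d_T$ using $1_{\mathcal X}(s)\le 1$, and the same difference-quotient argument for the directional derivative. One correction to your aside on signs: $\kappa(a\mid s)=Q(s,a)-\bar Q(s)$ has zero $\pi_T(\cdot\mid s)$-mean, so it must take both signs unless it vanishes on the teacher's support, and in the signed case the bound needs $\sup_{s,a}|\kappa(a\mid s)|$ rather than $\kappa_{\max}=\sup_{s,a}\kappa(a\mid s)$ as defined in the statement (the paper's own proof has the same dependency, because it also relies on $\kappa\ge 0$).
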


\begin{proof}
Fix $s$ and write $\Delta(\cdot\mid s)=\pi(\cdot\mid s)-\pi_T(\cdot\mid s)$ and $\Delta'(\cdot\mid s)=\pi'(\cdot\mid s)-\pi_T(\cdot\mid s)$.
For the linear term, using $||x|-|y||\le |x-y|$,
\[
\Big|\sum_{a}\kappa(a\mid s)|\Delta'(a\mid s)|-\sum_{a}\kappa(a\mid s)|\Delta(a\mid s)|\Big|
\le
\sum_{a}\kappa(a\mid s)\,|\pi'(a\mid s)-\pi(a\mid s)|
\le
\kappa_{\max}\,\|\pi'(\cdot\mid s)-\pi(\cdot\mid s)\|_1.
\]
For the quadratic term, note $\|\Delta(\cdot\mid s)\|_1\le 2$ and $\|\Delta'(\cdot\mid s)\|_1\le 2$ (difference of two distributions),
hence
\[
\big|\|\Delta'\|_1^2-\|\Delta\|_1^2\big|
=
\big(\|\Delta'\|_1+\|\Delta\|_1\big)\,\big|\|\Delta'\|_1-\|\Delta\|_1\big|
\le
4\,\|\Delta'-\Delta\|_1
=
4\,\|\pi'(\cdot\mid s)-\pi(\cdot\mid s)\|_1.
\]
Multiply by $C_2$, sum the two bounds, multiply by $1_X(s)\le 1$, and take $\mathbb E_{s\sim d_T}$.
This yields Eq.\eqref{eq:E:LipF}.

For Eq.\eqref{eq:E:dirderiv_Lip}, apply Eq.\eqref{eq:E:LipF} to $\pi'=\mathsf T^t\pi=\pi+t v(\pi)$:
\[
\frac{F(\mathsf T^t\pi)-F(\pi)}{t}
\le
\frac{L_{\mathrm{Lip}}\|t\,v(\pi)\|}{t}
=
L_{\mathrm{Lip}}\|v(\pi)\|.
\]
Taking $t\to 0^+$ gives the claim.
\end{proof}

\begin{lemma}[Negative right directional derivative implies strict decrease]
\label{lem:E:neg_dir_implies_decrease}
Let $\phi:[0,1]\to\mathbb R$ be right-differentiable at $0$ and suppose $\phi'_+(0)<0$.
Then there exists $t_0>0$ such that $\phi(t)<\phi(0)$ for all $t\in(0,t_0]$.
\end{lemma}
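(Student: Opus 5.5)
This is the standard fact that a strictly negative right derivative forces strict decrease on a small right neighbourhood of $0$. I will argue directly from the definition of the right derivative with an $\varepsilon$--$\delta$ argument; no earlier result of the paper is needed.

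Set $c:=-\phi'_+(0)>0$. By definition,
\[
\lim_{t\to 0^+}\frac{\phi(t)-\phi(0)}{t}=\phi'_+(0)=-c .
\]
Take the tolerance $\varepsilon:=c/2$ in the definition of this one-sided limit. This gives some $t_0\in(0,1]$ such that for every $t\in(0,t_0]$,
\[
\Big|\frac{\phi(t)-\phi(0)}{t}+c\Big|<\frac{c}{2}.
\]
The limit definition first supplies some $\delta>0$ with the bound on $(0,\delta)$. I then take $t_0:=\min\{\delta/2,1\}$, so that the closed interval $(0,t_0]$ lies inside the admissible range and inside the domain $[0,1]$.

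From the displayed inequality, the difference quotient satisfies $(\phi(t)-\phi(0))/t<-c/2<0$ on $(0,t_0]$. Multiplying by $t>0$, which preserves the inequality direction, yields
\[
\phi(t)<\phi(0)-\tfrac{c}{2}\,t<\phi(0)\qquad\text{for all } t\in(0,t_0].
\]
This is the claim. It in fact gives a quantitative linear decrease, which may be useful later when this lemma is combined with Lemma~\ref{lem:E:cross_direction} and Lemma~\ref{lem:E:hyb_dir_ineq}.

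There is no real obstacle here. The only points needing care are that the tolerance $c/2$ must be strictly positive, and that the interval must be chosen closed at $t_0$ and contained in the domain; both are handled by the choice of $t_0$ above.
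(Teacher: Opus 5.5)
Your proof is correct and is essentially the paper's argument: both apply the definition of the one-sided limit with tolerance $\tfrac12|\phi'_+(0)|$, which puts the difference quotient below $\tfrac12\phi'_+(0)<0$ on $(0,t_0]$, and then multiply by $t>0$. Choosing $t_0=\min\{\delta/2,1\}$, so the interval is closed at $t_0$ and lies inside $[0,1]$, is a harmless bit of extra care that the paper leaves implicit.
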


\begin{proof}
By definition of the right derivative, $\lim_{t\to 0^+}\frac{\phi(t)-\phi(0)}{t}=\phi'_+(0)<0$.
Hence there exists $t_0>0$ such that for all $t\in(0,t_0]$,
\[
\frac{\phi(t)-\phi(0)}{t}\ <\ \frac{1}{2}\phi'_+(0)\ <\ 0,
\]
which implies $\phi(t)<\phi(0)$.
\end{proof}

\subsubsection{Strict improvement by a one-step Hybrid update}

\begin{theorem}[Strict improvement by a one-step Hybrid update]
\label{thm:E:hybrid_strict_improve}
Assume Lemma~\ref{lem:E:cross_direction}, Eq.\eqref{eq:E:S1}, and $\kappa_{\max}<\infty$.
Define
\[
\nu_S:=2C_2\underline\delta_{\mathcal G}^2,
\qquad
\nu_H:=8C_2\Gamma_{\rm OS}^2,
\qquad
L_{\mathrm{Lip}}:=\kappa_{\max}+4C_2.
\]
If
\begin{equation}
(L_{\mathrm{Lip}}\varepsilon)^2\ <\ \nu_S\nu_H,
\label{eq:E:interval_nonempty}
\end{equation}
then there exist $\lambda\in(0,1)$ and $t\in(0,1]$ such that, letting
\[
\pi_* \in \arg\min\{F(\pi_{\mathrm{hard}}),F(\pi_{\mathrm{soft}})\},
\qquad
\pi_{\mathrm{hyb}}:=\mathsf T_\lambda^t\pi_*,
\]
we have
\[
F(\pi_{\mathrm{hyb}})\ <\ \min\{F(\pi_{\mathrm{hard}}),F(\pi_{\mathrm{soft}})\}.
\]
\end{theorem}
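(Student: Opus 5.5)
We argue via right directional derivatives at the better endpoint $\pi_*$, and then use Lemma~\ref{lem:E:neg_dir_implies_decrease} to turn a negative derivative into strict decrease along a short hybrid step. By Lemma~\ref{lem:E:hyb_dir_ineq}, for any $\lambda\in[0,1]$,
\[
D^+_{\mathsf T_\lambda}F(\pi_*)\le (1-\lambda)D^+_{\mathsf T_H}F(\pi_*)+\lambda D^+_{\mathsf T_S}F(\pi_*).
\]
At each endpoint, exactly one of the two pure directions is strictly descending, by Lemma~\ref{lem:E:cross_direction}. The other direction is nearly trivial, since the endpoint is an approximate fixed point of its own update.

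There are two cases.

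\textbf{Case $\pi_*=\pi_{\mathrm{soft}}$.} Lemma~\ref{lem:E:cross_direction} (or directly Lemma~\ref{lem:E:hard_strict}) gives $D^+_{\mathsf T_H}F(\pi_{\mathrm{soft}})\le-\nu_H$. By (S1) and Eq.~\eqref{eq:E:dirderiv_Lip}, $D^+_{\mathsf T_S}F(\pi_{\mathrm{soft}})\le L_{\mathrm{Lip}}\|v_S(\pi_{\mathrm{soft}})\|\le L_{\mathrm{Lip}}\varepsilon$. Hence
\[
D^+_{\mathsf T_\lambda}F(\pi_{\mathrm{soft}})\le -(1-\lambda)\nu_H+\lambda L_{\mathrm{Lip}}\varepsilon,
\]
which is negative iff $\lambda<\nu_H/(\nu_H+L_{\mathrm{Lip}}\varepsilon)$.

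\textbf{Case $\pi_*=\pi_{\mathrm{hard}}$.} Symmetrically,
\[
D^+_{\mathsf T_\lambda}F(\pi_{\mathrm{hard}})\le (1-\lambda)L_{\mathrm{Lip}}\varepsilon-\lambda\nu_S,
\]
which is negative iff $\lambda>L_{\mathrm{Lip}}\varepsilon/(\nu_S+L_{\mathrm{Lip}}\varepsilon)$.

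Since $\pi_*$ is one of the two endpoints, it suffices to find $\lambda\in(0,1)$ for the relevant case; each interval is nonempty on its own. To give a single $\lambda$ that works regardless of which endpoint is better, we intersect the intervals. This requires
\[
\frac{L_{\mathrm{Lip}}\varepsilon}{\nu_S+L_{\mathrm{Lip}}\varepsilon}<\frac{\nu_H}{\nu_H+L_{\mathrm{Lip}}\varepsilon}.
\]
Cross-multiplying, $L_{\mathrm{Lip}}\varepsilon\nu_H+(L_{\mathrm{Lip}}\varepsilon)^2<\nu_H\nu_S+\nu_H L_{\mathrm{Lip}}\varepsilon$, which is exactly $(L_{\mathrm{Lip}}\varepsilon)^2<\nu_S\nu_H$, i.e., Eq.~\eqref{eq:E:interval_nonempty}. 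Both endpoints of this interval lie in $(0,1)$, so we may pick $\lambda$ strictly inside it.

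With this $\lambda$, set $\phi(t):=F(\mathsf T^t_\lambda\pi_*)$ for $t\in[0,1]$. This $\phi$ is right-differentiable at $0$: $F$ is built from absolute values and a squared $\ell_1$ norm, so Lemma~\ref{lem:E:abs_right} applies termwise, with dominated convergence justified by boundedness of $\kappa$ and of $\|\Delta\|_1\le 2$. Moreover $\phi'_+(0)<0$, so Lemma~\ref{lem:E:neg_dir_implies_decrease} yields $t_0>0$ with $\phi(t)<\phi(0)=\min\{F(\pi_{\mathrm{hard}}),F(\pi_{\mathrm{soft}})\}$ for $t\in(0,t_0]$. Taking $t=\min\{t_0,1\}$ gives the claim. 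Feasibility holds because $\mathsf T^t_\lambda\pi_*$ is a convex combination of distributions, hence still a policy.

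The main obstacle is the justification of the inequalities borrowed from the lemmas, not the $\lambda$-algebra. The hard-label target $a^*$ is random, so $F$ at the hybrid point must be understood with $a^*$ averaged or conditioned on consistently; I would fix $a^*$ pointwise (conditioning on it, as in the proof of Lemma~\ref{lem:E:hard_strict}) and take expectations at the end. I would also check that Lemma~\ref{lem:E:hyb_dir_ineq}'s limit exchange holds under this convention. If it does not, the fallback is to compute $D^+_{\mathsf T_\lambda}F$ directly, since the directional derivative of a convex function is sublinear in the direction and $v_\lambda$ is a convex combination of $v_H$ and $v_S$.
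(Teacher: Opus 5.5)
Your proof is correct and follows the paper's argument essentially step for step. You bound $D^+_{\mathsf T_\lambda}F$ at each endpoint via Lemma~\ref{lem:E:hyb_dir_ineq}, using Lemma~\ref{lem:E:cross_direction} for the descending pure direction and the Lipschitz bound with Eq.~\eqref{eq:E:S1} for the other; you then take $\lambda$ in the intersection of the two intervals, which is nonempty exactly under Eq.~\eqref{eq:E:interval_nonempty}, and finish with Lemma~\ref{lem:E:neg_dir_implies_decrease}. Your side remark is also right: once $\pi_*$ is fixed, one interval alone suffices, and the hypothesis is what gives a single $\lambda$ that makes the hybrid derivative negative at both endpoints at once, which is what the paper establishes.
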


\begin{proof}
\noindent\textbf{Step 1 (a uniform negative hybrid-direction derivative at both endpoints).}
By Lemma~\ref{lem:E:hyb_dir_ineq},
\[
D_{\mathsf T_\lambda}^{+}F(\pi)
\le
(1-\lambda)D_{\mathsf T_H}^{+}F(\pi)+\lambda D_{\mathsf T_S}^{+}F(\pi),
\qquad \forall \pi.
\]

At $\pi_{\mathrm{hard}}$, Lemma~\ref{lem:E:cross_direction} gives $D_{\mathsf T_S}^{+}F(\pi_{\mathrm{hard}})\le -\nu_S$.
Also, by Lemma~\ref{lem:E:LipF} and Eq.\eqref{eq:E:S1},
\[
D_{\mathsf T_H}^{+}F(\pi_{\mathrm{hard}})
\le
L_{\mathrm{Lip}}\|v_H(\pi_{\mathrm{hard}})\|
\le
L_{\mathrm{Lip}}\varepsilon.
\]
Therefore,
\begin{equation}
D_{\mathsf T_\lambda}^{+}F(\pi_{\mathrm{hard}})
\le
(1-\lambda)L_{\mathrm{Lip}}\varepsilon-\lambda\nu_S.
\label{eq:E:Dh_hard_bound}
\end{equation}

Similarly, at $\pi_{\mathrm{soft}}$, Lemma~\ref{lem:E:cross_direction} gives $D_{\mathsf T_H}^{+}F(\pi_{\mathrm{soft}})\le -\nu_H$ and
Lemma~\ref{lem:E:LipF} plus Eq.\eqref{eq:E:S1} gives
$D_{\mathsf T_S}^{+}F(\pi_{\mathrm{soft}})\le L_{\mathrm{Lip}}\varepsilon$.
Hence,
\begin{equation}
D_{\mathsf T_\lambda}^{+}F(\pi_{\mathrm{soft}})
\le
\lambda L_{\mathrm{Lip}}\varepsilon-(1-\lambda)\nu_H.
\label{eq:E:Dh_soft_bound}
\end{equation}

Choose $\lambda$ such that
\[
\lambda>\frac{L_{\mathrm{Lip}}\varepsilon}{\nu_S+L_{\mathrm{Lip}}\varepsilon},
\qquad
\lambda<\frac{\nu_H}{\nu_H+L_{\mathrm{Lip}}\varepsilon}.
\]
This is possible iff Eq.\eqref{eq:E:interval_nonempty} holds. Fix any such $\lambda$.
Then Eq.\eqref{eq:E:Dh_hard_bound}--\eqref{eq:E:Dh_soft_bound} imply
\begin{equation}
D_{\mathsf T_\lambda}^{+}F(\pi_{\mathrm{hard}})<0,
\qquad
D_{\mathsf T_\lambda}^{+}F(\pi_{\mathrm{soft}})<0.
\label{eq:E:Dh_both_negative}
\end{equation}

\medskip
\noindent\textbf{Step 2 (convert negative derivative into a strict one-step decrease).}
Let $\pi_* \in \arg\min\{F(\pi_{\mathrm{hard}}),F(\pi_{\mathrm{soft}})\}$ and define $\phi(t):=F(\mathsf T_\lambda^t\pi_*)$.
By Eq.\eqref{eq:E:Dh_both_negative}, we have $\phi'_+(0)=D_{\mathsf T_\lambda}^{+}F(\pi_*)<0$.
Lemma~\ref{lem:E:neg_dir_implies_decrease} then yields a $t_0>0$ such that $\phi(t)<\phi(0)$ for all $t\in(0,t_0]$.

Pick any $t\in(0,\min\{1,t_0\}]$ and set $\pi_{\mathrm{hyb}}:=\mathsf T_\lambda^t\pi_*$.
Then
\[
F(\pi_{\mathrm{hyb}})=\phi(t)<\phi(0)=F(\pi_*)
=\min\{F(\pi_{\mathrm{hard}}),F(\pi_{\mathrm{soft}})\},
\]
which proves the theorem.
\end{proof}

\section{Detailed Experimental Settings} \label{app:settings}

In this section, we provide comprehensive details regarding the dataset construction, training hyperparameters, and evaluation protocols used in our experiments.

\subsection{Training Data Construction}

We describe the data construction process across distinct training settings. A primary challenge lies in ensuring the diversity of input queries, where corresponding responses are generated by the respective teacher models for each teacher-student configuration.

\paragraph{General Reasoning Data Construction}
Our training data is constructed starting from Infinity Instruct \cite{li2025infinity}. To ensure a balanced capability profile for the student model, we curated a high-quality subset by applying a filtering pipeline based on the provided \texttt{topic} tags and quality scores. We specifically retained samples with high quality ratings across several core domains to maintain data diversity:

\begin{itemize}
    \item \textbf{Instruction Following \& General Chat:} Includes dialogue, literature, and reading comprehension. This subset maintains fundamental world knowledge and responsiveness to human instructions.
    \item \textbf{Mathematical \& Logical Reasoning:} Focuses on multi-step mathematical problems. This partition preserves complex reasoning chains and logical consistency.
    \item \textbf{Academic \& Scientific Expertise:} Covers specialized subjects from MMLU  ~\cite{hendrycks2021measuring} (e.g., STEM, humanities) and scientific inquiries from ARC-C~\cite{clark2018think}. This targets expert-level factual knowledge and scientific precision.
\end{itemize}

Through strategic balancing of these domains, we curated a final dataset of approximately \textbf{66,000 samples}. This size was chosen empirically to balance computational efficiency with sufficient diversity for smooth knowledge transfer.

\paragraph{Mathematical Reasoning Data Construction}
We constructed the mathematical reasoning dataset by sampling and aggregating instances from MetaMathQA~\cite{yu2023metamath}, DAPO-Math-17K~\cite{yu2025dapoopensourcellmreinforcement}, and the DeepScaleR-Preview-Dataset~\cite{luo2025deepscaler}. This composite corpus covers a broad spectrum of difficulty levels, resulting in a final dataset of approximately 50,000 samples.

\paragraph{Coding Data Construction} We leverage the coding dataset from~\cite{meng2024pissa}. Derived from CodeFeedback~\cite{zheng2024opencodeinterpreter}, this subset contains 156,526 samples selected to target Python function-generation tasks similar to HumanEval/MBPP.

\subsection{Training Configurations}
\label{app:training_config}
All distillation experiments are implemented using PyTorch and the Hugging Face \texttt{transformers} and \texttt{trl} libraries. We leverage DeepSpeed ZeRO-2 for efficient distributed training across NVIDIA A100 GPUs. The specific hyperparameters are listed below:

\begin{table}[h!]
\centering
\caption{Hyperparameters for distillation. \textbf{Common settings:} Batch Size = 128, Sequence Length = 2048, LoRA Dropout = 0.05, Target Modules = All Linear Layers.}
\label{tab:hyperparameters}
\renewcommand{\arraystretch}{1.2}
\setlength{\tabcolsep}{8pt} 
\begin{tabular}{l l c c c c}
\toprule
\multirow{2}{*}{\textbf{Domain}} & \multirow{2}{*}{\textbf{Teacher $\to$ Student}} & \multicolumn{2}{c}{\textbf{Training}} & \multicolumn{2}{c}{\textbf{LoRA}} \\
\cmidrule(lr){3-4} \cmidrule(lr){5-6}
& & \textbf{LR} & \textbf{Epochs} & \textbf{Rank ($r$)} & \textbf{Alpha ($\alpha$)} \\
\midrule
\multirow{3}{*}{General} & Qwen2.5 (7B $\to$ \{0.5, 1.5, 3\}B) & $5\mathrm{e}{-5}$ & 2 & 16 & 32 \\
& Llama-3.1 8B $\to$ Llama-3.2 1B & $1\mathrm{e}{-4}$ & 2 & 16 & 32 \\
& Gemma-3 (4B $\to$ 1B) & $1\mathrm{e}{-4}$ & 2 & 16 & 32 \\
\midrule
Math & Qwen2.5-Math (7B $\to$ 1.5B) & $5\mathrm{e}{-5}$ & 2 & 16 & 32 \\
\midrule
Code & DeepSeek-Coder (6.7B $\to$ 1.3B) & $2\mathrm{e}{-5}$ & 1 & 128 & 128 \\
\bottomrule
\end{tabular}
\end{table}

We implement the on-policy KD baseline following the official setup of \cite{agarwal2024onpolicy,kodistillm}. We mix a fixed dataset with student-generated samples based on a threshold. In our experiments, we use student samples when the threshold is below 0.3; otherwise, we train on teacher outputs. For fair comparison, all experiments are run for the same number of epochs on four A100 GPUs, and the average training cost (s/step) is then calculated.


Finally, although the experiments presented in the main text have demonstrated the superior performance of the adaptive hybrid supervision approach, we still provide the fixed lambda values tested across different settings to facilitate reproducibility. We also hope this work will offer the community further insights for designing more principled and effective learning objectives.
Additionally, in our curriculum scheduling strategy, the warm-up steps are set to 20\% of the total training steps. The results are listed in Table~\ref{tab:lambda_values}.

\begin{table}[h!]
\centering
\caption{Fixed lambda values used in distillation experiments.}
\label{tab:lambda_values}
\renewcommand{\arraystretch}{1.2}
\setlength{\tabcolsep}{8pt} 
\begin{tabular}{l l c}
\toprule
\textbf{Domain} & \textbf{Teacher $\to$ Student} & \textbf{$\lambda$} \\
\midrule
\multirow{5}{*}{General} 
& Qwen2.5-7B $\to$ Qwen2.5-0.5B & 0.95 \\
& Qwen2.5-7B $\to$ Qwen2.5-1.5B & 0.9 \\
& Qwen2.5-7B $\to$ Qwen2.5-3B & 0.05 \\
& Llama-3.1-8B $\to$ Llama-3.2-1B & 0.95 \\
& Gemma-3-4B $\to$ Gemma-3-1B & 0.95 \\
\midrule
Math 
& Qwen2.5-Math-7B $\to$ Qwen2.5-1.5B & 0.9 \\
\midrule
Code 
& DeepSeek-Coder-6.7B $\to$ DeepSeek-Coder-1.3B & 0.1 \\
\bottomrule
\end{tabular}
\end{table}


\subsection{Evaluation Benchmarks}
We evaluate the models on widely recognized benchmarks to assess their reasoning capabilities. 
To ensure reproducibility and stability, we report the average accuracy score across five independent runs using random seeds $\{10, 20, 30, 40, 50\}$. 

\vspace{2mm}
\noindent \textbf{General Reasoning.} We set the maximum generation length to 2048 tokens. For other generation parameters, we adopt the official configurations recommended by the model providers to achieve a proper trade-off between diversity and accuracy. We include the following benchmarks in this category:
\begin{itemize}
    \setlength\itemsep{0em}
    \item \textit{MMLU}~\cite{hendrycks2020measuring}: Evaluates multitask accuracy across 57 subjects, ranging from elementary mathematics to professional law.
    \item \textit{BBH}~\cite{suzgun2022challengingbigbenchtaskschainofthought}: A subset of 23 challenging tasks from BIG-Bench designed to test complex multi-step reasoning capabilities.
    \item \textit{ARC-C}~\cite{clark2018think}: The Challenge Set of the AI2 Reasoning Challenge, consisting of grade-school science questions that require deep reasoning.
\end{itemize}
The prompt template used for these evaluations is as follows:

\begin{figure}[h]
\centering
\begin{tcolorbox}[colback=gray!5, colframe=gray!60, title=\textbf{Prompt Template for Multiple-Choice}]
\small
\textbf{\textsf{System Message:}} \\
You are an expert AI assistant that solves multiple-choice questions. Follow these steps for your response:
\begin{enumerate}
    \setlength\itemsep{0em}
    \item First, provide a detailed, step-by-step reasoning process that explains how you arrived at the solution.
    \item After your reasoning, conclude with the final answer.
    \item The final answer must be formatted by enclosing ONLY the single capital letter of the correct option in \texttt{\textbackslash boxed\{\}}. For example, if the correct answer is option B, your final line should be exactly: \texttt{\textbackslash boxed\{B\}}
\end{enumerate}

\tcbline 

\textbf{\textsf{User Message:}} \\
\texttt{<Input Question Text Here>} \\
\\
A. \texttt{<Option A Text>} \\
B. \texttt{<Option B Text>} \\
C. \texttt{<Option C Text>} \\
D. \texttt{<Option D Text>}

\tcbline 

\textbf{\textsf{Model Response:}} \\
\texttt{<Detailed step-by-step reasoning process explaining the solution...>} \\
\\
\texttt{\textbackslash boxed\{<Correct Option Letter>\}}
\end{tcolorbox}
\caption{The complete interaction template used for multiple-choice benchmarks. The model is instructed to first generate a reasoning chain and then output the final answer in a specific format to facilitate automatic parsing.}
\label{fig:prompt_mcq}
\end{figure}

\vspace{2mm}
\noindent \textbf{Mathematical Reasoning.} We maintain consistent generation configurations with the general reasoning tasks described above. To elicit intermediate reasoning steps, we employ Chain-of-Thought prompting. For evaluation, we adopt the validation protocol from the \texttt{verl} framework\footnote{\url{https://github.com/volcengine/verl}}: the final answer is extracted from the generated \texttt{\textbackslash boxed\{\}} content and compared against the ground truth using exact matching. The benchmarks included in this category are:
\begin{itemize}
    \setlength\itemsep{0em}
    \item \textit{GSM8K}~\cite{cobbe2021training}: A benchmark of high-quality grade school math word problems focusing on multi-step reasoning.
    \item \textit{MATH}~\cite{hendrycks2021measuring}: A dataset containing challenging competition-level mathematics problems spanning seven diverse disciplines.
    \item \textit{Gaokao2023-En}~\cite{liao2024mario}: English-translated questions from the 2023 Chinese National College Entrance Examination, evaluating advanced high-school mathematical proficiency.
\end{itemize}
The prompt template used for these mathematical tasks is presented in Figure~\ref{fig:prompt_math}.

\begin{figure}[h]
\centering
\begin{tcolorbox}[colback=gray!5, colframe=gray!60, title=\textbf{Prompt Template for Mathematical Reasoning}]
\small
\textbf{\textsf{System Message:}} \\
Please reason step by step, and put your final answer WITHIN \texttt{\textbackslash boxed\{\}}.

\tcbline 

\textbf{\textsf{User Message:}} \\
\texttt{<Mathematical Problem Text>}

\tcbline 

\textbf{\textsf{Model Response:}} \\
\texttt{<Step-by-step derivation and calculation process...>} \\
\\
Therefore, the answer is \texttt{\textbackslash boxed\{<Final Numerical or Symbolic Answer>\}}.
\end{tcolorbox}
\caption{The zero-shot Chain-of-Thought prompt used for mathematical reasoning tasks. The concise instruction forces the model to generate intermediate steps before producing a boxed final answer for exact-match validation.}
\label{fig:prompt_math}
\end{figure}

\vspace{2mm}
\noindent \textbf{Code Generation.} We employ the \texttt{EvalPlus} framework\footnote{\url{https://github.com/evalplus/evalplus}} to assess programming capabilities. This framework evaluates functional correctness using an augmented set of rigorous unit tests to prevent false positives. We report results under greedy decoding following the official setup to ensure high performance and reproducibility.
\begin{itemize}
    \setlength\itemsep{0em}
    \item \textit{HumanEval}~\cite{chen2021evaluating}: Measures code generation capabilities on 164 hand-written Python problems via the Pass@1 accuracy.
    \item \textit{MBPP}~\cite{austin2021program}: Evaluates functional correctness on entry-level programming problems using the sanitized subset.
\end{itemize}

\section{Additional Experimental Results} \label{app:exp_results}
\label{app:additional_evidence}

\begingroup
\subsection{Extended Evaluation and Control Experiments}
\label{app:additional_experimental_evidence}

This section complements the main experiments by asking four concrete questions.
First, does the hard--soft pattern hold beyond the primary model pairs and benchmark formats?
Second, can hybrid supervision be combined with on-policy prefixes?
Third, can the gains be explained by cheaper alternatives such as reversing the proxy, changing temperature, or adding global regularization?
Fourth, in a setting where exact token-level $\kappa$ is computable, do high-sensitivity tokens coincide with the semantic decision states predicted by the Bridge--Garden decomposition?

\paragraph{Broader model and generation settings.}
Table~\ref{tab:additional_model_open_ended} tests whether the main trend is tied to the original model pairs or to closed-form benchmark evaluation.
Hybrid KD remains better than both pure hard and pure soft KD for a larger Qwen2.5 teacher--student capacity gap, an additional Qwen2.5-Coder pair, and open-ended AlpacaEval generation.

\begin{table}[h!]
\centering
\caption{\rev{Broader model and open-ended evaluations. For Qwen2.5-32B$\to$3B, the four columns are BBH/MMLU/ARC-C/ThmQA. For Qwen2.5-Coder-7B$\to$1.5B, they are HumanEval/HumanEval+/MBPP/MBPP+. AlpacaEval reports GPT-5.2-judged win rate against text-davinci-003.}}
\label{tab:additional_model_open_ended}
\small
\renewcommand{\arraystretch}{1.12}
\setlength{\tabcolsep}{4pt}
\begin{tabular}{l l c c c c c}
\toprule
\textbf{Setting} & \textbf{Method} & \makecell{\textbf{BBH}\\\textbf{/HE}} & \makecell{\textbf{MMLU}\\\textbf{/HE+}} & \makecell{\textbf{ARC-C}\\\textbf{/MBPP}} & \makecell{\textbf{ThmQA}\\\textbf{/MBPP+}} & \textbf{Avg./Win} \\
\midrule
Qwen2.5-32B$\to$3B & Hard KD & 34.4 & \textbf{67.2} & 79.5 & 23.1 & 51.0 \\
 & Soft KD & 44.0 & 65.6 & 78.1 & 22.9 & 52.6 \\
 & Hybrid KD & \textbf{45.7} & 66.8 & \textbf{80.0} & \textbf{24.0} & \textbf{54.1} \\
\midrule
Qwen2.5-Coder-7B$\to$1.5B & Hard KD & 54.3 & 50.0 & 60.3 & 52.1 & 54.2 \\
 & Soft KD & 52.7 & 47.9 & 59.6 & 52.4 & 53.1 \\
 & Hybrid KD & \textbf{55.5} & \textbf{50.6} & \textbf{61.4} & \textbf{52.6} & \textbf{55.0} \\
\midrule
AlpacaEval (Qwen2.5-7B$\to$3B) & Hard KD & \multicolumn{4}{c}{Win rate} & 57.5 \\
 & Soft KD & \multicolumn{4}{c}{Win rate} & 61.3 \\
 & Hybrid KD & \multicolumn{4}{c}{Win rate} & \textbf{64.4} \\
\bottomrule
\end{tabular}
\end{table}

\paragraph{Compatibility with on-policy training.}
On-policy KD changes which prefixes are used for training, whereas Hybrid KD changes the supervision target at a given prefix.
These two changes address different parts of the training procedure and can be combined.
Tables~\ref{tab:onpolicy_hybrid_compatibility} and \ref{tab:onpolicy_hybrid_compatibility_code} show that adding hybrid supervision on top of on-policy soft KD improves the average score across all four settings.

\begin{table}[h!]
\centering
\caption{\rev{Hybrid supervision combined with on-policy training on reasoning benchmarks. The Qwen2.5-7B$\to$3B setting is reported as benchmark average; Llama3.1-8B$\to$1B reports the full BBH/MMLU/ARC-C/ThmQA breakdown.}}
\label{tab:onpolicy_hybrid_compatibility}
\small
\renewcommand{\arraystretch}{1.12}
\setlength{\tabcolsep}{4pt}
\begin{tabular}{l l c c c c c}
\toprule
\textbf{Setting} & \textbf{Method} & \textbf{BBH} & \textbf{MMLU} & \textbf{ARC-C} & \textbf{ThmQA} & \textbf{Avg.} \\
\midrule
\multirow{4}{*}{Qwen2.5-7B$\to$3B}
& Off-policy soft KD & \multicolumn{4}{c}{Avg. only} & 51.9 \\
& Off-policy hybrid KD & \multicolumn{4}{c}{Avg. only} & 53.5 \\
& On-policy soft KD & \multicolumn{4}{c}{Avg. only} & 52.8 \\
& On-policy + hybrid KD & \multicolumn{4}{c}{Avg. only} & \textbf{54.7} \\
\midrule
\multirow{4}{*}{Llama3.1-8B$\to$1B}
& Off-policy soft KD & 22.1 & 33.1 & 33.4 & 4.4 & 23.3 \\
& Off-policy hybrid KD & \textbf{27.4} & 35.6 & 37.0 & 5.0 & 26.3 \\
& On-policy soft KD & 26.8 & 36.2 & 39.7 & 7.0 & 27.4 \\
& On-policy + hybrid KD & 27.2 & \textbf{36.7} & \textbf{40.2} & \textbf{7.1} & \textbf{27.8} \\
\bottomrule
\end{tabular}
\end{table}

\begin{table}[h!]
\centering
\caption{\rev{Hybrid supervision combined with on-policy training on code benchmarks. The columns are HumanEval (HE), HumanEval+ (HE+), MBPP, MBPP+, and their average.}}
\label{tab:onpolicy_hybrid_compatibility_code}
\small
\renewcommand{\arraystretch}{1.12}
\setlength{\tabcolsep}{4pt}
\begin{tabular}{l l c c c c c}
\toprule
\textbf{Setting} & \textbf{Method} & \textbf{HE} & \textbf{HE+} & \textbf{MBPP} & \textbf{MBPP+} & \textbf{Avg.} \\
\midrule
\multirow{4}{*}{Qwen2.5-Coder-7B$\to$1.5B}
& Off-policy soft KD & 52.7 & 47.9 & 59.6 & 52.4 & 53.1 \\
& Off-policy hybrid KD & 55.5 & 50.6 & \textbf{61.4} & 52.6 & 55.0 \\
& On-policy soft KD & 56.4 & 51.5 & \textbf{61.4} & \textbf{53.2} & 55.6 \\
& On-policy + hybrid KD & \textbf{56.8} & \textbf{52.9} & 61.1 & 52.9 & \textbf{55.9} \\
\midrule
\multirow{4}{*}{DeepSeek-Coder-6.7B$\to$1.3B}
& Off-policy soft KD & 38.4 & 33.5 & 63.5 & 51.6 & 46.8 \\
& Off-policy hybrid KD & 41.5 & 36.6 & 63.2 & 50.5 & 48.0 \\
& On-policy soft KD & 43.3 & 39.0 & 63.8 & 52.1 & 49.5 \\
& On-policy + hybrid KD & \textbf{44.2} & \textbf{39.8} & \textbf{64.4} & \textbf{52.5} & \textbf{50.2} \\
\bottomrule
\end{tabular}
\end{table}

\paragraph{Reverse-proxy ablation.}
The practical algorithms use teacher confidence or entropy as local heuristics for allocating supervision at semantic decision states.
If Hybrid KD only benefited from generic label mixing, then reversing the confidence or entropy rule should remain competitive.
Table~\ref{tab:reverse_proxy} shows the opposite: reverse-confidence and reverse-entropy both reduce the average score on Qwen2.5-7B$\to$3B.
This supports that the direction of the local hard/soft allocation matters.

\begin{table}[h!]
\centering
\caption{\rev{Reverse-proxy ablation on Qwen2.5-7B$\to$3B. Reversing confidence or entropy weighting reduces the average score.}}
\label{tab:reverse_proxy}
\small
\renewcommand{\arraystretch}{1.15}
\setlength{\tabcolsep}{6pt}
\begin{tabular}{l c c c c c}
\toprule
\textbf{Method} & \textbf{BBH} & \textbf{MMLU} & \textbf{ARC-C} & \textbf{ThmQA} & \textbf{Avg.} \\
\midrule
Hard KD & 41.5 & 65.8 & 78.8 & 23.8 & 52.5 \\
Soft KD & 41.7 & 64.5 & 78.3 & 23.0 & 51.9 \\
Hybrid: confidence-based & 44.1 & \textbf{67.5} & \textbf{80.8} & 22.8 & 53.8 \\
Hybrid: reverse-confidence & 41.5 & 65.3 & 78.1 & 22.3 & 51.8 \\
Hybrid: entropy & \textbf{46.8} & 67.1 & 79.8 & 23.7 & \textbf{54.3} \\
Hybrid: reverse-entropy & 41.2 & 65.2 & 77.3 & 20.5 & 51.1 \\
\bottomrule
\end{tabular}
\end{table}

\paragraph{Training cost breakdown.}
All methods in Table~\ref{tab:training_cost_qwen} use the same epochs, batch size, and hardware.
Teacher logits for soft and hybrid supervision are computed on the fly rather than stored.
Hybrid KD reuses the same teacher logits as soft KD and only adds a lightweight per-token weighting step.
By contrast, on-policy KD requires autoregressive student sampling, which accounts for most of its additional cost.

\begin{table}[h!]
\centering
\caption{\rev{Training cost on Qwen2.5-7B$\to$3B with 4$\times$A100 80GB.}}
\label{tab:training_cost_qwen}
\small
\renewcommand{\arraystretch}{1.15}
\setlength{\tabcolsep}{8pt}
\begin{tabular}{l l c}
\toprule
\textbf{Method} & \textbf{Extra computation} & \textbf{s/step} \\
\midrule
Hard KD & none & 8.30 \\
Soft KD & none & 14.06 \\
Hybrid KD & one log-sum-exp per token & 15.24 \\
On-policy KD & autoregressive student sampling & 147.83 \\
\bottomrule
\end{tabular}
\end{table}

\paragraph{Regularization and temperature controls.}
We next test whether the gains can be reproduced without position-specific hard/soft allocation.
Entropy regularization and temperature schedules provide global changes to the target distribution, and random-label mixing tests whether any hard-token signal is sufficient.
Tables~\ref{tab:reg_temp_controls} and \ref{tab:reg_temp_controls_code} show the full benchmark breakdowns.
These alternatives can improve over pure soft KD, but they do not consistently match Hybrid KD.
The random-label control performs especially poorly, indicating that the hard-label component must remain teacher-supported and context-dependent.

\begin{table}[h!]
\centering
\caption{\rev{Regularization and temperature controls on reasoning benchmarks. Avg. is computed over BBH, MMLU, ARC-C, and ThmQA. Hybrid KD is best by average in both settings.}}
\label{tab:reg_temp_controls}
\small
\renewcommand{\arraystretch}{1.12}
\setlength{\tabcolsep}{4pt}
\begin{tabular}{l l c c c c c}
\toprule
\textbf{Setting} & \textbf{Method} & \textbf{BBH} & \textbf{MMLU} & \textbf{ARC-C} & \textbf{ThmQA} & \textbf{Avg.} \\
\midrule
\multirow{6}{*}{Qwen2.5-7B$\to$3B} 
& Soft KD & 41.7 & 64.5 & 78.3 & 23.0 & 51.9 \\
& Entropy reg. & \textbf{46.6} & 67.0 & 80.5 & \textbf{23.8} & 54.5 \\
& T: high$\to$low & 45.2 & 66.1 & 80.1 & 23.8 & 53.8 \\
& T: low$\to$high & 46.4 & 66.5 & 80.6 & 23.3 & 54.2 \\
& Random labels & 40.1 & 61.2 & 74.2 & 20.3 & 49.0 \\
& Hybrid KD & 46.5 & \textbf{69.1} & \textbf{81.2} & \textbf{23.8} & \textbf{55.2} \\
\midrule
\multirow{6}{*}{Llama3.1-8B$\to$1B}
& Soft KD & 22.1 & 33.1 & 33.4 & 4.4 & 23.3 \\
& Entropy reg. & 24.5 & \textbf{35.7} & 34.9 & \textbf{6.0} & 25.3 \\
& T: high$\to$low & 26.0 & 35.1 & 34.7 & 5.8 & 25.4 \\
& T: low$\to$high & 25.7 & 35.4 & 34.8 & 5.0 & 25.2 \\
& Random labels & 19.2 & 30.1 & 30.2 & 2.1 & 20.4 \\
& Hybrid KD & \textbf{27.4} & 35.6 & \textbf{37.0} & 5.0 & \textbf{26.3} \\
\bottomrule
\end{tabular}
\end{table}

\begin{table}[h!]
\centering
\caption{\rev{Regularization and temperature controls on code benchmarks. Avg. is computed over HumanEval (HE), HumanEval+ (HE+), MBPP, and MBPP+. Hybrid KD is best or tied by average in both settings.}}
\label{tab:reg_temp_controls_code}
\small
\renewcommand{\arraystretch}{1.12}
\setlength{\tabcolsep}{4pt}
\begin{tabular}{l l c c c c c}
\toprule
\textbf{Setting} & \textbf{Method} & \textbf{HE} & \textbf{HE+} & \textbf{MBPP} & \textbf{MBPP+} & \textbf{Avg.} \\
\midrule
\multirow{6}{*}{Qwen2.5-Coder-7B$\to$1.5B}
& Soft KD & 52.7 & 47.9 & 59.6 & 52.4 & 53.1 \\
& Entropy reg. & 53.0 & 48.2 & \textbf{63.5} & \textbf{55.3} & \textbf{55.0} \\
& T: high$\to$low & 53.0 & 47.6 & 60.3 & 52.6 & 53.4 \\
& T: low$\to$high & 54.0 & 48.7 & 60.9 & 51.9 & 53.9 \\
& Random labels & 51.8 & 48.2 & 60.6 & 53.2 & 53.5 \\
& Hybrid KD & \textbf{55.5} & \textbf{50.6} & 61.4 & 52.6 & \textbf{55.0} \\
\midrule
\multirow{6}{*}{DeepSeek-Coder-6.7B$\to$1.3B}
& Soft KD & 38.4 & 33.5 & 63.5 & 51.6 & 46.8 \\
& Entropy reg. & \textbf{41.5} & \textbf{36.6} & 61.6 & 50.0 & 47.4 \\
& T: high$\to$low & 40.1 & 35.4 & 62.7 & 49.9 & 47.0 \\
& T: low$\to$high & 39.7 & 34.7 & 62.9 & 50.3 & 46.9 \\
& Random labels & 35.4 & 32.3 & 57.4 & 47.6 & 43.2 \\
& Hybrid KD & \textbf{41.5} & \textbf{36.6} & \textbf{63.2} & \textbf{50.5} & \textbf{48.0} \\
\bottomrule
\end{tabular}
\end{table}

\subsection{Controlled Synthetic Validation with Exact Token-Level \texorpdfstring{$\kappa$}{kappa}}
\label{app:synthetic_kappa_validation}

Exact $\kappa$ is intractable for real LLMs because it requires path-level intervention over future continuations.
The synthetic experiment is designed to test the Bridge--Garden mechanism in a setting where this quantity is observable.
We construct controlled domains where the semantic role of each token is known and exact token-level $\kappa$ can be computed.
The synthetic experiment uses three domains, Dialogue, Math, and Code, with 4000 training examples, 500 validation examples, and 30 evaluation examples per domain.
Each domain uses a vocabulary of 64 tokens; the evaluation action space has $|\mathcal{V}_{\mathrm{eval}}|=62$, includes EOS, and excludes only PAD and BOS.
Each domain is generated by a scripted teacher oracle with a fixed template and typed decision states.
Dialogue contains recipient, date/time, required-fact, forbidden-constraint, and tone/paraphrase choices; Math contains substitution, operator, computed-value, final-answer, and equivalent-representation choices; Code contains arithmetic/update operators, branch guards, return semantics, syntax/layout, and equivalent-implementation choices.
The decision order is fixed before sampling: Dialogue interleaves tone, recipient, required fact, tone, date/time, and forbidden-constraint states; Math uses substitution, equivalent form, operator, computed value, final answer, and equivalent form; Code uses three equivalent-implementation states followed by operator, branch guard, operator, and return-semantics states inside a Python-like function template.
High-risk states use four teacher-supported semantic choices: the main token probability starts at $0.98$, decays by $0.01$ across high-risk states, and is floored at $0.94$, with the remaining probability shared by the other three choices.
Flexible states use eight equivalent choices with logits evenly spaced from $0$ to $-2.5$ before normalization, and ordinary context or layout tokens are deterministic.
This design provides semantic labels that let us interpret Bridge-like and Garden-like behavior, while $\kappa$ and the reported states are computed and selected independently of those labels.

We train three student models under Hard KD, Soft KD, and Hybrid KD on samples from the same oracle.
All students use a 2-layer Transformer with $d_{\mathrm{model}}=128$, 4 attention heads, feed-forward width 512, and dropout 0.1.
Optimization uses AdamW, batch size 256, learning rate $2\times10^{-4}$, warmup-cosine scheduling with 6 warmup epochs, 24 maximum epochs, and early-stopping patience 6.
Hybrid KD selects its mixing coefficient from $\{0.1,\ldots,0.9\}$ on the held-out validation split: teacher-forced KL for Dialogue, validation rollout EB for Math, and, for Code, the smallest $\lambda$ whose validation rollout EB is within one standard error of the best validation value.
The selected coefficients are $0.9$ for Dialogue and $0.1$ for Math and Code.
For Code, high-risk operator, branch, and return states use hard-label supervision, while the remaining states use a local soft-label weight equal to the selected $\lambda$ times the normalized teacher ambiguity at that state.

For each trained student $m$, each evaluated prefix state $s$, and each token $a\in\mathcal{V}_{\mathrm{eval}}$, we compute exact token-level sensitivity by forcing $a$ at $s$ and then following the teacher oracle for all future continuations:
\[
\kappa_m(a\mid s)=Q_m(s,a)-\sum_{a'}\pi_T(a'\mid s)Q_m(s,a'),
\]
where $Q_m(s,a)$ is the expected downstream student loss after the forced token.
Overall EB is measured as the difference between the student's loss on its own rollout prefixes and teacher-forced prefixes, averaged over 30 rollout repeats.

For evaluation, Dialogue uses template length 35 and 1050 evaluated prefix states, Math uses template length 43 and 1290 evaluated prefix states, and Code uses template length 47 and 1410 evaluated prefix states, each evaluated over 62 actions.
Table~\ref{tab:synthetic_bridge_garden_eb} reports the Bridge/Garden regional decomposition and overall EB.
For this decomposition, Bridge and Garden regions are the top and bottom 20\% of evaluated states ranked by computed $\kappa$; the columns report absolute $\kappa$-weighted contributions within those regions, where lower is better.
The results match the predicted regional pattern: hard KD has lower Bridge contribution than soft KD, soft KD has lower Garden contribution than hard KD, and hybrid KD has the lowest overall EB.
The token-level maps show the same pattern visually: large $\kappa$ contributions appear mainly at tokens that decide task semantics, while ordinary context and layout tokens have small contributions.

\begin{table*}[t]
\centering
\caption{\rev{Synthetic Bridge/Garden decomposition and overall exposure bias (EB). Lower is better. Bridge/Garden entries are absolute $\kappa$-weighted contributions under the common top/bottom 20\% non-terminal split.}}
\label{tab:synthetic_bridge_garden_eb}
\small
\renewcommand{\arraystretch}{1.12}
\setlength{\tabcolsep}{3pt}
\begin{tabular}{l c c c c c c c c}
\toprule
\textbf{Domain} & \textbf{$\lambda$} & \makecell{\textbf{Bridge}\\\textbf{Hard KD}} & \makecell{\textbf{Bridge}\\\textbf{Soft KD}} & \makecell{\textbf{Garden}\\\textbf{Hard KD}} & \makecell{\textbf{Garden}\\\textbf{Soft KD}} & \makecell{\textbf{Overall EB}\\\textbf{Hard KD}} & \makecell{\textbf{Overall EB}\\\textbf{Soft KD}} & \makecell{\textbf{Overall EB}\\\textbf{Hybrid KD}} \\
\midrule
Dialogue & 0.9 & \textbf{0.1033} & 0.3005 & 0.0145 & \textbf{0.0051} & 0.0640 & 0.1745 & \textbf{0.0470} \\
Math & 0.1 & \textbf{0.1273} & 0.3479 & 0.0361 & \textbf{0.0186} & 0.0316 & 0.1478 & \textbf{0.0260} \\
Code & 0.1 & \textbf{0.2553} & 0.3254 & 0.0617 & \textbf{0.0372} & 0.0281 & 0.0821 & \textbf{0.0280} \\
\bottomrule
\end{tabular}
\end{table*}

\rev{
We also examine whether the local teacher-confidence signal aligns with exact $\kappa$ on the states where confidence-based Hybrid KD is intended to act.
For this analysis, we focus on states where the teacher has multiple valid next-token choices and define
$c_T(s)=1-H_T(s)/\log K_T(s)$, where $K_T(s)$ is the number of teacher-defined candidate next-token choices at state $s$.
Here $H_T(s)$ is the entropy of the teacher distribution over these candidate choices.
Thus $c_T(s)$ is one minus normalized entropy: it is near $0$ when the teacher is close to uniform over the candidates and larger when the teacher distribution is sharper.
Table~\ref{tab:synthetic_confidence_kappa} shows that $c_T$ is strongly rank-aligned with exact state-level $\kappa$ across all three synthetic domains.
The decomposition in Table~\ref{tab:synthetic_bridge_garden_eb} and the heatmaps also includes deterministic context and layout states.
}

\begin{table}[h!]
\centering
\caption{\rev{Teacher confidence and exact token-level sensitivity in the synthetic experiments. $\rho$ is Spearman correlation between $c_T(s)$ and exact state-level $\kappa(s)$ on states where the teacher has multiple valid next-token choices. Mean $c_T$ is reported separately for Bridge and Garden states under the same partition as Table~\ref{tab:synthetic_bridge_garden_eb}.}}
\label{tab:synthetic_confidence_kappa}
\small
\renewcommand{\arraystretch}{1.12}
\setlength{\tabcolsep}{7pt}
\begin{tabular}{l c c c c}
\toprule
\textbf{Domain} & \makecell{\textbf{Choice}\\\textbf{states}} & \makecell{\textbf{Spearman }$\rho$\\$c_T$ vs. $\kappa$} & \makecell{\textbf{Mean }$c_T$\\\textbf{Bridge}} & \makecell{\textbf{Mean }$c_T$\\\textbf{Garden}} \\
\midrule
Dialogue & 180 & 0.972 & 0.829 & 0.134 \\
Math & 180 & 0.972 & 0.864 & 0.134 \\
Code & 210 & 0.954 & 0.755 & 0.134 \\
\bottomrule
\end{tabular}
\end{table}

Figure~\ref{fig:synthetic_kappa_set1} shows representative complete token-level $\kappa$ maps.
Each row is one domain and includes all token states for the example, including deterministic context and layout positions.
Light cells denote zero or near-zero contribution, while high-contribution cells concentrate on semantic decision tokens such as recipients, required facts, substitutions, operators, branch guards, and return values.
Figures~\ref{fig:synthetic_kappa_set2} and \ref{fig:synthetic_kappa_set3} provide additional representative examples.

\begin{figure*}[p]
\centering
\includegraphics[width=0.98\textwidth]{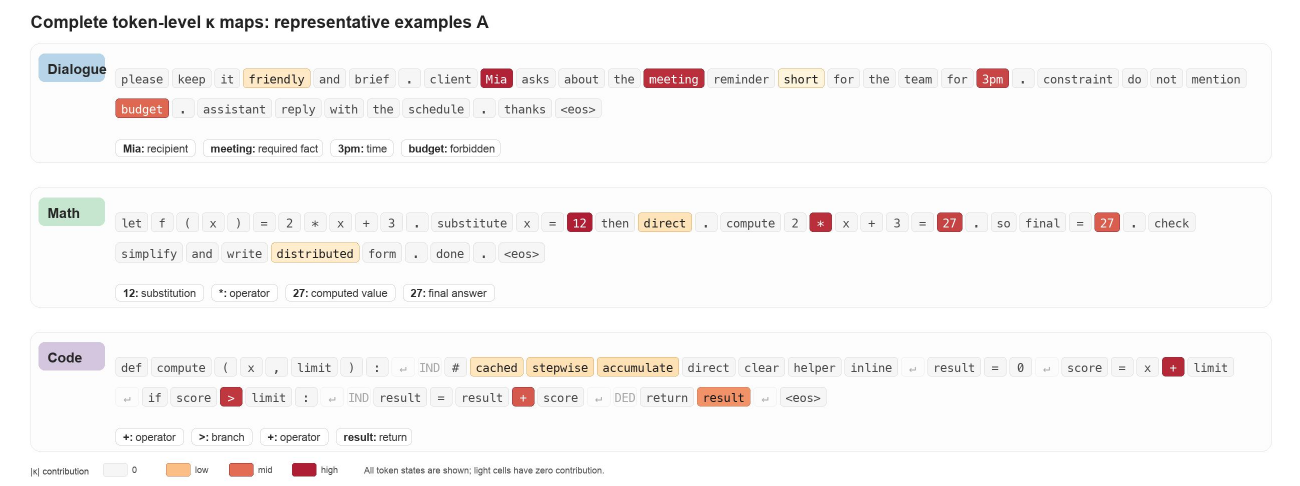}
\caption{\rev{Representative complete token-level $\kappa$ maps across Dialogue, Math, and Code. Color intensity indicates $|\kappa|$; light cells have zero or near-zero contribution.}}
\label{fig:synthetic_kappa_set1}
\end{figure*}

\begin{figure*}[p]
\centering
\includegraphics[width=0.98\textwidth]{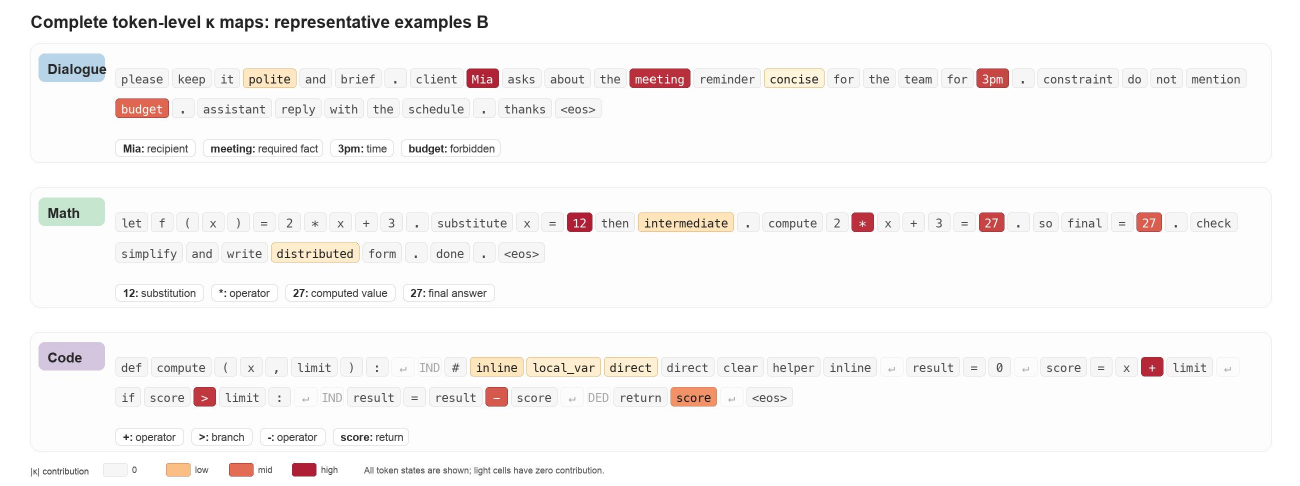}
\caption{\rev{Additional representative complete token-level $\kappa$ maps across Dialogue, Math, and Code. Color intensity indicates $|\kappa|$; light cells have zero or near-zero contribution.}}
\label{fig:synthetic_kappa_set2}
\end{figure*}

\begin{figure*}[p]
\centering
\includegraphics[width=0.98\textwidth]{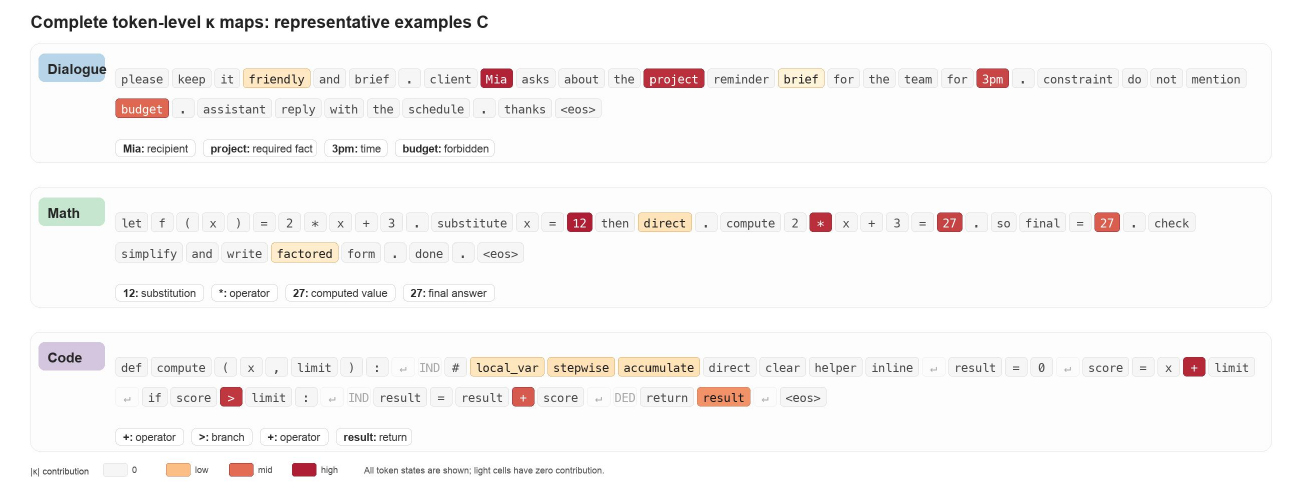}
\caption{\rev{Additional representative complete token-level $\kappa$ maps across Dialogue, Math, and Code. Color intensity indicates $|\kappa|$; light cells have zero or near-zero contribution.}}
\label{fig:synthetic_kappa_set3}
\end{figure*}

\clearpage

\endgroup

\begin{figure*}[t]
    \centering
    \begin{subfigure}[b]{0.97\textwidth}
        \centering
        \includegraphics[width=\linewidth]{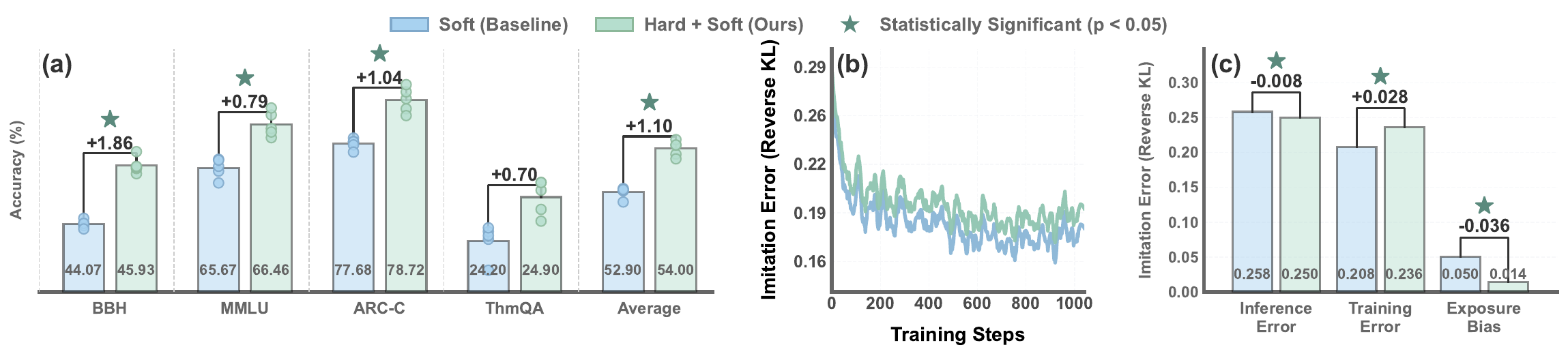}
    \end{subfigure}
    
    \caption{Comparative analysis (Qwen2.5 7B $\rightarrow$ 3B) of Hybrid KD ($\lambda \ell_{\mathrm{soft}} + (1-\lambda)\ell_{\mathrm{hard}}$) vs. Soft KD ($\ell_\mathrm{soft}$). (a) benchmark performance gains, (b) student-teacher imitation error during training (quantified by Reverse KL), and (c) inference imitation error decomposition based on the same metric. 
    }
    \label{fig:app:qwen_reverse_hard_soft_performance_gain}
\end{figure*}

\begin{figure*}[t]
    \centering
    \begin{subfigure}[b]{0.97\textwidth}
        \centering
        \includegraphics[width=\linewidth]{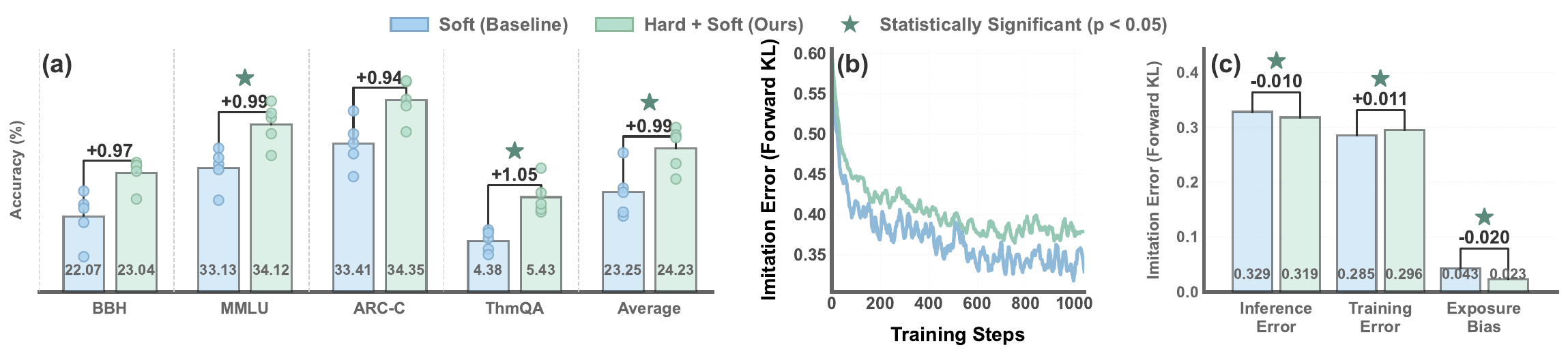}
    \end{subfigure}
    
    \caption{Comparative analysis (Llama3.1-8B $\rightarrow$ 3.2-1B) of Hybrid KD ($\lambda \ell_{\mathrm{soft}} + (1-\lambda)\ell_{\mathrm{hard}}$) vs. Soft KD ($\ell_\mathrm{soft}$). (a) benchmark performance gains, (b) student-teacher imitation error during training (quantified by Forward KL), and (c) inference imitation error decomposition based on the same metric. 
    }
    \label{fig:app:llama_hard_soft_performance_gain}
\end{figure*}

\begin{figure*}[t]
    \centering
    \begin{subfigure}[b]{0.97\textwidth}
        \centering
        \includegraphics[width=\linewidth]{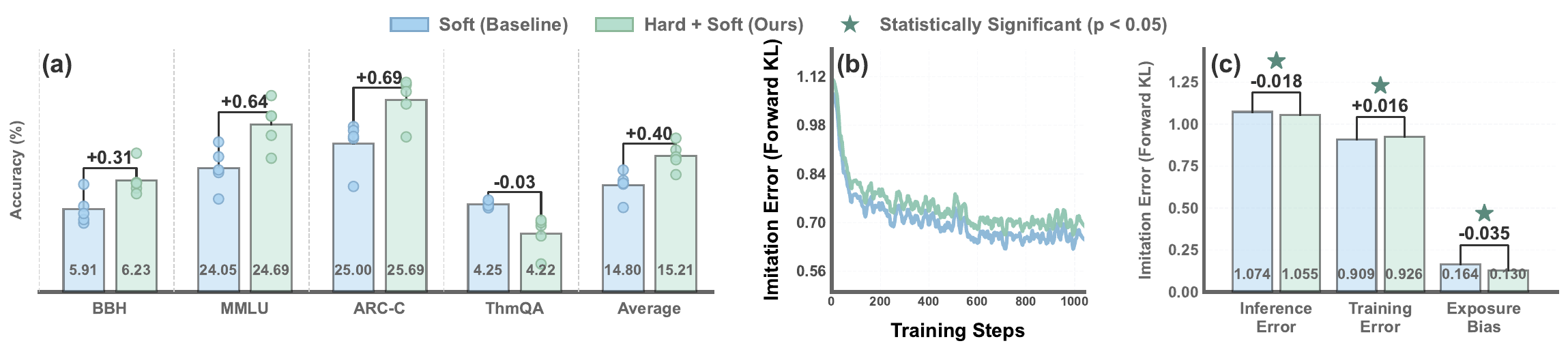}
    \end{subfigure}
    
    \caption{Comparative analysis (Gemma3-4B $\rightarrow$ 1B) of Hybrid KD ($\lambda \ell_{\mathrm{soft}} + (1-\lambda)\ell_{\mathrm{hard}}$) vs. Soft KD ($\ell_\mathrm{soft}$). (a) benchmark performance gains, (b) student-teacher imitation error during training (quantified by Forward KL), and (c) inference imitation error decomposition based on the same metric. 
    }
    \label{fig:app:gemma_hard_soft_performance_gain}
\end{figure*}

\subsection{Further Validation of the Hard-Soft Paradox}

In this subsection, we provide additional empirical support to validate the universality of the Hard-Soft Paradox. We extend our analysis of training dynamics to a broader range of model families, including Llama, Qwen2.5-Math, DeepSeek-Coder, and Gemma. 

As shown in Figures~\ref{fig:app:qwen_reverse_hard_soft_performance_gain}, \ref{fig:app:llama_hard_soft_performance_gain}, and \ref{fig:app:gemma_hard_soft_performance_gain}, the hard-soft paradox remains consistent across different models, tasks, and divergence metrics. 
Moreover, Table~\ref{tab:app:deepseek_coder_compact_v2}, Table~\ref{app:tab:distillation_gemma_final_complete}, and Figure~\ref{fig:benchmark_hard_soft_performance_gain} further confirm that these benefits hold true across various divergence measures. Collectively, these results suggest the paradox is robust across diverse model families and domains. This in turn underscores the importance of our work in addressing this puzzle in the existing literature.

\begin{table*}[t]
\centering
\caption{Distillation results on Qwen2.5-Math models ($\uparrow$). \textbf{Bold} with \colorbox[RGB]{225, 245, 255}{darker blue} indicates Rank 1; \colorbox[RGB]{240, 250, 255}{light blue} indicates Rank 2.}
\label{tab:app:qwen_math_results_fixed}
\small
\setlength{\aboverulesep}{0pt}
\setlength{\belowrulesep}{0pt}
\renewcommand{\arraystretch}{1.2}
\setlength{\tabcolsep}{18pt}
\begin{tabular}{l cccc}
\toprule
\multirow{2}{*}{\textbf{Method}} & \multicolumn{4}{c}{\textbf{Qwen2.5-Math-7B} $\rightarrow$ \textbf{1.5B}} \\
\cmidrule(lr){2-5}
& \textbf{GSM8K} & \textbf{MATH} & \textbf{Gaokao23} & \textbf{Avg.} \\
\midrule
Hard KD & 65.75\sd{4.43} & 50.44\sd{1.78} & 41.04\sd{3.15} & 52.41 \\
Soft KD & 68.72\sd{0.92} & 49.89\sd{0.82} & 41.77\sd{1.11} & 53.46 \\
Static Weighting & 70.42\sd{1.89} & \cellcolor[RGB]{225, 245, 255}\textbf{51.96}\sd{1.24} & 42.75\sd{2.64} & 55.04 \\
Curriculum-based & 71.10\sd{0.25} & 50.98\sd{0.71} & \cellcolor[RGB]{240, 250, 255}43.58\sd{1.36} & 55.22 \\
Risk-Guided & \cellcolor[RGB]{225, 245, 255}\textbf{71.96}\sd{0.46} & \cellcolor[RGB]{240, 250, 255}51.46\sd{0.64} & 42.91\sd{1.76} & \cellcolor[RGB]{240, 250, 255}55.45 \\
Entropy-based & 70.77\sd{0.78} & 49.91\sd{1.39} & 42.96\sd{0.56} & 54.55 \\
\textbf{Confidence-based} & \cellcolor[RGB]{240, 250, 255}71.65\sd{1.20} & 51.37\sd{0.71} & \cellcolor[RGB]{225, 245, 255}\textbf{44.10}\sd{2.09} & \cellcolor[RGB]{225, 245, 255}\textbf{55.71} \\
\bottomrule
\end{tabular}
\end{table*}

\subsection{Further Analysis of Hybrid Strategies}

We further examine the impact of different hybrid supervision methods on mathematical reasoning tasks. In Table~\ref{tab:app:qwen_math_results_fixed}, we present experimental results for distillation from Qwen2.5-Math-7B to Qwen2.5-1.5B under various approaches. These results consistently outperform those obtained using pure Hard KD or Soft KD alone. Moreover, as expected, the adaptive strategy designed under the inspiration of the Bridge–Garden theory achieves correspondingly better performance.

\subsection{Validation on Additional Teacher-Student Configurations}

We also evaluate our framework on another teacher-student configuration, specifically distilling from Qwen2.5‑7B to Qwen2.5‑1.5B. As shown in Table~\ref{app:tab:qwen25_1_5b_distillation}, the hybrid distillation strategy consistently outperforms standard distillation baselines across all benchmarks.
This further confirms the general effectiveness of the hybrid supervision method observed in our main experiments.

\begin{table*}[h]
\centering
\caption{Distillation results from Qwen2.5-7B (Teacher) to Qwen2.5-1.5B (Student) on general reasoning benchmarks. We use Forward KL for soft supervision and static weighting for hybrid supervision.}
\label{app:tab:qwen25_1_5b_distillation}

\resizebox{\textwidth}{!}{%
     \renewcommand{\arraystretch}{1.2} 
     \begin{tabular}{llccccc} 
     \toprule
     \rowcolor{HeaderGray}
     \textbf{Setting} & \textbf{Method} & \textbf{BBH} & \textbf{MMLU} & \textbf{ARC-Challenge} & \textbf{TheoremQA} & \textbf{Average} \\
     \midrule

     \textbf{Teacher} & Qwen2.5-7B-Instruct & 64.66$_{\pm0.69}$ & 78.22$_{\pm0.22}$ & 89.90$_{\pm0.24}$ & 32.47$_{\pm0.32}$ & 66.31 \\
     \midrule

     \multirow{8}{*}{\textbf{\shortstack[l]{Student\\(Qwen2.5-1.5B)}}} 
      & Baseline (No Distill) & 25.87$_{\pm0.06}$ & 58.99$_{\pm0.11}$ & 71.11$_{\pm0.07}$ & 9.90$_{\pm0.27}$ & 41.47 \\
      & Hard KD               & 31.98$_{\pm0.35}$ & 55.68$_{\pm0.25}$ & 64.28$_{\pm0.38}$ & 19.23$_{\pm0.80}$ & 42.79 \\
      
      \cmidrule[0.3pt](l){2-7} 

      & Forward KL divergence & 35.08$_{\pm0.19}$ & 60.27$_{\pm0.11}$ & 71.83$_{\pm0.21}$ & 17.15$_{\pm0.74}$ & 46.08 \\
      & Reverse KL divergence & \textbf{35.88}$_{\pm0.15}$ & 59.96$_{\pm0.11}$ & 71.30$_{\pm0.13}$ & 17.00$_{\pm0.32}$ & 46.04 \\
      & Skew Forward KL       & 32.54$_{\pm0.20}$ & 60.39$_{\pm0.12}$ & 71.88$_{\pm0.19}$ & 16.25$_{\pm0.26}$ & 45.26 \\
      & Skew Reverse KL       & 32.99$_{\pm0.24}$ & 60.31$_{\pm0.12}$ & 71.62$_{\pm0.12}$ & 17.62$_{\pm0.76}$ & 45.64 \\
      & Total Variation       & 35.48$_{\pm0.12}$ & 60.40$_{\pm0.07}$ & 71.43$_{\pm0.06}$ & 17.08$_{\pm0.48}$ & 46.10 \\
      
      \rowcolor{HighlightBlue}
      \cellcolor{white} & \textbf{HybKD (Ours)} 
      & 35.11$_{\pm0.26}$ 
      & \textbf{61.68}$_{\pm0.24}$ 
      & \textbf{72.65}$_{\pm0.33}$ 
      & \textbf{18.65}$_{\pm0.27}$ 
      & \textbf{47.02} \\
     \bottomrule
     \end{tabular}%
} 
\end{table*}



\begin{table*}[t]
\centering
\caption{DeepSeek-Coder distillation results ($\uparrow$). We report pass@1 scores. \textbf{Bold} with \colorbox[RGB]{255, 238, 225}{darker background} indicates the best performance; \colorbox[RGB]{255, 250, 245}{light background} indicates the second-best performance.}
\label{tab:app:deepseek_coder_compact_v2}
\small
\setlength{\aboverulesep}{0pt}
\setlength{\belowrulesep}{0pt}
\renewcommand{\arraystretch}{1.2}
\setlength{\tabcolsep}{12pt}
\begin{tabular}{l ccccc}
\toprule
\multirow{2}{*}{\textbf{Method}} & \multicolumn{5}{c}{\textbf{DeepSeek-Coder-6.7B} $\rightarrow$ \textbf{1.3B}} \\
\cmidrule(lr){2-6}
& \textbf{HumanEval} & \textbf{HumanEval+} & \textbf{MBPP} & \textbf{MBPP+} & \textbf{Avg.} \\
\midrule

\rowcolor[gray]{0.96} \multicolumn{6}{l}{\textbf{Reference \& Baselines}} \\
\textit{Teacher} & 76.83 & 71.34 & 75.32 & 66.75 & 72.56 \\
\textit{Student (No Distill)} & 33.54 & 27.44 & 58.44 & 49.61 & 42.26 \\
Hard KD & 35.37 & 32.93 & 60.32 & 50.00 & 44.66 \\
\addlinespace[0.6em]

\rowcolor[gray]{0.96} \multicolumn{6}{l}{\textbf{KL Divergence Family}} \\
Forward KL & 38.41 & 33.54 & \cellcolor[RGB]{255, 250, 245}63.49 & \cellcolor[RGB]{255, 250, 245}51.59 & 46.76 \\
\hspace{1em}\textit{+ Ours} & 41.46 & \cellcolor[RGB]{255, 250, 245}36.59 & 63.12 & 50.39 & 47.89 \\
Reverse KL & 39.02 & 34.15 & 62.17 & 50.00 & 46.34 \\
\hspace{1em}\textit{+ Ours} & 37.80 & 34.15 & 63.38 & \cellcolor[RGB]{255, 238, 225}\textbf{51.95} & 46.82 \\
\addlinespace[0.6em]

\rowcolor[gray]{0.96} \multicolumn{6}{l}{\textbf{Distance \& Skewed Family}} \\
Total Variation & 39.63 & 35.37 & 61.56 & 49.61 & 46.54 \\
\hspace{1em}\textit{+ Ours} & 40.02 & 36.16 & 62.10 & 49.97 & 47.06 \\
Skew FKL & 42.07 & 35.98 & 60.85 & 50.26 & 47.29 \\
\hspace{1em}\textit{+ Ours} & \cellcolor[RGB]{255, 250, 245}42.41 & 35.75 & 63.12 & 50.39 & \cellcolor[RGB]{255, 250, 245}47.92 \\
Skew RKL & 42.07 & 35.98 & 60.78 & 50.39 & 47.31 \\
\hspace{1em}\textit{+ Ours} & 41.24 & 35.98 & 62.34 & 51.17 & 47.68 \\
\addlinespace[0.6em]

\rowcolor[gray]{0.96} \multicolumn{6}{l}{\textbf{Unified Divergence Family}} \\
$\alpha$-$\beta$ divergence & 41.46 & \cellcolor[RGB]{255, 250, 245}36.59 & 60.05 & 50.26 & 47.09 \\
\hspace{1em}\textbf{\textit{+ Ours}} & \cellcolor[RGB]{255, 238, 225}\textbf{43.90} & \cellcolor[RGB]{255, 238, 225}\textbf{38.41} & \cellcolor[RGB]{255, 238, 225}\textbf{64.68} & \cellcolor[RGB]{255, 238, 225}\textbf{51.95} & \cellcolor[RGB]{255, 238, 225}\textbf{49.74} \\
\bottomrule
\end{tabular}
\end{table*}

\begin{table*}[t]
\centering
\caption{Distillation results on Gemma-3 models ($\uparrow$). \textbf{Bold} with \colorbox[RGB]{255, 238, 225}{darker background} indicates the best performance; \colorbox[RGB]{255, 250, 245}{light background} indicates the second-best performance.}
\label{app:tab:distillation_gemma_final_complete}
\small
\setlength{\aboverulesep}{0pt}
\setlength{\belowrulesep}{0pt}
\renewcommand{\arraystretch}{1.2}
\setlength{\tabcolsep}{12pt}
\begin{tabular}{l ccccc}
\toprule
\multirow{2}{*}{\textbf{Method}} & \multicolumn{5}{c}{\textbf{Gemma3-4B} $\rightarrow$ \textbf{Gemma3-1B}} \\
\cmidrule(lr){2-6}
& \textbf{BBH} & \textbf{MMLU} & \textbf{ARC-C} & \textbf{ThmQA} & \textbf{Avg.} \\
\midrule

\rowcolor[gray]{0.96} \multicolumn{6}{l}{\textbf{Reference \& Baselines}} \\
\textit{Teacher} & 52.57 & 66.17 & 81.23 & 26.12 & 56.52 \\
\textit{Student (No Distill)} & 6.47 & 15.94 & 15.70 & 0.95 & 9.76 \\
Hard KD & 5.28 & 18.11 & 18.09 & 3.15 & 11.16 \\
\addlinespace[0.6em]

\rowcolor[gray]{0.96} \multicolumn{6}{l}{\textbf{KL Divergence Family}} \\
Forward KL & 5.91 & 24.05 & 25.00 & 4.25 & 14.80 \\
\hspace{1em}\textit{+ Ours} & 6.23 & 24.69 & 25.69 & 4.22 & 15.21 \\
\addlinespace[0.3em]
Reverse KL & \cellcolor[RGB]{255, 238, 225}\textbf{9.52} & 24.77 & \cellcolor[RGB]{255, 238, 225}\textbf{27.58} & 1.26 & 15.78 \\
\hspace{1em}\textit{+ Ours} & 7.14 & 25.27 & 26.03 & 4.57 & 15.75 \\
\addlinespace[0.6em]

\rowcolor[gray]{0.96} \multicolumn{6}{l}{\textbf{Distance \& Skewed Family}} \\
Total Variation & 6.29 & 25.07 & 25.83 & 1.29 & 14.62 \\
\hspace{1em}\textit{+ Ours} & 5.02 & 24.13 & 25.39 & 4.00 & 14.64 \\
\addlinespace[0.3em]
Skew FKL & 7.01 & 25.08 & 26.14 & \cellcolor[RGB]{255, 250, 245}5.01 & 15.81 \\
\hspace{1em}\textit{+ Ours} & \cellcolor[RGB]{255, 250, 245}7.78 & \cellcolor[RGB]{255, 238, 225}\textbf{25.94} & 26.07 & \cellcolor[RGB]{255, 238, 225}\textbf{5.24} & \cellcolor[RGB]{255, 238, 225}\textbf{16.26} \\
\addlinespace[0.3em]
Skew RKL & 5.83 & 25.17 & 25.93 & 3.38 & 15.08 \\
\hspace{1em}\textit{+ Ours} & 5.72 & \cellcolor[RGB]{255, 250, 245}25.73 & \cellcolor[RGB]{255, 250, 245}26.92 & 3.41 & 15.45 \\
\addlinespace[0.6em]

\rowcolor[gray]{0.96} \multicolumn{6}{l}{\textbf{Unified Divergence Family}} \\
$\alpha$-$\beta$ divergence & 7.26 & 25.49 & 26.07 & 4.34 & 15.79 \\
\hspace{1em}\textbf{\textit{+ Ours}} & 7.56 & 25.69 & 25.90 & 4.54 & \cellcolor[RGB]{255, 250, 245}15.92 \\

\bottomrule
\end{tabular}
\end{table*}

\begin{table*}[h]
\centering
\caption{Performance comparison of Qwen2.5 on reasoning benchmarks. Avg. is the mean across all benchmarks. We evaluate our hybrid KD against recent soft KD methods.
}
\label{app:tab:qwen25_final_dashed_v2}

\resizebox{\textwidth}{!}{%
    \setlength{\tabcolsep}{3pt} 
    \renewcommand{\arraystretch}{1.35} 
    
    \begin{tabular}{l c ccccc c ccccc} 
    \toprule
    
    \multirow{2}{*}{\textbf{Method}} & 
    & 
    \multicolumn{5}{c}{\textbf{Qwen2.5-7B} $\to$ \textbf{Qwen2.5-0.5B}} & 
    & 
    \multicolumn{5}{c}{\textbf{Qwen2.5-7B} $\to$ \textbf{Qwen2.5-3B}} \\
    
    \cmidrule(r){3-7} \cmidrule(l){9-13}
    
    & \dashedline & 
      \textbf{BBH} & \textbf{MMLU} & \textbf{ARC-C} & \textbf{ThmQA} & \textbf{Avg.} &
      \dashedline & 
      \textbf{BBH} & \textbf{MMLU} & \textbf{ARC-C} & \textbf{ThmQA} & \textbf{Avg.} \\
    \midrule

    \rowcolor{HeaderGray}
    \textit{Teacher} & 
    \dashedline & 
    64.66\sd{0.69} & 78.22\sd{0.22} & 89.90\sd{0.24} & 32.47\sd{0.32} & 66.31 & 
    \dashedline & 
    64.66\sd{0.69} & 78.22\sd{0.22} & 89.90\sd{0.24} & 32.47\sd{0.32} & 66.31 \\
    
    \textit{Student (No Distill)} & 
    \dashedline & 
    3.81\sd{0.07} & 44.67\sd{0.12} & 43.72\sd{0.12} & 7.78\sd{0.39} & 24.99 & 
    \dashedline &
    22.34\sd{0.06} & 64.61\sd{0.06} & 78.40\sd{0.03} & 12.22\sd{0.25} & 44.39 \\

    \midrule
    
    
    Hard KD \cite{kim-rush-2016-sequence} & 
    \dashedline & 
    10.27\sd{0.10} & 36.83\sd{0.15} & 40.58\sd{0.70} & 11.35\sd{0.43} & 24.76 & 
    \dashedline &
    41.52\sd{0.33} & 65.76\sd{0.18} & 78.75\sd{0.71} & 23.75\sd{0.40} & 52.45 \\
    
    Forward KL \cite{Hinton2015DistillingTK} & 
    \dashedline & 
    24.42\sd{0.06} & 43.19\sd{0.20} & 46.84\sd{0.19} & 10.33\sd{0.26} & 31.20 & 
    \dashedline &
    41.65\sd{0.16} & 64.45\sd{0.04} & 78.33\sd{0.22} & 23.02\sd{0.33} & 51.87 \\
    
    Reverse KL \cite{gu2024minillm} & 
    \dashedline & 
    24.91\sd{0.01} & 44.72\sd{0.01} & 47.44\sd{0.00} & 11.00\sd{0.00} & 32.02 & 
    \dashedline &
    44.07\sd{0.10} & 65.67\sd{0.19} & 77.68\sd{0.12} & 24.20\sd{0.49} & 52.90 \\

    Total Variation \cite{wen-etal-2023-f} & 
    \dashedline & 
    26.74\sd{0.38} & 44.35\sd{0.12} & 46.76\sd{0.51} & 10.20\sd{0.56} & 32.01 & 
    \dashedline &
    40.50\sd{0.16} & 64.52\sd{0.03} & 78.11\sd{0.18} & 22.83\sd{0.56} & 51.49 \\ 

    JS divergence \cite{agarwal2024onpolicy} & 
    \dashedline & 
    24.55\sd{0.13} & 43.31\sd{0.10} & 44.78\sd{0.05} & 11.82\sd{0.37} & 31.12 & 
    \dashedline &
    45.50\sd{0.08} & 64.68\sd{0.17} & 78.85\sd{0.14} & 22.27\sd{0.41} & 52.83 \\

    Adaptive KL \cite{wu-etal-2025-rethinking} & 
    \dashedline & 
    26.04\sd{0.23} & 41.85\sd{0.11} & 45.89\sd{0.32} & 11.33\sd{0.61} & 31.28 & 
    \dashedline &
    44.71\sd{0.13} & 64.69\sd{0.07} & 79.23\sd{0.22} & 22.25\sd{0.33} & 52.72 \\
    
    Skew FKL \cite{kodistillm,ko2025distillm} & 
    \dashedline & 
    25.87\sd{0.24} & 44.12\sd{0.24} & 47.12\sd{0.65} & 10.65\sd{0.22} & 31.94 & 
    \dashedline &
    41.39\sd{0.17} & 64.67\sd{0.15} & 77.75\sd{0.23} & 23.77\sd{0.71} & 51.89 \\
     
    Skew RKL \cite{kodistillm,ko2025distillm} & 
    \dashedline & 
    28.16\sd{0.16} & 45.03\sd{0.05} & 47.51\sd{0.20} & 11.37\sd{0.66} & 33.02 & 
    \dashedline &
    41.22\sd{0.08} & 63.95\sd{0.08} & 76.91\sd{0.18} & 23.67\sd{0.20} & 51.44 \\
     
    $\alpha$-$\beta$ divergence \cite{wang2025abkd} & 
    \dashedline & 
    26.18\sd{0.15} & 42.59\sd{0.18} & 46.70\sd{0.37} & 11.07\sd{0.43} & 31.63 & 
    \dashedline &
    45.12\sd{0.23} & 64.95\sd{0.17} & 79.81\sd{0.09} & 22.94\sd{0.54} & 53.21 \\
    \midrule
    \rowcolor[RGB]{240, 235, 255}
    \textbf{HybKD (Ours)} & 
    \dashedline & 
    \textbf{26.58}\sd{0.16} & \textbf{49.08}\sd{0.22} & \textbf{51.69}\sd{0.62} & \textbf{10.50}\sd{0.54} & \textbf{34.46} & 
    \dashedline & 
    \textbf{46.53}\sd{0.05} & \textbf{69.05}\sd{0.07} & \textbf{81.23}\sd{0.00} & \textbf{23.82}\sd{0.58} & \textbf{55.16} \\
    
    \bottomrule
    \end{tabular}%
}
\end{table*}

\begin{table*}[h]
\centering
\caption{Accuracy ($\uparrow$) on Llama and Gemma models.}
\label{app:tab:distillation_final_v2_full}

\resizebox{\textwidth}{!}{%
    \setlength{\tabcolsep}{3pt} 
    \renewcommand{\arraystretch}{1.35} 
    
    \begin{tabular}{l c ccccc c ccccc} 
    \toprule
    
    \multirow{2}{*}{\textbf{Method}} & 
    & 
    \multicolumn{5}{c}{\textbf{Llama3.1-8B} $\to$ \textbf{Llama3.2-1B}} & 
    & 
    \multicolumn{5}{c}{\textbf{Gemma3-4B} $\to$ \textbf{Gemma3-1B}} \\
    
    \cmidrule(r){3-7} \cmidrule(l){9-13}
    
    & \dashedline & 
      \textbf{BBH} & \textbf{MMLU} & \textbf{ARC-C} & \textbf{ThmQA} & \textbf{Avg.} &
      \dashedline & 
      \textbf{BBH} & \textbf{MMLU} & \textbf{ARC-C} & \textbf{ThmQA} & \textbf{Avg.} \\
    \midrule

    \rowcolor{HeaderGray}
    \textit{Teacher} & 
    \dashedline & 
    57.72\sd{0.07} & 70.90\sd{0.03} & 83.58\sd{0.10} & 18.10\sd{0.40} & 57.58 & 
    \dashedline & 
    52.57\sd{0.00} & 66.17\sd{0.00} & 81.23\sd{0.00} & 26.12\sd{0.00} & 56.52 \\
    
    \textit{Student (No Distill)} & 
    \dashedline & 
    14.01\sd{4.41} & 19.78\sd{9.89} & 21.57\sd{9.94} & 2.22\sd{0.44} & 14.40 & 
    \dashedline &
    6.47\sd{3.41} & 15.94\sd{8.91} & 15.70\sd{8.49} & 0.95\sd{0.20} & 9.76 \\

    \midrule
    
    
    Hard KD \cite{kim-rush-2016-sequence} & 
    \dashedline & 
    15.29\sd{2.75} & 22.54\sd{1.38} & 23.98\sd{1.96} & 3.88\sd{0.73} & 16.42 & 
    \dashedline &
    5.28\sd{1.89} & 18.11\sd{4.83} & 18.09\sd{5.25} & 3.15\sd{0.15} & 11.16 \\
    
    Forward KL \cite{Hinton2015DistillingTK} & 
    \dashedline & 
    22.07\sd{2.11} & 33.13\sd{1.69} & 33.41\sd{1.78} & 4.37\sd{0.43} & 23.25 & 
    \dashedline &
    5.91\sd{1.43} & 24.05\sd{2.01} & 25.00\sd{2.83} & 4.25\sd{0.33} & 14.80 \\
    
    Reverse KL \cite{gu2024minillm} & 
    \dashedline & 
    23.69\sd{0.77} & 31.63\sd{1.88} & 32.08\sd{1.20} & 3.60\sd{0.09} & 22.75 & 
    \dashedline &
    9.52\sd{1.03} & 24.77\sd{0.68} & \textbf{27.58}\sd{0.60} & 1.26\sd{0.32} & 15.78 \\

    Total Variation \cite{wen-etal-2023-f} & 
    \dashedline & 
    23.55\sd{1.51} & 27.41\sd{4.96} & 28.99\sd{4.47} & 2.68\sd{0.56} & 20.66 & 
    \dashedline &
    6.29\sd{0.86} & 25.07\sd{0.31} & 25.83\sd{0.43} & 1.29\sd{0.41} & 14.62 \\ 

    JS divergence \cite{agarwal2024onpolicy} & 
    \dashedline & 
    25.16\sd{2.03} & 34.08\sd{2.15} & 34.42\sd{1.81} & \textbf{5.70}\sd{0.13} & 24.84 & 
    \dashedline & 
    7.14\sd{0.74} & 25.27\sd{0.56} & 26.03\sd{0.33} & 4.57\sd{0.97} & 15.75 \\

    Adaptive KL \cite{wu-etal-2025-rethinking} & 
    \dashedline & 
    25.18\sd{1.86} & 33.86\sd{2.34} & 33.74\sd{1.77} & 4.85\sd{0.44} & 24.41 & 
    \dashedline & 
    5.47\sd{0.63} & 24.76\sd{1.56} & 24.07\sd{1.12} & 4.23\sd{0.41} & 14.63 \\
    
    Skew FKL \cite{kodistillm,ko2025distillm} & 
    \dashedline & 
    25.05\sd{1.40} & 31.39\sd{2.84} & 32.34\sd{2.99} & 4.73\sd{0.86} & 23.37 & 
    \dashedline &
    7.01\sd{0.82} & 25.08\sd{0.88} & \underline{26.14}\sd{0.70} & \underline{5.01}\sd{0.29} & \underline{15.81} \\
     
    Skew RKL \cite{kodistillm,ko2025distillm} & 
    \dashedline & 
    \underline{25.24}\sd{0.79} & 33.00\sd{1.40} & 32.24\sd{1.36} & 4.27\sd{0.44} & 23.69 & 
    \dashedline &
    5.83\sd{0.25} & 25.17\sd{0.22} & 25.93\sd{0.29} & 3.38\sd{0.26} & 15.08 \\
    $\alpha$-$\beta$ divergence \cite{wang2025abkd} & 
    \dashedline & 
    25.07\sd{1.36} & \underline{34.78}\sd{2.04} & \underline{34.90}\sd{1.45} & \underline{5.27}\sd{0.44} & \underline{25.01} & 
    \dashedline & 
    \underline{7.26}\sd{0.78} & \underline{25.49}\sd{1.00} & 26.07\sd{0.66} & 4.34\sd{0.89} & 15.79 \\
    
    \midrule
    \rowcolor[RGB]{235, 250, 240}
    \textbf{HybKD (Ours)} & 
    \dashedline & 
    \textbf{27.44}\sd{2.85} & \textbf{35.64}\sd{3.22} & \textbf{37.01}\sd{3.98} & 5.00\sd{1.25} & \textbf{26.27} & 
    \dashedline & 
    \textbf{7.78}\sd{1.19} & \textbf{25.94}\sd{0.55} & 26.07\sd{0.85} & \textbf{5.24}\sd{0.79} & \textbf{16.26} \\
    
    \bottomrule
    \end{tabular}%
}
\end{table*}

\end{document}